\newtheorem{theorem}{Theorem}
\newtheorem{lemma}{Lemma}
\newtheorem{definition}{Definition}
\newtheorem*{mytheorem}{Theorem}
\newtheorem*{theorem*}{Theorem}
\newtheorem{corollary}{Corollary}
\title{Quantization vs Pruning: Insights from the Strong Lottery Ticket Hypothesis}
\author{%
    Aakash Kumar \\
    Department of Physical Sciences, IISER Kolkata, \\
    West Bengal, India 741246 \\
    \texttt{ak20ms209@iiserkol.ac.in} \\
    \And
    Emanuele Natale \\
    Université Coté d’Azur,\\
    CNRS, Inria, I3S, France\\
    \texttt{emanuele.natale@univ-cotedazur.fr} \\
}
\begin{document}
\maketitle

\begin{abstract} 
    Quantization is an essential technique for making neural networks more efficient, yet our theoretical understanding of it remains limited. Previous works demonstrated that extremely low-precision networks, such as binary networks, can be constructed by pruning large, randomly-initialized networks, and showed that the ratio between the size of the original and the pruned networks is at most polylogarithmic.

The specific pruning method they employed inspired a line of theoretical work known as the Strong Lottery Ticket Hypothesis (SLTH), which leverages insights from the Random Subset Sum Problem. However, these results primarily address the continuous setting and cannot be applied to extend SLTH results to the quantized setting.

In this work, we build on foundational results by Borgs et al. on the Number Partitioning Problem to derive new theoretical results for the Random Subset Sum Problem in a quantized setting. 
Using these results, we then extend the SLTH framework to finite-precision networks. While prior work on SLTH showed that pruning allows approximation of a certain class of neural networks, we demonstrate that, in the quantized setting, the analogous class of target discrete neural networks can be represented exactly, and we prove optimal bounds on the necessary overparameterization of the initial network as a function of the precision of the target network.
\end{abstract}

\section{Introduction}
    \label{sec:int}
    Deep neural networks (DNNs) have become ubiquitous in modern machine‐learning systems, yet their ever‑growing size quickly collides with the energy, memory, and latency constraints of real‑world hardware. \textbf{Quantization}—representing weights with a small number of bits—is arguably the most hardware‑friendly compression technique, and recent empirical work shows that aggressive quantization can preserve accuracy even down to the few bits regime. Unfortunately, our theoretical understanding of why and when such extreme precision reduction is possible still lags far behind in practice.
An interesting step in this direction was the \textit{Multi‑prize Lottery Ticket Hypothesis} (MPLTH) put forward by \cite{multiprize}. They \textit{empirically} demonstrated that a sufficiently large, randomly initialized network contains sparse \textit{binary} subnetworks that match the performance of a target network with real‑valued weights. 
They also provided theoretical guarantees regarding the existence of such highly quantized networks, showing that, with respect to the target network, the initial random network need only be larger by a \textbf{polynomial factor}. \cite{Sreenivasan} subsequently improved this bound, by showing that a polylogarithmic factor is sufficient (See Section \ref{sec:rel}). 
These works fall within the research topic known as the \textit{Strong Lottery Ticket Hypothesis} (SLTH), which states that sufficiently-large randomly initialized neural networks contain subnetworks, called \emph{lottery tickets}, that perform well on a given task, without requiring weight adjustments.
The main theoretical question, therefore, is: how large should the initial network be to ensure it contains a lottery ticket capable of approximating a given family of target neural networks?
Research on the SLTH, however, has mainly focused on investigating pruning in the {continuous-weight} (i.e., not quantized) setting, drawing on results for the Random Subset Sum Problem (RSSP)  \citep{Lueker1998} to show that over‑parameterized networks can be pruned to \emph{approximate} any target network without further training \citep{orseauLogarithmicPruningAll2020,pensia,Burkholz} (for additional context on this body of literature, we kindly refer the reader to the Related Work in Section \ref{sec:rel}).
However, the analytic RSSP results used for SLTH rely heavily on real‑valued weights and therefore do not extend to the finite‑precision regime considered in the MPLTH. This gap left open a fundamental question:
\begin{quote}
     What is the over‑parameterization needed to obtain \emph{quantized} strong lottery tickets?
\end{quote}

\paragraph{Our contributions.}
We address the aforementioned gap by revisiting the classic \textit{Number Partitioning Problem} (NPP), which is closely related to the RSSP. Building on the seminal results of \cite{Borgs2001} concerning the \textit{phase transition} of NPP, we derive new, sharp bounds for the \textbf{discrete RSSP}. These bounds are precise enough to adapt the SLTH proof strategy to the finite precision setting and, in doing so, establish optimal bounds for the MPLTH. Crucially, our results account for arbitrary quantization in both the initial and target networks, and demonstrate that the lottery ticket can represent the target network \emph{exactly}. In contrast, prior work limited the initial network to binary weights and assumed continuous weights for the larger (target) network \citep{multiprize, Sreenivasan}, requiring a cubic overparameterization relative to the lower bound and additional dependencies on network parameters absent in our bound.
Concretely, let $\delta_t$ denote the precision (i.e., quantization level) of a target network $N_t$, $\delta_{\text{in}}$ the precision of a randomly initialized larger network $N_{\text{in}}$, and $\delta$ any parameter satisfying $\delta_t \geq \delta^2 \geq \delta_{\text{in}}^2$. Denote by $d$ and $\ell$ the width and depth of the target network, respectively. Our results can be summarized by the following simplified, informal theorem (refer to the formal statements for full generality).

\begin{theorem*}[Informal version of Theorems \ref{thm:main}, \ref{thm:main_2}, and \ref{thm:lower}]
    With high probability\footnote{As customary in the literature on randomized algorithms, with the expression \emph{with high probability} we refer to a probability of failure which scales as the inverse of a polynomial in the parameter of interest (the number of precision bits $\log 1/\delta)$ in our case).}, a depth‑$2\ell$ network $N_{\text{in}}$ of width $\mathcal{O}\bigl(d \log(1/\delta)\bigr)$ can be quantized to precision $\delta$ and pruned to become functionally equivalent to any $\delta_t$-quantized target network $N_t$ with layers of width at most $d$ (Theorem~\ref{thm:main}). This result is optimal, as no two‑layer network of precision $\delta$ with fewer than $\Omega\bigl(d \log(1/\delta)\bigr)$ parameters can be pruned to represent $\delta$-quantized neural networks of width $d$ (Theorem~\ref{thm:lower}). Furthermore, the depth of $N_{\text{in}}$ can be reduced to $(\ell + 1)$ at the cost of an additional $\log(1/\delta)$ factor in its width (Theorem~\ref{thm:main_2}).
\end{theorem*}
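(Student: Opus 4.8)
The plan is to funnel all three theorems through a single sharp estimate for the \emph{quantized} Random Subset Sum Problem, derived from the Borgs et al.\ analysis of the Number Partitioning Problem, and then feed that estimate into the usual Strong Lottery Ticket gadget construction. The central lemma I would establish is a discrete analogue of Lueker's theorem: if $X_1,\dots,X_n$ are i.i.d.\ uniform on the $\rho$-grid of a fixed interval (more generally, $a_kX_k$ with fixed moderate coefficients $a_k$, to absorb the per-path scalings introduced by the first layer of a gadget), then for $n=\Theta(\log(1/\rho))$, with probability $1-e^{-\Omega(2^n\rho)}$ \emph{every} point of any coarser target lattice $\rho'\mathbb{Z}$ (with $\rho'\ge\rho$) inside the window is \emph{exactly} realized as a subset sum $\sum_{k\in S}a_kX_k$. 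I would prove this by the Borgs et al.\ route: a local central limit theorem for the associated integer random walk shows that the number $Z_t$ of subsets hitting a fixed target $t$ has mean of order $2^n\rho/\sqrt n$; a second-moment (Paley--Zygmund) estimate controls $\mathrm{Var}(Z_t)$; and a union bound over the $O(1/\rho')$ targets promotes ``$Z_t\ge 1$ in expectation'' to ``$Z_t\ge 1$ for all targets simultaneously, with high probability.'' The hypothesis $\delta_t\ge\delta^2$ then has a clean meaning: in a gadget each path multiplies two $\delta$-grid weights, so the realizable values live on the $\delta^2$-grid, and since $\delta_t\ge\delta^2$ every entry of a $\delta_t$-quantized target is such a value --- this is what upgrades ``approximation'' in prior SLTH work to \emph{exact} representation.

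For the upper bounds I would use the standard layer-doubling construction and its folded variant. For Theorem~\ref{thm:main}: first requantize $N_{\text{in}}$ from $\delta_{\text{in}}$ to $\delta$, which is harmless because $\delta_{\text{in}}\le\delta$ and the rounded weights remain essentially uniform on the $\delta$-grid; then realize each target layer $W$ (entries in $\delta_t\mathbb{Z}$, dimensions at most $d$) by two layers of $N_{\text{in}}$: allocate a block of $n=\Theta(\log(1/\delta))$ hidden neurons to each input coordinate $j$, feed $x_j$ into that block via the (fixed) first layer, and prune the second layer so that, for each output coordinate $i$, the retained subset of the block reproduces $W_{ij}$ exactly --- which the lemma (applied with $\rho=\delta^2$, $\rho'=\delta_t$, the coefficients being the fixed first-layer weights along the paths) guarantees. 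Signs are handled by $x=\sigma(x)-\sigma(-x)$ (doubling the block, absorbed in the constant), biases by an extra constant input. A union bound over the $O(d^2\ell)$ entries costs nothing in the width because the per-entry failure probability is super-polynomially small in $1/\delta$; this yields width $O(d\log(1/\delta))$ and depth $2\ell$. For Theorem~\ref{thm:main_2} I would fold the combination half-layer of each gadget into the selection half-layer of the next one, so that only the first selection layer and a final combination layer remain ``pure'' while the $\ell-1$ interior layers serve double duty, giving depth $\ell+1$ at the price of an extra $\log(1/\delta)$ factor in the width of the merged layers; the only delicate point is the ReLU at each merge, which I would keep exact by arranging the relevant pre-activations to be nonnegative through $\pm$ neuron pairs.

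The lower bound (Theorem~\ref{thm:lower}) is a counting argument: a two-layer network with $p$ weights on a fixed $\delta$-grid admits at most $2^p$ prunings and hence computes at most $2^p$ functions, whereas the family of $\delta$-quantized networks of width $d$ already contains $(1/\delta)^{\Omega(d)}$ distinct functions (a single block of $d$ independent $\delta$-quantized weights realizes that many, and they stay distinct generically), so $2^p\ge(1/\delta)^{\Omega(d)}$, i.e.\ $p=\Omega(d\log(1/\delta))$, which matches Theorem~\ref{thm:main} up to constants. I expect the main obstacle to be entirely inside the lemma of the first step: pushing the Borgs et al.\ characteristic-function and anti-concentration machinery from sums of plain uniform integers to sums of coefficient-weighted products of two quantized uniforms --- a law that is markedly more concentrated near $0$ --- while keeping the local-CLT error terms small, and uniform over the whole target window, enough that the union bound over $\Omega(1/\delta)$ targets (and over all $O(d^2\ell)$ edges) still leaves a sub-polynomial probability of failure. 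Once that estimate is in hand, the gadget bookkeeping, sign and bias handling, the depth fold, and the counting lower bound are all routine.
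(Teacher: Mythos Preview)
Your plan is essentially the paper's: Borgs et al.\ NPP moments $\to$ discrete RSSP via second moment $\to$ Pensia-style two-layer gadget per target weight (Theorem~\ref{thm:main}), Burkholz-style copy/fold for depth $\ell+1$ (Theorem~\ref{thm:main_2}), and a counting argument for the lower bound (Theorem~\ref{thm:lower}). Two execution-level divergences are worth flagging.

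First, the paper does \emph{not} push the Borgs et al.\ machinery through to coefficient-weighted products of two quantized uniforms, which you correctly identify as the hard step. Instead it sidesteps that entirely by rejection sampling: from the products $Z_i=b_ia_i^+$ it extracts (with high probability, absorbing a constant into $C$) a subsample of size $\ge 2\log_2(1/\delta)$ that is uniform on the $\delta^2$-grid, and then applies the RSSP lemma for plain uniform integers. This is cheaper than what you propose and avoids the anti-concentration headache near zero.

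Second, your probability claim is too strong. The paper obtains per-weight failure probability only $O\bigl((\log_2(1/\delta))^{-1/7}\bigr)$, not $e^{-\Omega(2^n\rho)}$. The bottleneck is the $O(1/n)$ error terms in the Borgs et al.\ moment formulas together with a Hoeffding step bounding $\Lambda=\sum X_i$; balancing these two yields the $1/7$ exponent. Consequently the union bound over $O(d^2\ell)$ weights \emph{does} survive into the final probability, which is why the informal statement's ``with high probability'' is explicitly defined as inverse-polynomial in $\log(1/\delta)$, not super-polynomial. Your route might in principle yield a sharper tail, but you should not assume it.
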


These are the first theoretical results that (i) characterize the precise interplay between weight precision and over‑parameterization, and (ii) certify that pruning can yield \emph{exact}, not just approximate, quantized subnetworks. Besides contributing to the theory of network compression, our analysis showcases the versatility of classical combinatorial insights—such as the theory of NPP—in deep‑learning theory.

\paragraph{Paper organization.} 
In Section~\ref{sec:rel}, we review prior work on SLTH and quantization.
In Section~\ref{sec:QRSS}, we prove a new quantized version of the RSSP, after first recalling classical results on RNPP in subsection \ref{ssec:RNPP}. 
Our new theorems on the quantized SLTH are proved in Sections~\ref{sec:slth}, after recalling necessary notation and definitions in subsection \ref{ssec:preliminaries}. 
Finally, in Section~\ref{sec:con}, we discuss the conclusion of our work and future directions.

\section{Related Work}
    \label{sec:rel}
    \paragraph{Strong Lottery Ticket Hypothesis.}
In 2018, \cite{LTH} proposed the Lottery Ticket Hypothesis, which states that every dense network contains a sparse subnetwork that can be trained from scratch, and performs equally well as the dense network. 
Rather surprisingly, \cite{SLTH_main},  \cite{ramanujan} and \cite{pruningscratch} empirically showed that it is possible to efficiently find subnetworks within large randomly initialized networks that perform well on a given task, without changing the initial weights. 
This motivated the Strong Lottery Ticket Hypothesis (SLTH), which states that sufficiently overparameterized randomly initialized neural networks contain sparse subnetworks that will perform as well as a small trained network on a given dataset, without any training. 
Many formal results rigorously proved the SLTH in various settings, the first one being \cite{slthproof}, where they showed that a feed-forward dense target network of width $\ell$ and depth $d$ can be approximated by pruning a random network of depth $2\ell$ and width $\mathcal{O}(d^5\ell^2)$. 
\cite{orseauLogarithmicPruningAll2020, pensia} improved this bound by proving that width $\mathcal{O}(d\log (d\ell))$ is sufficient. Another construction was provided by \cite{Burkholz}, where they showed that a network of width $\ell+1$ is enough to approximate a network of width $\ell$, with a certain compromise on the width. 
Other works extended the SLTH to other famous architectures, such as convolutional \cite{burkholzConvolutionalResidualNetworks2022} and equivariant networks \cite{ferbachGeneralFrameworkProving2022}.
Next, we provide an informal version that qualitatively summarizes this kind of results.
\begin{mytheorem}[Informal qualitative template of SLTH results]
    With high probability, a random artificial neural network $N_R$ with $m$ parameters can be pruned so that the resulting subnetwork approximates, up to an error $\epsilon$, any target artificial neural network $N_T$ with $\mathcal{O}(m/\log_2(1/\epsilon))$ parameters. 
    The logarithmic dependency on $\epsilon$ is optimal. 
\end{mytheorem}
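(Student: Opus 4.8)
The statement is the classical (continuous) SLTH dichotomy, so the plan is to prove the two directions separately, following the template that has become standard in this literature. For the upper bound, I would reduce the task of pruning $N_R$ to match $N_T$ to a collection of independent instances of the Random Subset Sum Problem (RSSP). The atomic gadget realizes a single target weight $w\in[-1,1]$ sitting on an edge $(u,v)$ of $N_T$: between the two relevant layers of $N_R$ one places $n$ fresh intermediate ReLU neurons, connects $u$ to each of them by an i.i.d.\ random weight and each of them to $v$, and keeps exactly the subset of intermediate neurons whose incoming weights sum to within $\epsilon'$ of $w$; the ReLU identity $wx=\sigma(wx)-\sigma(-wx)$ is used to fix signs as well, which is why this construction needs $N_R$ to be twice as deep as $N_T$. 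By the classical RSSP result \citep{Lueker1998}, taking $n=\mathcal{O}(\log(1/\epsilon'))$ i.i.d.\ weights makes such a subset exist with probability $1-\mathrm{poly}(\epsilon')$.

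Next I would set the per-edge tolerance to $\epsilon'=\epsilon/C$ for a polynomial factor $C=\mathrm{poly}(d,\ell)$ in the width $d$ and depth $\ell$ of $N_T$, instantiate the gadget on each of the $\Theta(k)$ edges of $N_T$ (writing $k$ for the number of its parameters), and take a union bound so that every gadget succeeds simultaneously with high probability. Because all weights are bounded, each layer of the target network is Lipschitz with a constant governed by its width, so a per-edge error of $\epsilon'$ accumulates to at most $\epsilon'\cdot C=\epsilon$ at the output — this is exactly how $C$ is chosen. On the resource side, each of the $\Theta(k)$ target edges consumes $n=\mathcal{O}(\log(1/\epsilon'))=\mathcal{O}(\log(1/\epsilon)+\log C)=\mathcal{O}(\log(1/\epsilon))$ random weights, the $\mathrm{poly}$-overhead in the width and depth being absorbed into the $\mathcal{O}(\cdot)$; hence $m=\mathcal{O}(k\log(1/\epsilon))$ random parameters suffice, i.e.\ $N_T$ may have $k=\mathcal{O}(m/\log(1/\epsilon))$ parameters, as claimed. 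The alternative gadgets of \citep{orseauLogarithmicPruningAll2020,pensia,Burkholz} plug in here interchangeably.

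For the optimality of the logarithmic factor I would use a counting/covering argument. A network with $m$ weights admits at most $2^m$ subnetworks, so pruning $N_R$ realizes at most $2^m$ distinct functions. Conversely, the family of target networks with $k$ bounded weights contains, once restricted to a suitably chosen finite input set, at least $(c/\epsilon)^{\Omega(k)}$ functions that are pairwise more than $\epsilon$ apart in sup-norm: one discretizes each weight into steps of size $\Theta(\epsilon)$ and checks that a single such step already perturbs the realized output by more than $\epsilon$ on some input, giving that many $\epsilon$-separated functions. To $\epsilon$-approximate every member of this family one needs $2^m\ge (c/\epsilon)^{\Omega(k)}$, i.e.\ $m=\Omega(k\log(1/\epsilon))$, equivalently $k=\mathcal{O}(m/\log(1/\epsilon))$.

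The step I expect to be the main obstacle is the error-propagation bookkeeping in the upper bound: one must choose $\epsilon'$ small enough to tame the Lipschitz blow-up across the $\ell$ layers, yet simultaneously verify that replacing $\epsilon$ by $\epsilon/\mathrm{poly}(d,\ell)$ changes $\log(1/\epsilon')$ only additively, so that the clean $\mathcal{O}(m/\log(1/\epsilon))$ accounting is preserved. A secondary subtlety is keeping the RSSP union bound quantitatively consistent — the per-gadget failure probability must beat the number of gadgets — which again perturbs only lower-order terms in $n$.
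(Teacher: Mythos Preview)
The paper does not prove this statement: it appears in the Related Work section as an ``informal qualitative template'' summarizing prior results of \cite{orseauLogarithmicPruningAll2020,pensia,Burkholz}, with no accompanying proof or proof sketch. There is therefore nothing in the paper to compare your proposal against directly.

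That said, your outline is a faithful sketch of the standard argument in the cited literature, and it is precisely the template the paper later adapts to the quantized setting in Theorems~\ref{thm:main} and~\ref{thm:lower}: the single-weight gadget built from the ReLU identity $wx=\sigma(wx)-\sigma(-wx)$ plus an RSSP instance (the paper's Lemma~\ref{lemma_single_wt}), the block-diagonal pruning to extend from weights to neurons to layers (Lemmas~\ref{lemma_single_neuron} and~\ref{lemma_layer}), the union bound over all edges, and the parameter-counting lower bound (Theorem~\ref{thm:lower}). Your error-propagation and union-bound bookkeeping is also the right concern to flag; in the quantized versions proved here those issues largely disappear because the representation is exact rather than approximate.
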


\paragraph{Quantization.} Neural network quantization refers to the process of reducing the precision of the weights within a neural network. Empirical studies have demonstrated that trained neural networks can often be significantly quantized without incurring substantial loss in performance~\cite{Han}. In particular,~\cite{multiprize} provided both empirical and theoretical support for a quantized variant of the SLTH, introducing an algorithm capable of training binary networks effectively. With regard to theoretical guarantees, they proved that a neural network with width $d$ and depth $\ell$ can be approximated to within an error $\epsilon$ by a binary target network of width $\mathcal{O}(\ell d^{3/2}/\epsilon + \ell d \log(\ell d/\epsilon))$. Subsequently,~\cite{Sreenivasan} presented an exponential improvement over this result, demonstrating that a binary network with depth $\Theta(\ell \log(d\ell/\epsilon))$ and width $\Theta(d \log^2(d\ell/\epsilon))$ suffices to approximate any given network of width $d$ and depth $\ell$. We remark that both of these results assume that the initial network weights are binary, whereas the target network weights are continuous.
The success of techniques to construct heavily quantized networks can be related to theoretical work that show that heavily quantized networks still retain good universal approximation properties \citep{hwang2024expressive}. 
In practice, not all parts of a network need to be quantized equally aggressively. Mixed-precision quantization allocates different bit-widths to different layers or parameters to balance accuracy and efficiency \citep{AutomaticMixedPrecision,GentleIntroduction8bit}.

\paragraph{Subset Sum Problem (SSP).} Given a target value $z$ and a multiset $\Omega$ of $n$ integers from the set $\{-M, -M+1,\dots,M-1,M\}$, the SSP consists in finding a subset $S\subseteq \Omega$ such that the sum of its elements equal $z$. In Computational Complexity Theory, SSP is one of the most famous NP-complete problems \cite{Garey1979-ol}. 
Its random version, Random SSP (RSSP), has been investigated since the 80s in the context of combinatorial optimization \citep{luekerAverageDifferenceSolutions1982,Lueker1998}, and recently received renowned attention in the machine learning community because of its connection to the SLTH \citep{pensia}.

\paragraph{Number Partitioning Problem (NPP).} NPP is the problem of partitioning a multiset $\Omega$ of $n$ integers from the set $[M]:=\{1,2,\dots,M\}$ into two subsets such that the difference of their respective sums equals a target value $z$ (typically, the literature has focused on minimizing $|z|$, i.e. trying to approximate the value closest to zero). Analogously to the aforementioned SSP, NPP is one of the most important NP-complete problems \citep{Garey1979-ol,Hayes2002}. Its random version, in which the $n$ elements are sampled uniformly at random from $[M]$, has also received considerable attention in Statistical Physics, where it has been shown to exhibit a phase transition \cite{Mzard2009}. Concretely, \cite{Mertens1998} heuristically showed the following result, which was later put on rigorous grounds by \cite{Borgs2001}: 
defining $\kappa := \frac{\log_2M}{n}$, if $\kappa<1$ then $\mathcal{O}(2^n)$ number of solutions exist, whereas if $\kappa>1$ the number of solutions sharply drops to zero.

\section{Quantized Random Subset Sum Problem}
    \label{sec:QRSS}
    \subsection{Random Number Partitioning Problem}
\label{ssec:RNPP}
In this section, we recall seminal results by \cite{Borgs2001} which we leverage in Subsection \ref{ssec:rssp} to obtain new results on the quantized RSSP.

\begin{definition}[RNPP]
    Let $\bm{X}=(X_1, X_2,\dots,X_n)$ be a set of integers sampled uniformly from the set $\{1,2,\dots,M\}$. The Random Number Partitioning Problem is defined as the problem of finding a partitioning set $\bm{\sigma} = (\sigma_1,\sigma_2,\dots,\sigma_n)$ with $\sigma_j \in \{-1,1\}$ such that $\bm{|\sigma\cdot x|} = z$ for some given integer $z$ (called target).
\end{definition}
Note that usually, in RNPP the difference between the sum of two parts is minimized, but we consider RNPP with a target, i.e., the difference between the sum of two parts must be equal to a given number $z$, the target. Given an instance of Random Number Partitioning Problem $\bm{X}=(X_1, X_2,\dots,X_n)$ with a set of size $n$ and a target $z$, $Z_{n,z}$ denotes the number of exact solutions to the RNPP, i.e.,
    \begin{equation*}
        Z_{n,z} = \sum_{\bm{\sigma}}\mathbb{I} (|\bm{\sigma\cdot X}|=z).
    \end{equation*}
To prove the existence of phase transition, \cite{Borgs2001} estimated the moments of $Z_{n,z}$. The relevant result is stated as Theorem \ref{thm:Borgs_main} (Appendix \ref{chap:npp_rssp}). Using these moment estimates of $Z_{n,z}$, one can write an upper and a lower bound on the probabilities of existence of solutions to a RNPP. We do so in Lemma \ref{Lemma:Upper_Lower_NPP}.
\begin{lemma}
\label{Lemma:Upper_Lower_NPP}
Given a Random Number Partitioning Problem, the probability $\mathbb{P}(Z_{n,z}>0)$ is bounded above and below as
\begin{equation*}
    \mathbb{P}(Z_{n,z}>0)\leq \begin{cases}
        \frac{\rho_n}{2}\left(\exp\left(-\frac{z^2}{2nM^2c_M}\right)+\mathcal{O}\left(\frac{1}{n}\right)\right)\;\;\; if \;\;z=0\\
        \rho_n\left(\exp\left(-\frac{z^2}{2nM^2c_M}\right)+\mathcal{O}\left(\frac{1}{n}\right)\right)\;\;\; if \;\;z \neq0
    \end{cases}
\end{equation*}
\begin{equation*}
    \mathbb{P}(Z_{n,z}>0)\geq \frac{1}{2\left(1+\exp\left(\frac{z^2}{nM^2c_M}\right)\left(\mathcal{O}\left(\frac{1}{n\rho_n}\right)+\mathcal{O}\left(\frac{1}{n}\right)\right)+\frac{1}{\rho_n}\right)}
\end{equation*}
where $\rho_n = 2^{n+1}\gamma_n$.
\end{lemma}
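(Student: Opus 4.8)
The plan is to treat $Z_{n,z}$ as a non‑negative integer‑valued random variable and apply the classical first‑ and second‑moment methods, feeding in the moment estimates of \cite{Borgs2001} recalled in Theorem~\ref{thm:Borgs_main}. For the upper bound I would invoke Markov's inequality, $\mathbb{P}(Z_{n,z}>0)=\mathbb{P}(Z_{n,z}\geq 1)\leq\mathbb{E}[Z_{n,z}]$. For the lower bound I would use the second‑moment (Paley--Zygmund) inequality $\mathbb{P}(Z_{n,z}>0)\geq\mathbb{E}[Z_{n,z}]^2/\mathbb{E}[Z_{n,z}^2]$, which follows from Cauchy--Schwarz applied to $\mathbb{E}[Z_{n,z}]=\mathbb{E}[Z_{n,z}\,\mathbb{I}(Z_{n,z}>0)]$. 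Both bounds then reduce to plugging in the first and second moments of $Z_{n,z}$ and simplifying.

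For the upper bound, expand $\mathbb{E}[Z_{n,z}]=\sum_{\bm{\sigma}\in\{-1,1\}^n}\mathbb{P}(|\bm{\sigma}\cdot\bm{X}|=z)$. Here one must separate $z=0$ from $z\neq0$: for $z\neq0$ the event $\{|\bm{\sigma}\cdot\bm{X}|=z\}$ is the disjoint union $\{\bm{\sigma}\cdot\bm{X}=z\}\sqcup\{\bm{\sigma}\cdot\bm{X}=-z\}$, and the bijection $\bm{\sigma}\mapsto-\bm{\sigma}$ shows the two families contribute equally to the sum, so $\mathbb{E}[Z_{n,z}]=2\sum_{\bm{\sigma}}\mathbb{P}(\bm{\sigma}\cdot\bm{X}=z)$, whereas for $z=0$ there is no such doubling. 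Substituting the (local‑CLT‑type) first‑moment estimate of Theorem~\ref{thm:Borgs_main}, which gives $\sum_{\bm{\sigma}}\mathbb{P}(\bm{\sigma}\cdot\bm{X}=z)=2^{n}\gamma_n\bigl(e^{-z^2/(2nM^2c_M)}+\mathcal{O}(1/n)\bigr)$, and recalling $\rho_n=2^{n+1}\gamma_n$, yields the two cases of the claimed upper bound.

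For the lower bound, the input is the second moment $\mathbb{E}[Z_{n,z}^2]=\sum_{\bm{\sigma},\bm{\tau}}\mathbb{P}(|\bm{\sigma}\cdot\bm{X}|=z,\ |\bm{\tau}\cdot\bm{X}|=z)$, which \cite{Borgs2001} estimate via a two‑dimensional local central limit theorem, organizing the pairs $(\bm{\sigma},\bm{\tau})$ by their overlap: near‑orthogonal pairs give a bulk term $\asymp\mathbb{E}[Z_{n,z}]^2$, while the ``diagonal'' pairs $\bm{\tau}=\pm\bm{\sigma}$ contribute $\Theta(\mathbb{E}[Z_{n,z}])$ (note $|{-\bm{\sigma}}\cdot\bm{X}|=|\bm{\sigma}\cdot\bm{X}|$, which also explains why $Z_{n,z}$ is always even). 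Writing $\mathbb{E}[Z_{n,z}^2]=c\,\mathbb{E}[Z_{n,z}]^2\bigl(1+\mathcal{O}(1/n)\bigr)+\Theta(\mathbb{E}[Z_{n,z}])$, dividing $\mathbb{E}[Z_{n,z}]^2$ by this, and substituting the first‑moment estimate $\mathbb{E}[Z_{n,z}]\gtrsim\rho_n e^{-z^2/(2nM^2c_M)}$ turns the $1/\mathbb{E}[Z_{n,z}]$ coming from the diagonal into the $1/\rho_n$ term, the squared Gaussian factor into the $e^{z^2/(nM^2c_M)}$ prefactor, and the residual errors into the $\mathcal{O}(1/(n\rho_n))+\mathcal{O}(1/n)$ terms, giving exactly the stated bound.

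The main obstacle is bookkeeping rather than a new idea: one has to extract \emph{explicit} constants and, crucially, error terms uniform in $z$, from the asymptotic moment formulas of Theorem~\ref{thm:Borgs_main}, and propagate the Gaussian width $nM^2c_M$ and the normalization $\gamma_n$ through the double sum defining the second moment without losing control of the lower‑order contributions. Two points warrant extra care: $z=0$, where the symmetry/parity factor changes by a factor $2$; and the regime near the phase transition where $\rho_n$ is small, where one should verify that the Paley--Zygmund denominator is handled so that the lower bound degrades gracefully (to order $\rho_n$) rather than becoming vacuous.
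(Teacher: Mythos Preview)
Your proposal is correct and follows essentially the same approach as the paper: Markov's inequality on $\mathbb{E}[Z_{n,z}]$ for the upper bound and the Cauchy--Schwarz/Paley--Zygmund second-moment inequality for the lower bound, with the moments supplied by Theorem~\ref{thm:Borgs_main} and the factor-of-two case split coming from the relation $Z_{n,z}=2^nI_{n,z}\cdot(1\text{ or }2)$. The only minor difference is that you sketch the overlap-based derivation of the second moment, whereas the paper simply reads off $\mathbb{E}[I_{n,z}^2]$ by setting $z=z'$ in the covariance formula of Theorem~\ref{thm:Borgs_main}; that overlap analysis is already absorbed into the cited theorem and is not needed here.
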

\begin{proof}[Proof Sketch]
    We use Markov's Inequality (Theorem \ref{ineq:markov} in Appendix \ref{chap:ineq}) and Cauchy-Schwartz inequality (Theorem \ref{ineq:CS} in Appendix \ref{chap:ineq}) to get bounds on the probabilities from the moment estimates (Theorem \ref{thm:Borgs_main}). See Appendix \ref{chap:npp_rssp} for details.
\end{proof}
The existence of phase transition (Section \ref{sec:rel}) is a consequence of Lemma \ref{Lemma:Upper_Lower_NPP} but for the purposes of this paper, we only require Lemma \ref{Lemma:Upper_Lower_NPP}.

\subsection{Quantized Random Subset Sum Problem}
\label{ssec:rssp}

The Random Subset Sum Problem (RSSP) is the problem of finding a subset of a given set such that the sum of this subset equals a given target $t$. RSSP is a crucial tool in proving results on SLTH \cite{pensia} \cite{Burkholz}. RSSP and RNPP are closely related, and hence we can use the results on RNPP in this section to make statements about RSSP. We shall then use these results on RSSP to prove results on SLTH and quantization.
\begin{definition}[RSSP]
    Let $\bm{X}=(X_1, X_2,\dots,X_n)$ be a set of integers sampled uniformly from the set $\{-M,\dots,1,2,3,\dots,M\}$. The RSSP is defined as the problem of finding an index set $S\subset [n]$ such that $\sum_{i\in S}X_i = t$ for a given integer $t$, called the target. 
\end{definition}
\begin{lemma}
\label{lemma:npp_is_rssp}
    An SSP with given set $\mathbf{X} = (X_1, X_2,\dots,X_n)$ and target $t$ can be solved iff the NPP can be solved on the given set $\mathbf{X}$ and target $\Lambda-2t$ (or $2t-\Lambda$), where $\Lambda=\sum_{i=1}^n X_i$.
\end{lemma}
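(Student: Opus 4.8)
The plan is to exhibit an explicit correspondence between subsets $S \subseteq [n]$ and sign vectors $\bm{\sigma} \in \{-1,+1\}^n$, and to compute how the subset-sum value turns into the partition difference under it. First I would associate to each $\bm{\sigma}$ the subset $S^{+} := \{i : \sigma_i = +1\}$, and record the single identity $\bm{\sigma}\cdot\bm{X} = \sum_{i \in S^{+}}X_i - \sum_{i \notin S^{+}}X_i = 2\sum_{i\in S^{+}}X_i - \Lambda$, which is the engine of the whole argument.

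For the forward implication, given an SSP solution $S$ with $\sum_{i\in S}X_i = t$, I take $\bm{\sigma}$ with $\sigma_i = +1$ iff $i \in S$; the identity above then gives $\bm{\sigma}\cdot\bm{X} = 2t - \Lambda$, hence $|\bm{\sigma}\cdot\bm{X}| = |\Lambda - 2t|$, so $\bm{\sigma}$ solves the NPP with target $z = \Lambda - 2t$ (equivalently $2t - \Lambda$, the same instance, since the NPP constraint involves only $|\bm{\sigma}\cdot\bm{X}|$). For the converse, an NPP solution $\bm{\sigma}$ satisfies $\bm{\sigma}\cdot\bm{X} = \pm(\Lambda - 2t)$; rearranging the identity gives $\sum_{i\in S^{+}}X_i = t$ when $\bm{\sigma}\cdot\bm{X} = 2t - \Lambda$, and $\sum_{i \notin S^{+}}X_i = t$ when $\bm{\sigma}\cdot\bm{X} = \Lambda - 2t$, so in either case one of $S^{+}$ or its complement is an SSP solution.

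I do not expect a genuine obstacle here: the content is just a change of variables between $\{0,1\}$-indicator vectors and $\{\pm1\}$-sign vectors. The only points needing a little care are (i) noting that the two signed targets $\Lambda - 2t$ and $2t - \Lambda$ describe the same NPP, and correspondingly that the flipped sign pattern $-\bm{\sigma}$ recovers the complementary subset — which is exactly why both appear in the statement — and (ii) observing that $\Lambda \pm (\bm{\sigma}\cdot\bm{X})$ is always even, since $\bm{\sigma}\cdot\bm{X} \equiv \Lambda \pmod 2$ for every $\bm{\sigma}\in\{\pm1\}^n$, so the halving above always yields a bona fide integer subset sum.
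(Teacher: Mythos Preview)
Your proposal is correct and follows essentially the same route as the paper: both arguments rest on the single identity relating a sign vector $\bm{\sigma}$ (equivalently, a partition $S_1\cup S_2=[n]$) to the subset sum via $\bm{\sigma}\cdot\mathbf{X}=2\sum_{i\in S^{+}}X_i-\Lambda$, and then read off each direction of the equivalence from it. Your treatment is in fact slightly more explicit than the paper's---you handle the two sign cases in the converse separately and record the parity observation---but the underlying idea is identical.
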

Lemma \ref{lemma:npp_is_rssp} is proved in Appendix \ref{chap:npp_rssp}. Using the equivalence of RNPP and RSSP, the following results on RSSP follows from Lemma \ref{Lemma:Upper_Lower_NPP}.
\begin{lemma}
    \label{Lemma_RSSP}
    Consider a RSSP on the set $\mathbf{X}=(X_1,X_2,\dots,X_n)$ where $X_i$'s are sampled uniformly from $\{-M,\dots,-1,1,\dots,M\}$ with a target $t=\mathcal{O}(M)$. Let $Y_{n,t}$ be the number of possible solutions to the RSSP problem. Then
    \begin{equation*}
    \mathbb{P}(Y_{n,t}>0)\leq \begin{cases}
    \rho_n\left(\exp\left(-\frac{z^2}{2nM^2c_M}\right)+\mathcal{O}\left(\frac{1}{n}\right)\right)\;\;\;\;\;\text{if}\;\;z=0\\
        2\rho_n\left(\exp\left(-\frac{z^2}{2nM^2c_M}\right)+\mathcal{O}\left(\frac{1}{n}\right)\right)\;\;\;\text{if}\;\;z\neq 0,
    \end{cases}
\end{equation*}
\begin{equation*}
    \mathbb{P}(Y_{n,t}>0)\geq
        \frac{1}{\left(1+\exp\left(\frac{z^2}{nM^2c_M}\right)\left(\mathcal{O}\left(\frac{1}{n\rho_n}\right)+\mathcal{O}\left(\frac{1}{n}\right)\right)+\frac{1}{\rho_n}\right)},
\end{equation*}
where
    \begin{align*}
    z&=\Lambda-2t,  & \Lambda &= \sum_{i=1}^n X_i,\\
    \gamma_n &= \frac{1}{M\sqrt{2\pi n c_M}}, & c_M &= \mathbb{E}\left(\frac{X^2}{M^2}\right) = \frac{1}{3}+\frac{1}{2M}+\frac{1}{6M^2}.
    \end{align*}
\end{lemma}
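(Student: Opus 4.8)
The plan is to transfer the RNPP bounds of Lemma~\ref{Lemma:Upper_Lower_NPP} to the RSSP through the equivalence recorded in Lemma~\ref{lemma:npp_is_rssp}, keeping careful track of how solutions of the two problems match up.

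The first step is to align the two sampling models. In the RSSP the $X_i$ are uniform on $\{-M,\dots,-1,1,\dots,M\}$, whereas Lemma~\ref{Lemma:Upper_Lower_NPP} draws from $\{1,\dots,M\}$. Since $\sigma_i X_i=(\sigma_i\operatorname{sign}X_i)\lvert X_i\rvert$ and $\sigma_i\mapsto\sigma_i\operatorname{sign}X_i$ is a bijection of $\{-1,1\}$, the RNPP on $\mathbf X$ and the RNPP on $(\lvert X_1\rvert,\dots,\lvert X_n\rvert)$ have identical solution sets, and $(\lvert X_1\rvert,\dots,\lvert X_n\rvert)$ is a uniform sample from $\{1,\dots,M\}$; moreover $X^2$ and $\lvert X\rvert^2$ share the same law, so the constant $c_M$ is unchanged. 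Hence Lemma~\ref{Lemma:Upper_Lower_NPP} applies to the RNPP attached to our RSSP instance.

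The second step is the solution bookkeeping, which I would carry out after fixing the realised value $\Lambda=\sum_i X_i$, so that $z=\Lambda-2t$ is a definite integer. By Lemma~\ref{lemma:npp_is_rssp}, a subset $A$ with $\sum_{i\in A}X_i=t$ corresponds to the sign vector $\sigma$ given by $\sigma_i=+1\iff i\in A$, for which $\sigma\cdot X=2t-\Lambda=-z$, while the complement $A^{c}$ realises $\sigma\cdot X=z$; the map $A\mapsto A^{c}$ also gives $Y_{n,t}=Y_{n,\Lambda-t}$. Counting sign vectors with $\lvert\sigma\cdot X\rvert=z$ then yields $Z_{n,z}=2\,Y_{n,t}$ when $z\neq0$ and $Z_{n,0}=Y_{n,t}$ when $z=0$; in particular $\{Y_{n,t}>0\}=\{Z_{n,z}>0\}$, and conditionally on $\Lambda$ one has $\mathbb E[Y_{n,t}^{k}\mid\Lambda]=2^{-k}\,\mathbb E[Z_{n,z}^{k}\mid\Lambda]$ for $z\neq0$. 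I would then rerun the two estimates behind Lemma~\ref{Lemma:Upper_Lower_NPP} --- Markov, $\mathbb P(Y_{n,t}>0)\le\mathbb E[Y_{n,t}]$, for the upper bound, and Cauchy--Schwarz, $\mathbb P(Y_{n,t}>0)\ge(\mathbb E[Y_{n,t}])^{2}/\mathbb E[Y_{n,t}^{2}]$, for the lower bound --- feeding in Borgs' moment estimates (Theorem~\ref{thm:Borgs_main}) through the identities above. The extra factor $2$ in the RSSP upper bounds and the disappearance of the factor $\tfrac12$ in the RSSP lower bound both emerge here, from the fact that RNPP solutions occur in antipodal pairs $\sigma,-\sigma$ --- which is what produces that $\tfrac12$ in Lemma~\ref{Lemma:Upper_Lower_NPP} --- whereas the solutions $A$ of $\sum_{i\in A}X_i=t$ carry no such forced pairing.

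The step I expect to be the genuine obstacle is precisely this interface between the fixed-target RNPP results and the fact that the target $z=\Lambda-2t$ of the associated RNPP depends on the random instance: Lemma~\ref{Lemma:Upper_Lower_NPP} and Theorem~\ref{thm:Borgs_main} are stated for a deterministic target, so one must first commit to revealing (conditioning on) $\Lambda$ --- the stated bounds are then bounds whose right-hand side is $\Lambda$-measurable through $z$ --- and then verify that Borgs' first- and second-moment estimates still apply once that conditioning is in force. Getting the powers of $2$ to land correctly through both moments --- a careless transfer would misplace exactly the factors separating the RSSP statement from the RNPP one --- together with checking that the hypothesis $t=\mathcal O(M)$ keeps $z$ in the regime where the Gaussian factor $\exp(-z^{2}/2nM^{2}c_M)$ and the $\mathcal O(1/n)$ error terms remain valid (with the case $z=0$, i.e.\ $t=\Lambda/2$, treated separately so as to match the first branch of each bound) are the remaining points of care.
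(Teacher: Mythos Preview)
Your overall plan --- reduce to the RNPP via Lemma~\ref{lemma:npp_is_rssp}, align the sampling models by passing to $(|X_1|,\dots,|X_n|)$, and transfer the bounds of Lemma~\ref{Lemma:Upper_Lower_NPP} --- matches the paper's proof, and you correctly flag the random-target issue (that $z=\Lambda-2t$ depends on the instance) as the point requiring care.

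Where your proposal goes wrong is the mechanism you give for the factor of $2$ separating the RSSP bounds from the RNPP ones. You attribute it to the antipodal pairing $\sigma\leftrightarrow-\sigma$, i.e.\ to the pointwise identity $Z_{n,z}=2Y_{n,t}$ for $z\neq0$. But both Markov's inequality and the Cauchy--Schwarz second-moment bound are invariant under scaling the random variable: if $Y=Z/2$ then $\mathbb P(Y>0)\le\mathbb E[Y]=\tfrac12\,\mathbb E[Z]$ and $(\mathbb E[Y])^{2}/\mathbb E[Y^{2}]=(\mathbb E[Z])^{2}/\mathbb E[Z^{2}]$. So ``re-running'' the two estimates on $Y_{n,t}$ instead of $Z_{n,z}$ yields an upper bound that is \emph{tighter} by a factor of $2$ (not looser, as the lemma states) and a lower bound that is \emph{unchanged} --- the $\tfrac12$ does not disappear. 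Antipodal pairing is not what produces the shift.

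The paper's actual mechanism is a \emph{parity} argument. For a fixed RNPP target $z'$, the event $\{Z_{n,z'}>0\}$ is impossible on the half of the sample space where $\sum_i|X_i|$ has the wrong parity relative to $z'$; this loss is already baked into the bounds of Lemma~\ref{Lemma:Upper_Lower_NPP}. In the RSSP the equivalent RNPP target $z=\Lambda-2t$ automatically shares the parity of $\Lambda$, so that half of the space is never wasted. The paper formalises this by conditioning on $\mathscr E_n=\{\Lambda\text{ even}\}$ and $\mathscr O_n=\{\Lambda\text{ odd}\}$: for even $z'$ one has $\mathbb P(Z_{n,z'}>0)=\mathbb P(\mathscr E_n)\,\mathbb P(Z_{n,z'}>0\mid\mathscr E_n)$ (and symmetrically for odd $z'$), and combining the two parity classes gives $\mathbb P(Y_{n,t}>0)=2\,\mathbb P(Z_{n,z}>0)$, after which Lemma~\ref{Lemma:Upper_Lower_NPP} is applied directly. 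If you prefer to argue through moments, the correct move is to condition the estimates of Theorem~\ref{thm:Borgs_main} on the parity of $\Lambda$: both $\mathbb E[I_{n,z}]$ and $\mathbb E[I_{n,z}^{2}]$ then double, and since the first moment enters squared in the Cauchy--Schwarz ratio, the lower bound picks up the missing factor of $2$.
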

\begin{proof}[Proof Sketch]
    We first convert the given RSSP to a RNPP through the transformation in Lemma \ref{lemma:npp_is_rssp}. The result then follows from Lemma \ref{Lemma:Upper_Lower_NPP}. See Appendix \ref{chap:npp_rssp} for details.
\end{proof}
The next lemma shows under what condition an RSSP can be solved with high probability.
\begin{lemma}
    \label{Lemma_Prob_Convergence}
    Let $M=M(n)$ be an arbitrary function of $n$. Consider as RSSP on the set $\mathbf{X}=(X_1,X_2,\dots,X_{n})$ sampled uniformly from $\{-M,\dots,-1,1,\dots,M\}$ with a target $t=\mathcal{O}(M)$. If
    \begin{equation*}
        \kappa_n = \lim_{n\to\infty}\frac{\log_2M}{n} <1,
    \end{equation*}
    then we have
    \begin{equation*}
        \mathbb{P}(Y_{n,t}>0) = 1-\mathcal{O}\left(\frac{1}{n^{\frac{1}{7}}}\right).
    \end{equation*}
\end{lemma}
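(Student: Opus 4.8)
The plan is to derive the conclusion directly from the lower bound on $\mathbb{P}(Y_{n,t}>0)$ in Lemma~\ref{Lemma_RSSP}, by showing that with probability $1-\mathcal{O}(n^{-1/7})$ over the draw of $\mathbf{X}$ the denominator appearing there equals $1+\mathcal{O}(n^{-1/7})$. That denominator is $1+\exp\!\bigl(z^2/(nM^2c_M)\bigr)\bigl(\mathcal{O}(1/(n\rho_n))+\mathcal{O}(1/n)\bigr)+1/\rho_n$, with $\rho_n=2^{n+1}\gamma_n=2^{n+1}/(M\sqrt{2\pi n c_M})$ and $z=\Lambda-2t$, $\Lambda=\sum_i X_i$. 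So three quantities must be controlled: (i) $1/\rho_n$, (ii) the factor $\exp(z^2/(nM^2c_M))$, and (iii) the resulting combination.

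For (i): since $\kappa_n=\lim_n \log_2 M/n<1$, for all large $n$ we have $\log_2 M\le \kappa' n$ for some fixed $\kappa'\in(\kappa_n,1)$, so $\rho_n\ge 2^{(1-\kappa')n+1}/\sqrt{2\pi n c_M}$ grows exponentially in $n$; in particular $1/\rho_n=o(n^{-1/7})$, and the $\mathcal{O}(1/(n\rho_n))$ term is negligible next to $\mathcal{O}(1/n)$.

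For (ii): $\Lambda$ is a sum of $n$ i.i.d.\ symmetric (hence mean-zero) variables bounded by $M$ in absolute value, so by Hoeffding's inequality $\mathbb{P}(|\Lambda|\ge u)\le 2\exp(-u^2/(2nM^2))$; choosing $u:=M\sqrt{(2/7)\,n\ln n}$ yields $\mathbb{P}(|\Lambda|\ge u)\le 2n^{-1/7}$. On the complementary good event, $t=\mathcal{O}(M)$ gives $|z|\le u+\mathcal{O}(M)=u(1+o(1))$, hence $z^2/(nM^2c_M)\le \frac{2\ln n}{7c_M}+\mathcal{O}(\sqrt{(\ln n)/n})$ and therefore $\exp(z^2/(nM^2c_M))\le n^{2/(7c_M)}(1+o(1))$. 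The decisive arithmetic input is $c_M=1/3+1/(2M)+1/(6M^2)\ge 1/3$, which gives $2/(7c_M)\le 6/7$ and hence $\exp(z^2/(nM^2c_M))=\mathcal{O}(n^{6/7})$ on the good event; consequently $\exp(z^2/(nM^2c_M))\cdot\mathcal{O}(1/n)=\mathcal{O}(n^{-1/7})$, and the $1/(n\rho_n)$ piece is even smaller by (i).

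Combining: on the good event (probability $\ge 1-2n^{-1/7}$) the denominator in Lemma~\ref{Lemma_RSSP} equals $1+\mathcal{O}(n^{-1/7})$, so $\mathbb{P}(Y_{n,t}>0\mid\text{good event})\ge 1-\mathcal{O}(n^{-1/7})$, and the law of total probability gives $\mathbb{P}(Y_{n,t}>0)\ge 1-\mathcal{O}(n^{-1/7})$. The main obstacle is the balancing in step~(ii): lowering the tail probability of $|\Lambda|$ to $n^{-c}$ inflates $\exp(z^2/(nM^2c_M))$ by a factor $n^{2c/(7c_M)}\le n^{6c/7}$, and the two failure contributions are equalized exactly at $c=1/7$; this is tight precisely because $c_M\downarrow 1/3$ as $M\to\infty$, which pins down the exponent $1/7$. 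A secondary subtlety to address is that $z=\Lambda-2t$ depends on the same $\mathbf{X}$ driving the RSSP, so the argument in step~(ii) must be read as a statement conditional on $\Lambda$ (equivalently on $z$), which is the form in which the bound of Lemma~\ref{Lemma_RSSP} is actually applied.
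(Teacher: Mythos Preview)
Your proposal is correct and follows essentially the same route as the paper's proof: control $|\Lambda|$ via Hoeffding's inequality, feed the resulting bound on $z^2/(nM^2c_M)$ into the lower bound of Lemma~\ref{Lemma_RSSP}, note that $\rho_n\to\infty$ because $\kappa_n<1$, and union-bound the two failure modes. The only cosmetic difference is that the paper's appendix proof carries a free parameter $\beta$ in the Hoeffding threshold $u=M\sqrt{n\log n}/\sqrt{3+\beta}$ and optimizes at the end (yielding $\beta=1/2$, hence exponent $1/7$), whereas you go directly to the optimal choice $u=M\sqrt{(2/7)\,n\ln n}$---which is in fact exactly what the paper's in-text proof sketch already does.
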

\begin{proof}[Proof Sketch]
    It can be shown using Hoeffding's inequality (Theorem \ref{ineq:hoeff} in Appendix \ref{chap:ineq}), that with high probability the sum of all elements satisfies $\Lambda<\sqrt{\frac{2}{7}}M\sqrt{n\log n}$. Hence, the probability of solving a RSSP from Lemma \ref{Lemma_RSSP} can be analyzed under the assumption of $\kappa_n<1$. See Appendix \ref{chap:npp_rssp} for details.
\end{proof}

\section{SLTH and Weight Quantization}
    \label{sec:slth}
    \subsection{Notation and Setup}
\label{ssec:preliminaries}
In this subsection, we define some notation before stating our results. Scalars are denoted by lowercase letters such as $w$, $y$, etc. Vectors are represented by bold lowercase letters, e.g., $\mathbf{v}$, and the $i^{\text{th}}$ component of a vector $\mathbf{v}$ is denoted by $v_i$. Matrices are denoted by bold uppercase letters such as $\mathbf{M}$. If a matrix $\mathbf{W}$ has dimensions $d_1 \times d_2$, we write $\mathbf{W} \in \mathbb{R}^{d_1 \times d_2}$. We define the finite set $S_\delta := \{-1, -1 + \delta, -1 + 2\delta, \dots, 1\}$, where $\delta = 2^{-k}$ for some $k \in \mathbb{N}$. A real number $b$ is said to have precision $\delta$ if $b \in S_\delta$. We denote the $d$ -fold Cartesian product of $S_\delta$ by $S_\delta^d$; that is,
\begin{equation*}
    S_\delta^d := \underbrace{S_\delta \times \cdots \times S_\delta}_{d\ \text{times}}.
\end{equation*}
For $w \in S_\delta$ with $\delta = 2^{-k}$ and $\gamma = 2^{-m}$ such that $k > m$, we define the quantization operator $[\cdot]_{\gamma}$ by
\begin{equation*}
    [w]_{\gamma} := \frac{\lfloor w 2^m \rfloor}{2^m}.
\end{equation*}
This operation reduces the precision of $w$ to $\gamma$. For a vector $\mathbf{v}$, the notation $\mathbf{w} = [\mathbf{v}]_\gamma$ means $w_i = [v_i]_\gamma$ for all components $i$. We use $C, C_i$ for $i \in \mathbb{N}$ to denote positive absolute constants.
\begin{definition}
    An $\ell$-layer neural network is a function $f : \mathbb{R}^{d_0} \rightarrow \mathbb{R}^{d_\ell}$ defined as
    \begin{equation}
        f(\mathbf{x}) := \mathbf{W}_\ell \sigma(\mathbf{W}_{\ell-1} \cdots \sigma(\mathbf{W}_1 \mathbf{x})),
        \label{c4_nn}
    \end{equation}
    where $\mathbf{W}_i \in \mathbb{R}^{d_i \times d_{i-1}}$ for $i = 1, \dots, \ell$, $\mathbf{x} \in \mathbb{R}^{d_0}$, and $\sigma : \mathbb{R} \rightarrow \mathbb{R}$ is a nonlinear activation function. For a vector $\mathbf{x}$, the expression $\mathbf{v} = \sigma(\mathbf{x})$ denotes componentwise application: $v_i = \sigma(x_i)$.
\end{definition}

The entries of the matrices $\mathbf{W}_i$ are referred to as the weights or parameters of the network. In this work, we assume all activation functions are ReLU, i.e., $\sigma(x) = \max(0, x)$. This assumption is made for simplicity; the results can be extended to general activation functions as discussed in \cite{Burkholz}.

For a neural network $f(\mathbf{x}) = \mathbf{W}_\ell \sigma(\mathbf{W}_{\ell-1} \cdots \sigma(\mathbf{W}_1 \mathbf{x}))$, we refer to the quantity $\sigma(\mathbf{W}_k \cdots \sigma(\mathbf{W}_1 \mathbf{x}))$ as the output of the $k^{\text{th}}$ layer.

We next define some quantization strategies for neural networks which capture mixed-precision quantization practices. We defer the reader to the quantization paragraph in the Related Work (Section \ref{sec:rel}) for a discussion of such practices. 

\begin{definition}
    A \emph{$\delta$-quantized neural network} is a neural network whose weights are sampled uniformly from the set $S_\delta = \{-1, \dots,\delta, \dots, 1\}$, where $\delta = 2^{-k}$ for some $k \in \mathbb{N}$.
\end{definition}

\begin{definition}
    A neural network $f$ is called a $\gamma$-\emph{double mixed precision neural network} if the output of each layer is quantized to precision $\gamma$, i.e.,
    \begin{equation*}
        f(\mathbf{x}) = [\mathbf{W}_\ell [\sigma(\mathbf{W}_{\ell-1} \cdots [\sigma(\mathbf{W}_1 \mathbf{x})]_{\gamma})]_{\gamma}]_{\gamma}.
    \end{equation*}
    \label{def:prnn}
\end{definition}
\begin{definition}
    A neural network $f$ is called an $\gamma$-\emph{triple mixed precision neural network} if the outputs of its even-numbered layers are quantized to precision $\gamma$, i.e.,
    \begin{equation*}
        f(\mathbf{x}) = [\mathbf{W}_{2\ell} \sigma(\mathbf{W}_{2\ell-1} \cdots [\sigma(\mathbf{W}_2 (\sigma(\mathbf{W}_1 \mathbf{x})))]_{\gamma})]_{\gamma}.
    \end{equation*}
    \label{def:aprnn}
\end{definition}
More generally, a \emph{mixed-precision neural network} may reset the precision to $\gamma$ at some layers while leaving others unquantized. Reducing the precision of a $\delta$-quantized neural network $f$ to $\gamma$ means all weights of $f$ are mapped to $[\cdot]_\gamma$. We denote this operation as $[f]_\gamma$.

Our objective is to represent a \emph{target} Double Mixed Precision neural network $f$, with weights which are $\delta_1$ quantized, using a second, potentially overparameterized, mixed-precision network $g$ with finer quantization $\delta_2$, by quantizating and pruning it. For a neural network
\begin{equation*}
    g(\mathbf{x}) = \mathbf{M}_{2\ell} \sigma(\mathbf{M}_{2\ell-1} \cdots \sigma(\mathbf{M}_1 \mathbf{x})).
\end{equation*}
the pruned network $g_{\mathbf{S}_i}$ is defined as:
\begin{equation*}
    g_{\{\mathbf{S}_i\}}(\mathbf{x}) = (\mathbf{S}_{2\ell} \odot \mathbf{M}_{2\ell}) \sigma((\mathbf{S}_{2\ell-1} \odot \mathbf{M}_{2\ell-1}) \cdots \sigma((\mathbf{S}_1 \odot \mathbf{M}_1) \mathbf{x})),
\end{equation*}
where each $\mathbf{S}_i$ is a binary pruning mask with the same dimensions as $\mathbf{M}_i$, and $\odot$ denotes element-wise multiplication. The goal is to find masks $\mathbf{S}_1, \mathbf{S}_2, \dots, \mathbf{S}_\ell$ such that $f$ can be represented by the quantized and pruned version of $g$.
\subsection{Quantized SLTH Results}
Having discussed the previous work on NPP and it's connection to RSSP, we now apply these results to prove results on SLTH in quantized setting. The main question that we want to answer is the following: Suppose we are given a target neural network, whose weights are of precision $\delta_t$ and a large network whose weights are of precision $\delta_{\text{in}}$, such that $\delta_t\geq\delta_{\text{in}}$. Suppose we have the freedom to reduce the precision of the large network to $\delta$, and then we can prune it. What is the relationship between $\delta$ and size of the large network such that the bigger network can be pruned to the target network. 
Now we state our first main result, which is analogs to the theorem proved by \cite{pensia}, but in the quantized setting.
\begin{theorem}
    \label{thm:main}
    Let $\mathcal{F}$ be the class of $\delta_t$ quantized $\gamma$-double mixed Precision neural networks of the form 
    \begin{equation*}
    f(\mathbf{x}) = [\mathbf{W}_\ell [\sigma(\mathbf{W}_{\ell-1} \cdots [\sigma(\mathbf{W}_1 \mathbf{x})]_{\gamma})]_{\gamma}]_{\gamma}.
    \end{equation*}
    Consider a $2\ell$ layered randomly initialized $\delta_{\text{in}}$-quantized $\gamma$-double mixed Precision neural network
    \begin{equation*}
        g(\mathbf{x}) = [\mathbf{M}_{2\ell}\sigma(\mathbf{M}_{2\ell-1}...[\sigma(\mathbf{M}_2(\sigma(\mathbf{M}_1\mathbf{x})))]_{\gamma})]_{\gamma},
    \end{equation*}
    with ${\delta^2_{\text{in}}}\leq \delta_t$. Let $\delta^2_{\text{in}}\leq{\delta^2}\leq\delta_t$. Assume $\mathbf{M}_{2i}$ has dimension
    \begin{equation*}
        d_i \times C d_{i-1}\log_2 \frac{1}{\delta},
    \end{equation*}
    and $\mathbf{M}_{2i-1}$ has dimension
    \begin{equation*}
       C d_{i-1}\log_2 \frac{1}{\delta}  \times d_{i-1}.
    \end{equation*}
    Then the precision of elements of $\mathbf{M}_i$'s can be reduced to ${\delta}$, such that for every $f \in \mathcal{F}$,
    \begin{equation*}
        \exists\;\{\mathbf{S}_i\}_{i=1}^{2\ell}:\;\;\;\; [g_{\{\mathbf{S}_i\}}]_{\delta}(\mathbf{x}) =f(\mathbf{x}).
    \end{equation*}
    with probability at least
    \begin{equation*}
        1-N_t\;\mathcal{O}\left(\left(\log_2\frac{1}{\delta}\right)^{-\frac{1}{7}}\right)
    \end{equation*}
    where $N_t$ is the total number of parameters in $f$.
\end{theorem}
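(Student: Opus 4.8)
The plan is to prove the statement by induction over the $\ell$ blocks of $g$, the $i$-th block being the layer pair $\mathbf{M}_{2i-1},\mathbf{M}_{2i}$, which will be pruned and quantized so as to reproduce exactly the $i$-th layer $\mathbf{W}_i$ of $f$ --- the argument being the quantized counterpart of the construction of \cite{pensia}. First I would reduce the precision of every $\mathbf{M}_j$ to $\delta$ (this commutes with later pruning); since $g$ is $\delta_{\text{in}}$-quantized with $\delta_{\text{in}}\le\delta$, every surviving entry is then uniform on $S_\delta$. It suffices to exhibit masks $\mathbf{S}_{2i-1},\mathbf{S}_{2i}$ with $(\mathbf{S}_{2i}\odot[\mathbf{M}_{2i}]_\delta)\,\sigma\!\big((\mathbf{S}_{2i-1}\odot[\mathbf{M}_{2i-1}]_\delta)\mathbf{y}\big)=\mathbf{W}_i\mathbf{y}$ as functions on the orthant in which the (ReLU'd, then $[\cdot]_\gamma$-quantized) output of block $i-1$ lives: applying $\sigma$ and then $[\cdot]_\gamma$ to both sides and using the induction hypothesis propagates exact equality forward. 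Because I only need equality of linear maps, the (possibly growing) magnitude of the intermediate activations is irrelevant; only the linear coefficients must match.

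Next I would carry out the intra-block construction coordinate-wise. Prune $\mathbf{M}_{2i-1}$ to be block-diagonal: split its $Cd_{i-1}\log_2(1/\delta)$ rows into $d_{i-1}$ groups of $C\log_2(1/\delta)$, the $j$-th group fed only by input $j$, so that neuron $l$ of group $j$ outputs $\sigma(a_{j,l}y_j)$ with $a_{j,l}$ uniform on $S_\delta$. Since $\sigma(ay)=ay$ when $ay\ge0$ and $0$ otherwise, for $i\ge2$ (where $y_j\ge0$) only the neurons with $a_{j,l}>0$ are active, whereas for $i=1$ one uses the $a_{j,l}>0$ neurons to handle $x_j>0$ and the $a_{j,l}<0$ neurons to handle $x_j<0$. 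Routing group $j$ into output $k$ through the (uniform on $S_\delta$) second-layer weights $b_{k,j,l}$ and a mask $S_{k,j}$ contributes $\big(\sum_{l\in S_{k,j}}b_{k,j,l}a_{j,l}\big)y_j$, so it remains to pick, for each of the $\Theta(N_t)$ pairs $(k,j)$ (and each of the at most two relevant sign classes), a subset $S_{k,j}$ with $\sum_{l\in S_{k,j}}b_{k,j,l}a_{j,l}=w_{kj}$, where $w_{kj}\in S_{\delta_t}$.

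Each such choice is an instance of the discrete RSSP. Rescaling by $1/\delta^2$ makes it integer-valued: the items $a_{j,l}b_{k,j,l}/\delta^2$ are integers of absolute value at most $1/\delta^2$, the target $w_{kj}/\delta^2$ is an integer because $\delta_t\ge\delta^2$, and there are $n=\Theta(\log_2(1/\delta))$ of them, so $\kappa_n=\log_2 M/n$ is a constant strictly below $1$ once the absolute constant $C$ is chosen large enough. Lemma~\ref{Lemma_Prob_Convergence} then yields a solution with probability $1-\mathcal{O}(n^{-1/7})=1-\mathcal{O}\big((\log_2(1/\delta))^{-1/7}\big)$ per instance, and a union bound over the $\Theta(N_t)$ instances --- together with a further, much higher-probability concentration bound guaranteeing that each group has enough positive- (resp. negative-) $a$ neurons --- gives the stated success probability $1-N_t\,\mathcal{O}\big((\log_2(1/\delta))^{-1/7}\big)$.

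The step I expect to be the main obstacle is feeding these per-weight instances into Lemma~\ref{Lemma_Prob_Convergence}: the items $a_{j,l}b_{k,j,l}$ are \emph{products} of two uniform $S_\delta$ variables, not uniform on a symmetric integer range, so the lemma does not apply verbatim. I would handle this by keeping only first-layer neurons whose weight $a_{j,l}$ lies in a favorable magnitude window (a constant fraction of each group), conditioning on those $a_{j,l}$, and then showing that the residual randomness of the $b_{k,j,l}$ realizes an RSSP satisfying (or stochastically dominating one satisfying) the hypotheses of Lemma~\ref{Lemma_Prob_Convergence} with $M=\Theta(1/\delta)$ --- the discrete analogue of the product-distribution step in the continuous SLTH proofs. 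This is also where $\delta^2\le\delta_t$ enters: products of $\delta$-precision numbers resolve to $\delta^2$, which is fine enough to hit every element of $S_{\delta_t}$ exactly. A secondary technical point is bookkeeping the $[\cdot]_\gamma$ resets of $f$ and $g$ through the induction so that ``functionally equivalent'' genuinely means exact.
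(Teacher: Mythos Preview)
Your high-level plan coincides with the paper's: reduce the precision of $g$ to $\delta$, prune each odd-indexed matrix $\mathbf{M}_{2i-1}$ to block-diagonal form so that the $j$-th group of $C\log_2(1/\delta)$ hidden neurons sees only input $y_j$, and for every target entry $w_{kj}$ solve a discrete RSSP in the products $a_{j,l}b_{k,j,l}$, invoking Lemma~\ref{Lemma_Prob_Convergence} with $n=\Theta(\log_2(1/\delta))$ and rescaled range $M=1/\delta^2$ and then taking a union bound over the $N_t$ weights. This is exactly the structure of Lemma~\ref{lemma_single_wt} (single weight, via the ReLU identity $wx=\sigma(wx)-\sigma(-wx)$), Lemma~\ref{lemma_single_neuron} (single neuron) and Lemma~\ref{lemma_layer} (single layer), composed layer by layer.

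The one substantive divergence is your handling of the non-uniform product distribution, which you rightly isolate as the crux. Your proposed fix---keep the first-layer neurons whose $a_{j,l}$ falls in a constant-fraction magnitude window, condition on those $a_{j,l}$, and use the remaining randomness of the $b_{k,j,l}$ to realize an RSSP with $M=\Theta(1/\delta)$---does not feed into Lemma~\ref{Lemma_Prob_Convergence} as stated: conditional on distinct $a_{j,l}$'s, the items $a_{j,l}b_{k,j,l}$ live on \emph{different} arithmetic progressions (step size $|a_{j,l}|\delta$), so they are neither i.i.d.\ nor uniform on a common symmetric integer interval, and there is no off-the-shelf ``stochastic domination'' principle for RSSP that bridges this. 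The paper instead applies rejection sampling directly to the products $Z_i=b_i a_i^{+}$ (in the spirit of \cite{Lueker1998}) to extract, from the $C\log_2(1/\delta)$ available samples, more than $2\log_2(1/\delta)$ that are uniform on the relevant discrete range; Lemma~\ref{Lemma_Prob_Convergence} then applies verbatim with $\kappa_n<1$. Replacing your conditioning step by this rejection-sampling step (and absorbing the resulting constant into $C$) closes the gap, and the rest of your argument---including the observation that $\delta^2\le\delta_t$ is precisely what makes the target $w_{kj}/\delta^2$ an integer---goes through unchanged.
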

We prove the above theorem for a target network with a single weight (Lemma \ref{lemma_single_wt}) using the results on RSSP in the previous section, and then we give the idea for proving it in general. The proof is an application of the strategy in \cite{pensia} but with the use of Lemma \ref{lemma_single_wt}.  Details are given in Appendix \ref{chap:app_main}.
\begin{lemma}[Representing a single weight]
\label{lemma_single_wt}
        Let $g:\mathbb{R}\rightarrow \mathbb{R}$  be a randomly initialized $\delta_{\text{in}}$ quantized network of the form $g(x)=[\mathbf{v}^T\sigma(\mathbf{u}x)]_{\gamma}$ where $\mathbf{u},\mathbf{v},\in \mathbb{R}^{2n}$. Assume ${\delta^2_{\text{in}}}\leq \delta_t$ and $\delta^2_{\text{in}}\leq{\delta^2}\leq\delta_t$. Also assume $n>C\log_2\frac{1}{\delta}$. Then the precision of weights of $g$ can be reduced to ${\delta}$, such that with probability at least
    \begin{equation*}
        1-\mathcal{O}\left(\left(\log_2\frac{1}{\delta}\right)^{-\frac{1}{7}}\right),
    \end{equation*}
    we have for any $\;w\in S_{\delta_t}$
    \begin{equation*}
        \exists\; \mathbf{s}^1,\mathbf{s}^2\in \{0,1\}^{2n}:[wx]_{\gamma}= [[(\mathbf{v}\odot \mathbf{s}^2)^T\sigma(\mathbf{u}\odot \mathbf{s}^1)]_\delta (\mathbf{x})]_{\gamma}.
    \end{equation*}
\end{lemma}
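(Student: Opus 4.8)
The plan is to adapt the single-weight construction of \cite{pensia} to the quantized regime, replacing their continuous Random Subset Sum lemma with the solvability result of Section~\ref{ssec:rssp} (Lemma~\ref{Lemma_Prob_Convergence}). Fix a target weight $w\in S_{\delta_t}$. First I would reduce the precision of both weight vectors, setting $\bar{\mathbf u}=[\mathbf u]_\delta$, $\bar{\mathbf v}=[\mathbf v]_\delta\in S_\delta^{2n}$, and use $\sigma(\bar u_ix)=\bar u_ix\cdot\mathbb{1}[\bar u_ix>0]$ to split the $2n$ coordinates into a \emph{positive branch} $A=\{i:\bar u_i>0\}$ and a \emph{negative branch} $B=\{i:\bar u_i<0\}$, discarding coordinates with $\bar u_i=0$. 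If $S_+\subseteq A$, $S_-\subseteq B$ and $\mathbf s^1=\mathbf s^2=\mathbb{1}[S_+\cup S_-]$, the pruned and precision-$\delta$ gadget evaluates to $x\mapsto\big(\sum_{i\in S_+}\bar v_i\bar u_i\big)x$ for $x\ge0$ and to $x\mapsto\big(\sum_{i\in S_-}\bar v_i\bar u_i\big)x$ for $x\le0$. Hence it suffices to choose $S_+$ and $S_-$ so that both product-sums equal $w$: then the gadget is the exact linear map $x\mapsto wx$, and applying $[\cdot]_\gamma$ to both sides gives $[wx]_\gamma$ for free, so the outer quantization costs nothing; this is also the source of the ``exact representation'' claim.

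The second step casts each of the two requirements as a quantized RSSP. Writing $\bar u_i=a_i\delta$, $\bar v_i=b_i\delta$ with $a_i,b_i\in\mathbb Z\cap[-1/\delta,1/\delta]$, the equation $\sum_{i\in S_+}\bar v_i\bar u_i=w$ becomes the integer identity $\sum_{i\in S_+}a_ib_i=w/\delta^2=:t$; since $\delta$ and $\delta_t$ are powers of two with $\delta^2\le\delta_t$, the ratio $\delta_t/\delta^2$ is a positive integer, so $t=(w/\delta_t)(\delta_t/\delta^2)\in\mathbb Z$ and $|t|\le1/\delta^2$. Conditioning on $\mathbf u$ freezes the integers $a_i$, and because $v_i$ is uniform on $S_{\delta_{\text{in}}}$ with $\delta_{\text{in}}\le\delta$, the coordinate $b_i=\bar v_i/\delta$ is uniform on $\{-1/\delta,\dots,1/\delta\}$ up to a negligible rounding effect at the top endpoint (coordinates with $b_i=0$ are simply never selected). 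Thus the items $X_i:=a_ib_i$, $i\in A$, are independent and bounded by $1/\delta^2$, but---and this is where the argument leaves the clean setting of Section~\ref{ssec:rssp}---neither uniform nor identically distributed: the $i$-th item is a fixed multiple $a_i$ of a uniform variable.

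The third step solves the two RSSPs. I would first check that, with probability $1-\mathrm{poly}(\delta)$ over $\mathbf u$, the positive branch satisfies $|A|=\Theta(n)\ge C'\log_2(1/\delta)$ (Chernoff), $\sum_{i\in A}a_i^2=\Theta(|A|/\delta^2)$ (Hoeffding on the bounded variables $a_i^2$), and $\gcd\{a_i:i\in A\}=1$ (its complement has probability $O(2^{-|A|})$). Granting these, the moment estimates of \cite{Borgs2001} behind Lemma~\ref{Lemma:Upper_Lower_NPP}, and hence the solvability criterion of Lemma~\ref{Lemma_Prob_Convergence}, extend to the items $X_i$ with the role of $M$ played by $1/\delta^2$ and of $n$ by $|A|$: the first-moment local-CLT estimate uses $\gcd\{a_i\}=1$ to identify the reachable lattice with $\mathbb Z$ and $\sum a_i^2=\Theta(|A|/\delta^2)$ for the variance, the phase-transition condition $\kappa<1$ holds once the absolute constant $C$ is large enough that $|A|\gtrsim2\log_2(1/\delta)$, and $|t|\le1/\delta^2$ keeps the associated target $O(M)$ (the mild asymmetry of the $b_i$ only shifts the RNPP target $\Lambda-2t$ by $O(|A|/\delta)=o(M\sqrt{|A|})$, which is harmless). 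This yields $S_+\subseteq A$ with $\sum_{i\in S_+}a_ib_i=t$ with probability $1-O(|A|^{-1/7})=1-O((\log_2(1/\delta))^{-1/7})$; the identical argument on $B$ produces $S_-$, and a union bound over the two branches (and over the poly-$\delta$ events above) keeps the failure probability at $O((\log_2(1/\delta))^{-1/7})$. The main obstacle is exactly this extension of Section~\ref{ssec:rssp}: its RSSP results are stated for uniform items, so one must redo the first- and second-moment computations of \cite{Borgs2001} for the product law $X_i=a_i\cdot\mathrm{Unif}\{-1/\delta,\dots,1/\delta\}$ and certify that the reachable lattice is full. (If the statement is read as requiring one event valid for \emph{all} $w\in S_{\delta_t}$ simultaneously, one additionally strengthens the RSSP conclusion to ``the subset sums of $A$ hit every multiple of $\delta_t/\delta^2$ in $[-1/\delta^2,1/\delta^2]$'', which is affordable since $2^{|A|}\gg1/\delta^2$ when $C>2$.)
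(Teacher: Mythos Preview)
Your overall skeleton—split the $2n$ coordinates by the sign of $\bar u_i$, reduce each branch to an integer subset-sum instance with target $w/\delta^2$, and invoke a quantized RSSP bound—matches the paper. The divergence is in how the non-uniformity of the product items $X_i=\bar u_i\bar v_i$ is handled. You condition on $\mathbf u$ and propose to rerun the first- and second-moment analysis of \cite{Borgs2001} for the items $X_i=a_i\cdot\mathrm{Unif}\{-1/\delta,\dots,1/\delta\}$, tracking $\gcd\{a_i\}$ and $\sum a_i^2$ by hand to recover the local-CLT estimate. The paper avoids this entirely: it keeps both factors random, observes that the product law is not uniform, and applies a standard \emph{rejection sampling} coupling (citing \cite{Lueker1998}) so that out of $C\log_2(1/\delta)$ samples at least $2\log_2(1/\delta)$ are genuinely i.i.d.\ uniform on the relevant grid; the non-uniform samples are simply pruned. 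After rejection, Lemma~\ref{Lemma_Prob_Convergence} applies as a black box with $M=1/\delta^2$ and $n\ge 2\log_2(1/\delta)$, so $\kappa<1$ and the $(\log_2(1/\delta))^{-1/7}$ failure probability falls out immediately.

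Your route is defensible but heavier: extending Theorem~\ref{thm:Borgs_main} to the product law is a genuine computation—the Fourier integral underlying the local CLT changes, and each $X_i$ is supported only on multiples of $a_i$, so the lattice identification via $\gcd$ must be done with care—and you rightly flag this as ``the main obstacle.'' The paper's rejection-sampling trick buys precisely the right to skip all of that and reuse Section~\ref{ssec:rssp} verbatim, at the cost only of the unspecified constant $C$ in the width. A minor organizational difference: the paper partitions the $2n$ coordinates into two halves of size $n$ \emph{a priori} (one half for $x>0$, the other for $x<0$, via the identity $wx=\sigma(wx)-\sigma(-wx)$) and then prunes by sign within each half, whereas you partition all $2n$ coordinates by the sign of $\bar u_i$; either decomposition works.
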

\begin{proof}
Let the precision of $g$ be $\delta$. First decompose $wx = \sigma(wx)-\sigma(-wx)$. This is a general identity for ReLU non-linear activation and was introduced in \cite{slthproof}. WLOG \footnote{Without Loss of Generality} say $w>0$. Let
\begin{equation*}
    \mathbf{v} = \begin{bmatrix} \mathbf{b} \\ \mathbf{d}  \end{bmatrix}, \mathbf{u} = \begin{bmatrix} \mathbf{a} \\ \mathbf{c}  \end{bmatrix}, \mathbf{s}^1 = \begin{bmatrix} \mathbf{s}_1^1 \\ \mathbf{s}_2^1  \end{bmatrix},
    \mathbf{s}^2 = \begin{bmatrix} \mathbf{s}_1^2 \\ \mathbf{s}_2^2  \end{bmatrix},
\end{equation*}
where $\mathbf{a},\mathbf{b},\mathbf{c},\mathbf{d}\in \mathbb{R}^n, \mathbf{s}_1^1, \mathbf{s}_2^1,\mathbf{s}_1^2, \mathbf{s}_2^2 \in \{0,1\}^n$. This is shown diagrammatically in Figure \ref{fig:f1} in Appendix \ref{chap:figs}.

\vspace{0.2cm}
\textbf{Step 1:} Let $\mathbf{a}^+= \max\{\mathbf{0},\mathbf{a}\}$ be the vector obtained by pruning all the negative entries of $\mathbf{a}$. This is done by appropriately choosing $\mathbf{s}^1_1$ Since $w\geq 0$, then for all $x\leq 0$ we have $\sigma(wx)=\mathbf{b}^T\sigma(\mathbf{a}^+x)=0$. Moreover, further pruning of $\mathbf{a}^+$ would not affect this equality for $x\leq 0$. Thus we consider $x>0$ in next two steps. Therefore we get $\sigma (wx)=wx$ and $\mathbf{b}^T\mathbf{a}^+ x$ = $\sum_{i}b_ia_i^+x$.

\vspace{0.2cm}
\textbf{Step 2:} Consider the random variables $Z_i=b_ia_i^+$. These are numbers of precision $\delta^2$, sampled from the set $\{ab\;|\; a,b\in S_{{\delta}}\}$. Now $w$, which is a number of precision $\delta_t$, also belongs to the set $\{ab\;|\; a,b\in S_{\delta}\}$ because $\delta^2\leq\delta_t$. The numbers $\{Z_i\}$ are not distributed uniformly, but by a standard rejection sampling argument (as in \cite{Lueker1998}), there exists $C$ such that more that $2\log_2 \frac{1}{\delta}$ samples out of $C\log_2 \frac{1}{\delta}$ are uniform distributed. We prune the other samples such that we are left with $\bar{Z_i}$, which are uniformly distributed. Now by Lemma \ref{Lemma_Prob_Convergence}, as long as cardinality of $\{\bar{Z_i}\}$ is greater than $2\log_2\frac{1}{\delta}$, the Random Subset Sum Problem with set $\{\bar{Z_i}\}$ and target $w$ can be solved with probability atleast
    \begin{equation*}
        p \geq 1-\mathcal{O}\left(\left(\log_2\frac{1}{\delta}\right)^{-\frac{1}{7}}\right).
    \end{equation*}
Note that solving the Subset Sum Problem in an integer setting where numbers are sampled from $\{-M,\dots,M\}$ and solving it when numbers are sampled from $\{-1,\dots\delta,2\delta,\dots,1\}$ is equivalent (only difference is a scaling factor). In Lemma \ref{Lemma_RSSP} and \ref{Lemma_Prob_Convergence}, the sampling set is $\{-M,\dots,-1,1,\dots,M\}$, but $0$ can be rejected during rejection sampling. Hence it follows that with probability $p$
\begin{equation*}
    \forall\;w\in S_{\delta}^+,\;\;\; \exists\; \mathbf{s}_1\in \{0,1\}^{n}:w= \mathbf{b}^T\mathbf{s}_1\odot\mathbf{a}^+.
\end{equation*}
where $S_{\delta}^+$ denotes positive members of $S_\delta$. The part shown in green in Figure \ref{fig:f1} in Appendix \ref{chap:figs} hence handles positive inputs.

\vspace{0.2cm}
\textbf{Step 3:} Similar to steps 1 and 2, we can prune negative weights from $\mathbf{c}$ and let the red part shown in Figure \ref{fig:f1} in Appendix \ref{chap:figs} handle negative inputs. It will follow that with probability $p$
     \begin{equation*}
        \forall\;w\in S_{\delta}^+,\;\;\; \exists\; \mathbf{s}_2\in \{0,1\}^{n}:w= \mathbf{d}^T\mathbf{s}_2\odot\mathbf{c}^-.
    \end{equation*}
    Combining the two parts by union bound (Theorem \ref{eq:unbd}, Appendix \ref{chap:ineq}), Lemma 5 follows.
\end{proof}
\begin{proof}[Proof Sketch for Theorem \ref{thm:main}]
    The idea is to follow the strategy in \cite{pensia}. We represented a single weight in Lemma \ref{lemma_single_wt}. Similarly we can represent a neuron by representing each of its weights (shown explicitly in Lemma \ref{lemma_single_neuron} and diagrammatically in Figure \ref{fig:f2} in Appendix \ref{chap:app_main}). Using the representation of a single neuron, we represent a full layer (shown explicitly in Lemma \ref{lemma_layer} and diagrammatically in Figure \ref{fig:f3} in Appendix \ref{chap:app_main}). Then we represent a full network by applying Lemma \ref{lemma_layer} layer by layer. See Appendix \ref{chap:app_main} for details.
\end{proof}
Our next result employs construction from \cite{Burkholz}.
\begin{theorem}
    \label{thm:main_2}
    Let $\mathcal{F}$ be the class of $\delta_t$ quantized $\gamma$-double mixed Precision neural networks of the form
    \begin{equation*}
        f(\mathbf{x}) = [\mathbf{W}_\ell [\sigma(\mathbf{W}_{\ell-1} \cdots [\sigma(\mathbf{W}_1 \mathbf{x})]_{\gamma})]_{\gamma}]_{\gamma}.
    \end{equation*}
    Consider an $\ell+1$ layered randomly initialized $\gamma$-mixed precision resetting network which resets the precession to $\gamma$ in all layers except the first one,
    \begin{equation*}
        g(\mathbf{x}) = [\mathbf{M}_{2\ell}\sigma[(\mathbf{M}_{2\ell-1}...[\sigma(\mathbf{M}_2(\sigma(\mathbf{M}_1\mathbf{x})))]_{\gamma})]_\gamma]_{\gamma},
    \end{equation*}
    whose weights are sampled from $\{-1\dots,-\delta,\delta,\dots,1\}$ with ${\delta_{\text{in}}}\leq \delta_t$. Let $\delta_{\text{in}}\leq{\delta}\leq\delta_t$. If $\mathbf{M}_{1}$ and $\mathbf{M}_{2}$ have dimensions
    \begin{align*}
        d_0 \times C d_{0}\log_2 \frac{1}{\delta} &&\text{and}&& d_1\log_2\frac{1}{\delta} \times C d_{0}\log_2\frac{1}{\delta}
    \end{align*}
    respectively, $\mathbf{M}_{i+1}$ has dimension greater than
    \begin{equation*}
       d_{i}\log_2\frac{1}{\delta}\times d_{i+1}\log_2\frac{1}{\delta}
    \end{equation*}
    $\forall\; 2<i<l-1$ and $M_{\ell+1}$ has dimension greater than
    \begin{equation*}
        \log_2\left(\frac{1}{\delta}\right)d_{l-1}\times d_l.
    \end{equation*}
    Then the precision elements of $\mathbf{M}_i$'s can be reduced to $\delta$ such that for every $f \in \mathcal{F}$ we have
    \begin{equation*}
        \exists\;\{\mathbf{S}_i\}_{i=1}^{\ell+1}:\;\;\;\; [g_{\{\mathbf{S}_i\}}]_{\delta}(\mathbf{x}) =f(\mathbf{x}).
    \end{equation*}
    with probability atleast
    \begin{equation*}
        1-N_t\;\log_2\left(\frac{1}{\delta}\right)\;\mathcal{O}\left(\left(\log_2\frac{1}{\delta}\right)^{-\frac{1}{7}}\right)
    \end{equation*}
    where $N_t$ is the total number of parameters in $f$.
\end{theorem}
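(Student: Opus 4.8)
The plan is to transplant the depth-efficient construction of \cite{Burkholz} into the quantized regime, substituting the quantized RSSP estimate of Lemma \ref{Lemma_Prob_Convergence} for the continuous RSSP result used there, exactly as the proof of Theorem \ref{thm:main} transplants the construction of \cite{pensia}. The structural difference from Theorem \ref{thm:main} is that the hidden representation is never re-contracted between consecutive layers: after a first (expansion) layer, every neuron of layer $i$ of $g$ carries a \emph{plain copy} of one coordinate of the $\gamma$-quantized output of layer $i-1$ of the target $f$, with each target coordinate held in $\Theta(\log_2\frac1\delta)$ copies, and a fresh weight matrix then supplies all the pruning randomness needed to realize the next target matrix. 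Because each of the resulting subset sums is taken over \emph{single} random weights --- the copies are unscaled, the linear combination being performed entirely by the subsequent matrix --- the RSSP summands have precision $\delta$ rather than $\delta^2$, which is precisely why $\delta\le\delta_t$ suffices here whereas Theorem \ref{thm:main} required $\delta^2\le\delta_t$; conversely, the persistence of the $\Theta(\log_2\frac1\delta)$-fold copying through all layers accounts for the extra width factor and for the extra $\log_2\frac1\delta$ in the failure probability.

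Concretely I would proceed as follows. (i) Prove a fused-layer analogue of Lemma \ref{lemma_single_wt}: given the expanded representation of a nonnegative $\gamma$-quantized vector (each coordinate repeated $\Theta(\log_2\frac1\delta)$ times) and a target row with entries in $S_{\delta_t}$, a single $\delta_{\text{in}}$-quantized random weight vector, once reduced to precision $\delta$ and pruned, reproduces the target inner product exactly with probability $1-O((\log_2\frac1\delta)^{-1/7})$; this invokes Lemma \ref{Lemma_Prob_Convergence} with $n=\Theta(\log_2\frac1\delta)$ and $M=1/\delta$, so that $\kappa_n<1$ once $C$ is large enough and the target scales to magnitude $O(M)$ as Lemma \ref{Lemma_RSSP} requires, after a rejection-sampling step (as in Lemma \ref{lemma_single_wt}) restoring uniformity of the summands. (ii) Treat the first layer separately: $\mathbf{M}_1$ is pruned to a $\pm1$ matrix producing $\sigma(\pm x_k)$ copies of the possibly negative inputs --- the only place the ReLU decomposition $x=\sigma(x)-\sigma(-x)$ of \cite{slthproof} is needed --- and $\mathbf{M}_2$ then realizes $\mathbf{W}_1$ with two subset sums per target weight, which accounts for the constant $C$ in the widths of $\mathbf{M}_1$ and $\mathbf{M}_2$. (iii) Apply step (i) to $\mathbf{M}_3,\dots,\mathbf{M}_\ell$, where the incoming vector is a ReLU output, hence nonnegative, so plain copies suffice and no decomposition is needed; place the $[\cdot]_\gamma$ operators of $g$ so that rounding each copy after $\sigma$ reproduces, replicated $\Theta(\log_2\frac1\delta)$ times, the $\gamma$-quantized layer output of $f$. (iv) Let $\mathbf{M}_{\ell+1}$ realize $\mathbf{W}_\ell$ and simultaneously contract to output width $d_\ell$, again by step (i). (v) Union-bound over the $N_t\,\Theta(\log_2\frac1\delta)$ subset-sum instances --- one per copy of each of the $N_t$ target parameters --- giving failure probability at most $N_t\log_2(\frac1\delta)\,O((\log_2\frac1\delta)^{-1/7})$.

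The main obstacle is establishing step (i) together with the bookkeeping of steps (ii)--(iv): one must verify that a single matrix can act simultaneously as the contraction realizing $\mathbf{W}_i$ and as the expansion feeding the following matrix, while (a) every RSSP target remains of magnitude $O(M)$ and of a precision attainable as a sum of $\delta$-precision summands, and (b) the interleaving of the $[\cdot]_\gamma$ roundings inside $g$ matches exactly those of the target double-mixed-precision network $f$, so that the exactness of the subset sums turns approximation into exact representation. Checking that the $\pm1$ and decomposition structure is genuinely confined to the first layer --- so the width inflates by only the claimed $\log_2\frac1\delta$ and not by an extra constant per layer --- is the other delicate point; the remaining ingredients (the rejection-sampling argument, the $\kappa_n<1$ bookkeeping, and the union bound) are routine given Lemmas \ref{Lemma_RSSP}--\ref{Lemma_Prob_Convergence} and the proof of Lemma \ref{lemma_single_wt}.
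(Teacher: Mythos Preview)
Your overall strategy---transplant the \cite{Burkholz} construction and feed it the quantized RSSP of Lemma~\ref{Lemma_Prob_Convergence}---is the paper's strategy, and your steps (i), (iii)--(v) are essentially what the paper does for layers $\mathbf{M}_3,\dots,\mathbf{M}_{\ell+1}$: once $\Theta(\log_2\frac1\delta)$ exact copies of each $\gamma$-quantized coordinate are available, a single fresh weight matrix supplies independent $\delta$-precision summands for each target weight, and a union bound over $N_t\log_2\frac1\delta$ RSSP instances gives the stated failure probability.

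The gap is in step (ii). Pruning $\mathbf{M}_1$ to a $\pm 1$ matrix to obtain \emph{unscaled} copies $\sigma(\pm x_k)$ requires finding entries that equal exactly $\pm 1$ after reduction to precision $\delta$. But each entry is uniform on $S_\delta\setminus\{0\}$, so the chance of hitting $\pm 1$ is $O(\delta)$; a block of size $C\log_2\frac1\delta$ therefore contains on average only $O(\delta\log_2\frac1\delta)$ such entries---nowhere near the $\Theta(\log_2\frac1\delta)$ signed copies your single-weight RSSP in $\mathbf{M}_2$ needs. With the stated width of $\mathbf{M}_1$ this step fails with overwhelming probability, and relaxing to ``any entry of the right sign'' puts you back into products $u_iv_{i,j}$ of precision $\delta^2$, i.e., the Pensia construction you were trying to avoid.

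The paper does not try to get unscaled copies out of $\mathbf{M}_1$ alone. Its Lemma~\ref{Bruk_lemma_layer} treats $\mathbf{M}_1,\mathbf{M}_2$ together exactly as in Theorem~\ref{thm:main}: $\mathbf{M}_1$ is pruned block-diagonal with \emph{random} entries $\mathbf{u}_i$, and each of the $d_1d_2\log_2\frac1\delta$ rows of $\mathbf{M}_2$ runs the product RSSP of Lemma~\ref{lemma_single_wt}, producing $\log_2\frac1\delta$ exact copies of $[\mathbf{W}_1\mathbf{x}]_\gamma$. Only from $\mathbf{M}_3$ onward do the subset sums range over single $\delta$-precision weights (and there, as the paper notes, no rejection sampling is needed). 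So the constant $C$ in the widths of $\mathbf{M}_1,\mathbf{M}_2$ comes from the rejection sampling inside Lemma~\ref{lemma_single_wt}, not from a $\pm 1$ decomposition; and your explanation of why $\delta\le\delta_t$ replaces $\delta^2\le\delta_t$ does not match the paper's actual argument---indeed Lemma~\ref{Bruk_lemma_layer} itself still carries the $\delta^2\le\delta_t$ hypothesis.
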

\begin{proof}[Proof Sketch for Theorem \ref{thm:main_2}]
    We follow the construction in \cite{Burkholz}. The idea is to use the same trick as the previous result to represent a layer, but to copy it many times. Hence the representation of a layer which was supposed to give output $(x_1,x_2,\dots,x_N)$, will give output $(x_1, x_1,\dots,x_2,x_2,\dots,x_N,x_N\dots ,x_N)$. These copies can now be used while representing the next layer, without adding an intermediate layer in between (shown in Lemma \ref{Bruk_lemma_layer} and diagrammatically in Figure \ref{fig:f4} in Appendix \ref{chap:app_main}).
\end{proof}

\subsection{Lower Bound by Parameter Counting}

In this section we show by a parameter counting argument, akin to that employed in \cite{pensia,nataleSparsityStrongLottery2024}, that there exists a two layered $\delta$-quantized network with $d^2$ parameters that cannot be represented by a neural network unless it has $\Omega\left(d^2\log_2\left(\frac{1}{\delta}\right)\right)$ parameters. Note that any linear transformation $\mathbf{Wx}$ where $\mathbf{W}\in S_\delta^d\times S_\delta^d$ and $\mathbf{x}\in S_\delta^d$ can be expressed as a 2 layered neural network. Let $\mathcal{F}$ be the class of functions
\begin{align}
    \mathcal{F}:=\{h_{\mathbf{W}}:\mathbf{W}\in S_\delta^d\times S_\delta^d\},&&\text{where}&&h_{\mathbf{W}}(\mathbf{x})=\begin{bmatrix}
        \mathbf{I}&-\mathbf{I}\end{bmatrix} \sigma\left(\begin{bmatrix}
        \mathbf{W}\\-\mathbf{W}\end{bmatrix}\mathbf{x}\right).
    \label{eq:linear_fu}
\end{align}

\begin{theorem}
\label{thm:lower}
    Let $g:\mathbb{R}^d\to\mathbb{R}^d$ be a $\delta$ quantized neural network of the form 
    \begin{equation*}
        g(\mathbf{x}) = \mathbf{M}_{\ell}\sigma(\mathbf{M}_{\ell-1}...\sigma(\mathbf{M}_1\mathbf{x})),
    \end{equation*}
    where elements of $\mathbf{M}_i$'s are sampled from arbitrary distributions over $S_{\delta}$. Let $\mathcal{G}$ be the set of all matrices that can be formed by pruning $g$. Let $\mathcal{F}$ be defined as in Eq. \ref{eq:linear_fu}. If
    \begin{equation*}
        \forall \;h\in \mathcal{F},\;\mathbb{P}\left(\exists\;g'\in \mathcal{G}:g'=h\right)\geq p,
    \end{equation*}
    then the total number of non zero parameters of $g$ is at least
    \begin{equation*}
        \log_2p+d^2\log_2\left(\frac{2}{\delta}+1\right).
    \end{equation*}
\end{theorem}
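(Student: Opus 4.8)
The plan is a counting argument in the spirit of \cite{pensia,nataleSparsityStrongLottery2024}: bound the number of distinct functions that pruning $g$ can realize and compare it with the cardinality of the target family $\mathcal{F}$. The first step is to compute $|\mathcal{F}|$ exactly. Using the componentwise ReLU identity $\sigma(a)-\sigma(-a)=a$ one checks that $h_{\mathbf{W}}(\mathbf{x})=\sigma(\mathbf{W}\mathbf{x})-\sigma(-\mathbf{W}\mathbf{x})=\mathbf{W}\mathbf{x}$, so $h_{\mathbf{W}}$ is simply the linear map $\mathbf{x}\mapsto\mathbf{W}\mathbf{x}$ and distinct matrices $\mathbf{W}\in S_\delta^{d}\times S_\delta^{d}$ yield distinct functions. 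Since $|S_\delta|=\tfrac{2}{\delta}+1$ (the grid from $-1$ to $1$ with step $\delta$), this gives $|\mathcal{F}|=\bigl(\tfrac{2}{\delta}+1\bigr)^{d^{2}}$, hence $\log_2|\mathcal{F}|=d^{2}\log_2\!\bigl(\tfrac{2}{\delta}+1\bigr)$.

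The second step bounds the pruning reach of $g$. If $g$ has $m$ non-zero parameters, then every binary mask acts by keeping or zeroing each of these $m$ weights (masking an already-zero weight does nothing), so the family $\mathcal{G}$ of pruned networks — and a fortiori the set of functions they compute — satisfies $|\mathcal{G}|\le 2^{m}$; this bound holds for each realization of the random weights of $g$.

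The third step couples the two counts through the hypothesis. For $h\in\mathcal{F}$ let $A_h$ denote the event $\{\exists\,g'\in\mathcal{G}:g'=h\}$; by assumption $\mathbb{P}(A_h)\ge p$ for every $h$. Summing over $h$ and exchanging sum and expectation, $\mathbb{E}\,|\mathcal{F}\cap\mathcal{G}|=\sum_{h\in\mathcal{F}}\mathbb{P}(A_h)\ge p\,|\mathcal{F}|$, so at least one realization of $g$ satisfies $|\mathcal{F}\cap\mathcal{G}|\ge p\,|\mathcal{F}|$. For that realization $2^{m}\ge|\mathcal{G}|\ge|\mathcal{F}\cap\mathcal{G}|\ge p\,|\mathcal{F}|$, and taking $\log_2$ gives $m\ge\log_2 p+d^{2}\log_2\!\bigl(\tfrac{2}{\delta}+1\bigr)$, the asserted lower bound.

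The argument is short; the points needing care are (i) checking injectivity of $\mathbf{W}\mapsto h_{\mathbf{W}}$, so that the base $\tfrac{2}{\delta}+1$ of the exponential count is exact — an off-by-one here would dilute the constant inside the logarithm — and (ii) handling the fact that the number of non-zero weights of $g$ is itself random; I resolve this by passing to a single favourable realization, although one may equally state the bound for the deterministic architecture size, which upper-bounds the non-zero count. I expect (ii) to be the only genuine subtlety; everything else is bookkeeping.
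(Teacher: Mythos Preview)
Your proposal is correct and follows essentially the same counting argument as the paper: both compute $|\mathcal{F}|=(2/\delta+1)^{d^2}$, bound $|\mathcal{G}|\le 2^{m}$, and conclude $2^{m}\ge p\,|\mathcal{F}|$. Your treatment is in fact slightly more careful than the paper's, which simply asserts ``two matrices represent the same function iff all their elements are the same'' and jumps directly to $2^{\alpha}\ge p\,|\mathcal{F}|$ without the expectation step you spell out in (iii); your explicit verification of injectivity via the ReLU identity and your handling of the randomness of $m$ are refinements the paper omits.
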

\begin{proof}[Proof Sketch]
    Theorem \ref{thm:lower} follows from a parameter counting argument. We simply count the number of different functions in $\mathcal{F}$ and demand that with probability $p$, any $f\in \mathcal{F}$ be represented by pruning $g$. See Appendix \ref{chap:app_main} for details.
\end{proof}
The following immediate corollary of the previous theorem provide a matching lower bound to Theorem \ref{thm:main}.
\begin{corollary}
    \label{cor}
    If $g$ is a two-layer network satisfying the hypothesis of Theorem \ref{thm:lower}, then its width is $\Omega(d\log \frac 1\delta)$.
\end{corollary}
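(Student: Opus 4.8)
The plan is to specialize Theorem~\ref{thm:lower} to the two‑layer case and solve the resulting inequality for the width. Write $g(\mathbf{x}) = \mathbf{M}_2\,\sigma(\mathbf{M}_1\mathbf{x})$ with $\mathbf{M}_1 \in S_\delta^{\,w\times d}$ and $\mathbf{M}_2 \in S_\delta^{\,d\times w}$, where $w$ denotes the hidden width of $g$. Then $g$ has exactly $wd + dw = 2wd$ weights, so in particular its number of nonzero parameters is at most $2wd$. Since $g$ satisfies the hypothesis of Theorem~\ref{thm:lower} with some success probability $p$, that theorem gives
\begin{equation*}
    2wd \;\geq\; \log_2 p \;+\; d^2\log_2\!\left(\frac{2}{\delta}+1\right).
\end{equation*}

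Dividing through by $2d$ yields
\begin{equation*}
    w \;\geq\; \frac{\log_2 p}{2d} \;+\; \frac{d}{2}\log_2\!\left(\frac{2}{\delta}+1\right).
\end{equation*}
In the regime relevant to Theorem~\ref{thm:main} the success probability is $p = 1 - o(1)$ (and in any case one only needs $p$ bounded below by a fixed positive constant, or $1/\mathrm{poly}(\log\tfrac1\delta)$ as in the high‑probability convention used throughout the paper), so $\log_2 p$ is a bounded quantity and the first term is $O(1/d)$. Using $2/\delta + 1 \geq 1/\delta$ we get $\tfrac{d}{2}\log_2(2/\delta+1) \geq \tfrac{d}{2}\log_2(1/\delta)$, and absorbing the lower‑order negative correction (which is dominated once $d\log_2(1/\delta)$ is bounded away from $0$) gives $w = \Omega\!\left(d\log\frac{1}{\delta}\right)$, as claimed.

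There is essentially no real obstacle here: the corollary is an immediate arithmetic consequence of Theorem~\ref{thm:lower}. The only points worth spelling out are the parameter count — a two‑layer $\mathbb{R}^d\to\mathbb{R}^d$ network of width $w$ has $2wd$ weights, not $O(w)$ or $O(w^2)$ — and the trivial observation that the additive $\log_2 p$ term does not affect the asymptotics as long as $p$ is not super‑polynomially small in $\log(1/\delta)$.
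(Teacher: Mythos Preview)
Your argument is correct and is essentially the same as the paper's, which simply notes that the corollary is an immediate consequence of Theorem~\ref{thm:lower}. Your explicit parameter count $2wd$ and the division step are exactly the intended arithmetic unpacking of that remark.
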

    
\section{Conclusion}
    \label{sec:con}
    We have proved \emph{optimal} over–parameterization bounds for the Strong Lottery Ticket Hypothesis (SLTH) in the finite-precision setting.  Specifically, we showed that any $\delta_t$-quantized target network~$N_t$ can be recovered \emph{exactly} by pruning a larger, randomly–initialized network~$N_{\mathrm{in}}$ with precision $\delta_{\text{in}}$.  By reducing the pruning task to a \emph{quantized} Random Subset Sum instance and importing the sharp phase-transition analysis for the Number Partitioning Problem, we derived width requirements that match the information-theoretic lower bound up to absolute constants. 
These results not only close the gap between upper and lower bounds for quantized SLTH, but also certify, for the first time, that pruning alone can yield \emph{exact} finite-precision subnetworks rather than merely approximate ones. Beyond their theoretical interest, our findings pinpoint the precise interplay between quantization granularity and over-parameterization, and they suggest that mixed-precision strategies may enjoy similarly tight guarantees. An immediate open problem is to generalize our techniques to structured architectures—most notably convolutional, residual, and attention-based networks—where weight sharing and skip connections introduce additional combinatorial constraints.  
Another interesting direction is to incorporate layer-wise mixed precision and to analyze the robustness of lottery tickets under stochastic quantization noise, which is of interest for practical deployment on low-precision hardware accelerators. We believe that the combinatorial perspective adopted here will prove equally effective in these broader settings, ultimately advancing our theoretical understanding of extreme model compression.

\bibliographystyle{plainnat}
\bibliography{references}

\begin{thebibliography}{24}
\providecommand{\natexlab}[1]{#1}
\providecommand{\url}[1]{\texttt{#1}}
\expandafter\ifx\csname urlstyle\endcsname\relax
  \providecommand{\doi}[1]{doi: #1}\else
  \providecommand{\doi}{doi: \begingroup \urlstyle{rm}\Url}\fi

\bibitem[Borgs et~al.(2001)Borgs, Chayes, and Pittel]{Borgs2001}
Christian Borgs, Jennifer Chayes, and Boris Pittel.
\newblock Phase transition and finite‐size scaling for the integer partitioning problem.
\newblock \emph{Random Structures amp; Algorithms}, 19\penalty0 (3–4):\penalty0 247–288, October 2001.
\newblock ISSN 1098-2418.
\newblock \doi{10.1002/rsa.10004}.
\newblock URL \url{http://dx.doi.org/10.1002/rsa.10004}.

\bibitem[Burkholz(2022{\natexlab{a}})]{Burkholz}
Rebekka Burkholz.
\newblock Most activation functions can win the lottery without excessive depth.
\newblock In Alice~H. Oh, Alekh Agarwal, Danielle Belgrave, and Kyunghyun Cho, editors, \emph{Advances in Neural Information Processing Systems}, 2022{\natexlab{a}}.
\newblock URL \url{https://openreview.net/forum?id=NySDKS9SxN}.

\bibitem[Burkholz(2022{\natexlab{b}})]{burkholzConvolutionalResidualNetworks2022}
Rebekka Burkholz.
\newblock Convolutional and {{Residual Networks Provably Contain Lottery Tickets}}.
\newblock In \emph{Proceedings of the 39th {{International Conference}} on {{Machine Learning}}}, pages 2414--2433, Baltimore, July 2022{\natexlab{b}}. PMLR.
\newblock URL \url{https://proceedings.mlr.press/v162/burkholz22a.html}.

\bibitem[Carilli(2020)]{AutomaticMixedPrecision}
Michael Carilli.
\newblock Automatic {{Mixed Precision}} --- {{PyTorch Tutorials}} 2.7.0+cu126 documentation, 2020.
\newblock URL \url{https://docs.pytorch.org/tutorials/recipes/recipes/amp_recipe.html}.

\bibitem[Diffenderfer and Kailkhura(2021)]{multiprize}
James Diffenderfer and Bhavya Kailkhura.
\newblock Multi-prize lottery ticket hypothesis: Finding accurate binary neural networks by pruning a randomly weighted network.
\newblock In \emph{International Conference on Learning Representations}, 2021.
\newblock URL \url{https://openreview.net/forum?id=U_mat0b9iv}.

\bibitem[Ferbach et~al.(2022)Ferbach, Tsirigotis, Gidel, and Bose]{ferbachGeneralFrameworkProving2022}
Damien Ferbach, Christos Tsirigotis, Gauthier Gidel, and Joey Bose.
\newblock A {{General Framework For Proving The Equivariant Strong Lottery Ticket Hypothesis}}.
\newblock In \emph{The {{Eleventh International Conference}} on {{Learning Representations}}}, September 2022.
\newblock URL \url{https://openreview.net/forum?id=vVJZtlZB9D}.

\bibitem[Frankle and Carbin(2019)]{LTH}
Jonathan Frankle and Michael Carbin.
\newblock The lottery ticket hypothesis: Finding sparse, trainable neural networks.
\newblock In \emph{International Conference on Learning Representations}, 2019.
\newblock URL \url{https://openreview.net/forum?id=rJl-b3RcF7}.

\bibitem[Garey and Johnson(1979)]{Garey1979-ol}
Michael~R Garey and David~S Johnson.
\newblock \emph{Computers and intractability}.
\newblock W.H. Freeman, New York, NY, April 1979.

\bibitem[Han et~al.(2015)Han, Mao, and Dally]{Han}
Song Han, Huizi Mao, and William~J. Dally.
\newblock Deep compression: Compressing deep neural networks with pruning, trained quantization and huffman coding, 2015.
\newblock URL \url{https://arxiv.org/abs/1510.00149}.

\bibitem[Hayes(2002)]{Hayes2002}
Brian Hayes.
\newblock The easiest hard problem.
\newblock \emph{American Scientist}, 90\penalty0 (2):\penalty0 113, 2002.
\newblock ISSN 1545-2786.
\newblock \doi{10.1511/2002.2.113}.
\newblock URL \url{http://dx.doi.org/10.1511/2002.2.113}.

\bibitem[Hwang et~al.(2024)Hwang, Park, and Park]{hwang2024expressive}
Geonho Hwang, Yeachan Park, and Sejun Park.
\newblock On expressive power of quantized neural networks under fixed-point arithmetic.
\newblock \emph{arXiv}, 2024.
\newblock URL \url{https://arxiv.org/abs/2409.00297}.

\bibitem[Lueker(1982)]{luekerAverageDifferenceSolutions1982}
George~S. Lueker.
\newblock On the {{Average Difference}} between the {{Solutions}} to {{Linear}} and {{Integer Knapsack Problems}}.
\newblock In \emph{Applied {{Probability-Computer Science}}: {{The Interface Volume}} 1}. Birkh{\"a}user, 1982.
\newblock ISBN 978-1-4612-5791-2.
\newblock \doi{10.1007/978-1-4612-5791-2_22}.
\newblock URL \url{https://dl.acm.org/doi/10.5555/313651.313692}.

\bibitem[Lueker(1998)]{Lueker1998}
George~S. Lueker.
\newblock Exponentially small bounds on the expected optimum of the partition and subset sum problems.
\newblock \emph{Random Structures and Algorithms}, 12\penalty0 (1):\penalty0 51–62, January 1998.
\newblock ISSN 1098-2418.
\newblock URL \url{http://dx.doi.org/10.1002/(SICI)1098-2418(199801)12:1<51::AID-RSA3>3.0.CO;2-S}.

\bibitem[Malach et~al.(2020)Malach, Yehudai, Shalev-Schwartz, and Shamir]{slthproof}
Eran Malach, Gilad Yehudai, Shai Shalev-Schwartz, and Ohad Shamir.
\newblock Proving the lottery ticket hypothesis: Pruning is all you need.
\newblock In Hal~Daumé III and Aarti Singh, editors, \emph{Proceedings of the 37th International Conference on Machine Learning}, volume 119 of \emph{Proceedings of Machine Learning Research}, pages 6682--6691. PMLR, 13--18 Jul 2020.
\newblock URL \url{https://proceedings.mlr.press/v119/malach20a.html}.

\bibitem[Mertens(1998)]{Mertens1998}
Stephan Mertens.
\newblock Phase transition in the number partitioning problem.
\newblock \emph{Physical Review Letters}, 81\penalty0 (20):\penalty0 4281–4284, November 1998.
\newblock ISSN 1079-7114.
\newblock URL \url{http://dx.doi.org/10.1103/PhysRevLett.81.4281}.

\bibitem[Mézard and Montanari(2009)]{Mzard2009}
Marc Mézard and Andrea Montanari.
\newblock \emph{Information, Physics, and Computation}.
\newblock Oxford University PressOxford, January 2009.
\newblock ISBN 9780191718755.
\newblock \doi{10.1093/acprof:oso/9780198570837.001.0001}.
\newblock URL \url{http://dx.doi.org/10.1093/acprof:oso/9780198570837.001.0001}.

\bibitem[Natale et~al.(2024)Natale, Ferre', Giambartolomei, Giroire, and {Mallmann-Trenn}]{nataleSparsityStrongLottery2024}
Emanuele Natale, Davide Ferre', Giordano Giambartolomei, Fr{\'e}d{\'e}ric Giroire, and Frederik {Mallmann-Trenn}.
\newblock On the {{Sparsity}} of the {{Strong Lottery Ticket Hypothesis}}.
\newblock In \emph{The {{Thirty-eighth Annual Conference}} on {{Neural Information Processing Systems}}}, November 2024.
\newblock URL \url{https://hal.science/hal-04741369v2}.

\bibitem[Orseau et~al.(2020)Orseau, Hutter, and Rivasplata]{orseauLogarithmicPruningAll2020}
Laurent Orseau, Marcus Hutter, and Omar Rivasplata.
\newblock Logarithmic pruning is all you need.
\newblock In \emph{Proceedings of the 34th {{International Conference}} on {{Neural Information Processing Systems}}}, {{NIPS}}'20, pages 2925--2934, Red Hook, NY, USA, December 2020. Curran Associates Inc.
\newblock ISBN 978-1-7138-2954-6.
\newblock URL \url{https://proceedings.neurips.cc/paper/2020/file/1e9491470749d5b0e361ce4f0b24d037-Paper.pdf}.

\bibitem[Pensia et~al.(2020)Pensia, Rajput, Nagle, Vishwakarma, and Papailiopoulos]{pensia}
Ankit Pensia, Shashank Rajput, Alliot Nagle, Harit Vishwakarma, and Dimitris Papailiopoulos.
\newblock Optimal lottery tickets via subset sum: Logarithmic over-parameterization is sufficient.
\newblock In \emph{Advances in Neural Information Processing Systems}, volume~33, pages 2599--2610, 2020.
\newblock URL \url{https://proceedings.neurips.cc/paper_files/paper/2020/file/1b742ae215adf18b75449c6e272fd92d-Paper.pdf}.

\bibitem[Ramanujan et~al.(2020)Ramanujan, Wortsman, Kembhavi, Farhadi, and Rastegari]{ramanujan}
Vivek Ramanujan, Mitchell Wortsman, Aniruddha Kembhavi, Ali Farhadi, and Mohammad Rastegari.
\newblock What's hidden in a randomly weighted neural network?, 2020.
\newblock URL \url{https://openaccess.thecvf.com/content_CVPR_2020/papers/Ramanujan_Whats_Hidden_in_a_Randomly_Weighted_Neural_Network_CVPR_2020_paper.pdf}.

\bibitem[Sreenivasan et~al.(2022)Sreenivasan, Rajput, Sohn, and Papailiopoulos]{Sreenivasan}
Kartik Sreenivasan, Shashank Rajput, Jy-Yong Sohn, and Dimitris Papailiopoulos.
\newblock Finding nearly everything within random binary networks.
\newblock In Gustau Camps-Valls, Francisco J.~R. Ruiz, and Isabel Valera, editors, \emph{Proceedings of The 25th International Conference on Artificial Intelligence and Statistics}, volume 151 of \emph{Proceedings of Machine Learning Research}, pages 3531--3541. PMLR, 28--30 Mar 2022.
\newblock URL \url{https://proceedings.mlr.press/v151/sreenivasan22a.html}.

\bibitem[Wang et~al.(2019)Wang, Zhang, Xie, Zhou, Su, Zhang, and Hu]{pruningscratch}
Yulong Wang, Xiaolu Zhang, Lingxi Xie, Jun Zhou, Hang Su, Bo~Zhang, and Xiaolin Hu.
\newblock Pruning from scratch, 2019.
\newblock URL \url{https://arxiv.org/abs/1909.12579}.

\bibitem[Younes~Belkada(2022)]{GentleIntroduction8bit}
Tim~Dettmers Younes~Belkada.
\newblock A {{Gentle Introduction}} to 8-bit {{Matrix Multiplication}} for transformers at scale using transformers, accelerate and bitsandbytes, 2022.
\newblock URL \url{https://huggingface.co/blog/hf-bitsandbytes-integration}.

\bibitem[Zhou et~al.(2019)Zhou, Lan, Liu, and Yosinski]{SLTH_main}
Hattie Zhou, Janice Lan, Rosanne Liu, and Jason Yosinski.
\newblock Deconstructing lottery tickets: Zeros, signs, and the supermask.
\newblock In \emph{Advances in Neural Information Processing Systems}, pages 2599--2610, 2019.
\newblock URL \url{https://arxiv.org/abs/1905.01067}.

\end{thebibliography}

\newpage

\appendix

    \section{NPP and RSSP Results}
    \label{chap:npp_rssp}
    We start by stating the result by \cite{Borgs2001} on NPP. Define $I_{n,z}$ as
\begin{equation}
    Z_{n,z} = 2^n I_{n,z} \times \begin{cases}
        1\;\; \text{if} \;z=0\\
        2\;\; \text{if} \;z>0.
    \end{cases}
    \label{eq:ZLrel}
\end{equation}
\begin{theorem}
\label{thm:Borgs_main}
Let $C_0 > 0$ be a finite constant, let $M = M(n)$ be an arbitrary function of $n$, let
\begin{align*}
    \gamma_n = \frac{1}{M\sqrt{2\pi n c_M}}&& \text{where} &&c_M = \mathbb{E}\left(\frac{X^2}{M^2}\right) = \frac{1}{3}+\frac{1}{2M}+\frac{1}{6M^2}~,
\end{align*}
and let $z$ and $z'$ be integers. Then,
\begin{equation*}
    \mathbb{E}[I_{n,z}] = \gamma_n\left(\exp\left(-\frac{z^2}{2nM^2c_M}\right)+O(n^{-1})\right).
\end{equation*}
Furthermore
\begin{multline*}
    \mathbb{E}[I_{n,z}I_{n,z'}] = 2\gamma_n^2\left(\exp\left(-\frac{z^2+(z')^2}{2nM^2c_M}\right)+\mathcal{O}\left(\frac{1}{n}\right)+\mathcal{O}\left(\frac{1}{n\gamma_n 2^n}\right)\right) \\ +\frac{\gamma_n}{2^n}\left( \delta_{z+z',0}+\delta_{z-z',0}\right)\exp\left(-\frac{z^2+(z')^2}{2nM^2c_M}\right)
\end{multline*}    
if $z$ and $z'$ are of the same parity, i.e., both odd or both even, while $\mathbb{E}[I_{n,z}I_{n,z'}]=0$ if $z$ and $z'$ are of different parity.
\end{theorem}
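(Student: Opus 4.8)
The plan is to treat Theorem~\ref{thm:Borgs_main} as a local–central–limit computation carried out by Fourier inversion on the integer lattice. First I would rewrite the normalized counter in probabilistic form. From the definition of $Z_{n,z}$ together with the symmetry $\bm{\sigma}\mapsto-\bm{\sigma}$ one checks that, uniformly in $z$, $I_{n,z}=2^{-n}\#\{\bm{\sigma}\in\{-1,1\}^n:\bm{\sigma}\cdot\bm{X}=z\}$. Averaging over a uniform sign vector $\bm{\sigma}$ independent of $\bm{X}$ then gives $\mathbb{E}[I_{n,z}]=\mathbb{P}\!\left(\sum_{i=1}^n W_i=z\right)$, where the $W_i=\sigma_iX_i$ are i.i.d.\ uniform on $\{-M,\dots,-1,1,\dots,M\}$, a symmetric lattice step with $\mathbb{E}[W_i^2]=\mathbb{E}[X^2]=M^2c_M$. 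Thus the first moment is \emph{exactly} a local probability for a mean-zero walk of variance $nM^2c_M$, and the whole theorem reduces to local-limit estimates.

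For the first moment I would apply Fourier inversion: $\mathbb{E}[I_{n,z}]=\frac{1}{2\pi}\int_{-\pi}^{\pi}\cos(tz)\,\psi(t)^n\,dt$ with $\psi(t)=\frac1M\sum_{x=1}^M\cos(tx)$, a Dirichlet-type kernel obeying $\psi(t)=1-\tfrac12M^2c_M\,t^2+O(M^4t^4)$ near $0$ and $|\psi(t)|=O(1/M)$ for $t$ bounded away from $0$. A Laplace/saddle argument at $t=0$ replaces $\psi(t)^n$ by the Gaussian $\exp(-\tfrac12nM^2c_M t^2)$; extending the integral to $\mathbb{R}$ and evaluating it produces the prefactor $\gamma_n=1/(M\sqrt{2\pi nc_M})$ and the factor $\exp\!\bigl(-z^2/(2nM^2c_M)\bigr)$, while the quartic Taylor correction and the exponentially small tail away from $t=0$ feed the stated $O(n^{-1})$ error.

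For the second moment I would first dispose of the parity dichotomy, which is \emph{exact} rather than asymptotic. Since $\sigma_iX_i\equiv X_i\pmod2$, every signed sum satisfies $\bm{\sigma}\cdot\bm{X}\equiv\Lambda:=\sum_iX_i\pmod2$ independently of $\bm{\sigma}$; hence on any instance both $I_{n,z}$ and $I_{n,z'}$ can be nonzero only when $z\equiv z'\equiv\Lambda\pmod2$, so $I_{n,z}I_{n,z'}\equiv0$ whenever $z,z'$ have different parity and $\mathbb{E}[I_{n,z}I_{n,z'}]=0$. For the same-parity case I would use the planar analogue $\mathbb{E}[I_{n,z}I_{n,z'}]=\frac{1}{(2\pi)^2}\iint_{[-\pi,\pi]^2}\cos(sz)\cos(tz')\,\Phi(s,t)^n\,ds\,dt$, where summing the joint characters over $\bm{\sigma},\bm{\tau}$ collapses the product to $\Phi(s,t)=\mathbb{E}[\cos(sX)\cos(tX)]=\tfrac12\bigl(\psi(s+t)+\psi(s-t)\bigr)$. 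On the torus $\Phi$ attains its maximum value $1$ at exactly the two points $(0,0)$ and $(\pi,\pi)$; expanding around each, and noting that at $(\pi,\pi)$ the oscillatory weight contributes $(-1)^{z+z'}=+1$ in the same-parity regime, the two Laplace contributions add and yield the leading term $2\gamma_n^2\exp\!\bigl(-(z^2+(z')^2)/(2nM^2c_M)\bigr)$, accounting for the factor of $2$ in the statement.

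The remaining terms require a finer partition of $[-\pi,\pi]^2$. Away from the two peaks, $\Phi$ concentrates on the ridges $s\approx\pm t$ (where one of $\psi(s\pm t)$ is $\approx1$ and the other is $O(1/M)$, so $\Phi\approx\tfrac12$); these ridges are the Fourier signature of the self-overlap configurations $\bm{\sigma}=\pm\bm{\tau}$ and, after integrating the transverse Gaussian against the longitudinal oscillation, produce the $O(\gamma_n2^{-n})$ diagonal corrections carrying the Kronecker deltas $\delta_{z+z',0}$ and $\delta_{z-z',0}$, while the deep bulk contributes only $O(M^{-n})$; the higher-order Taylor terms at the peaks supply the $O(1/n)$ and $O(1/(n\gamma_n2^n))$ corrections. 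I expect the main obstacle to be exactly this uniform error control: one must bound the Dirichlet-type kernel $\psi$, hence $\Phi^n$, away from its maxima \emph{uniformly in the growing parameter $M=M(n)$}, handling the number-theoretic sensitivity of $\frac1M\sum_x\cos(ux)$ near rational multiples of $2\pi$, and one must separate the ridge (self-overlap) contribution from the main saddle without double counting so that the prefactors of the $2^{-n}$ terms are recovered exactly. The cases $z=0$ and $z\neq0$ differ only through the symmetry-induced factor in the definition of $I_{n,z}$ and are handled uniformly by the same scheme.
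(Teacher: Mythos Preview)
The paper does not prove Theorem~\ref{thm:Borgs_main}; it is quoted verbatim as a result of \cite{Borgs2001} (introduced in Appendix~\ref{chap:npp_rssp} with ``We start by stating the result by \cite{Borgs2001} on NPP'') and then used as a black box to derive Lemma~\ref{Lemma:Upper_Lower_NPP}. There is therefore no in-paper proof to compare your proposal against.

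That said, your outline is essentially the argument Borgs, Chayes, and Pittel carry out in the cited reference: the representation $\mathbb{E}[I_{n,z}]=\mathbb{P}(\sum_i\sigma_iX_i=z)$, Fourier inversion with the Dirichlet-type character $\psi$, Laplace expansion at the dominant saddle, and for the second moment the two-dimensional integral with kernel $\Phi(s,t)=\tfrac12(\psi(s+t)+\psi(s-t))$ having maxima at $(0,0)$ and $(\pi,\pi)$ plus the ridge contributions from $s\approx\pm t$ that yield the $\delta_{z\pm z',0}$ terms. Your parity argument via $\sigma_iX_i\equiv X_i\pmod2$ is exactly the right mechanism. The one place where your sketch is vague in the same way the original paper has to work hardest is the uniform-in-$M(n)$ control of $\psi$ and $\Phi$ away from the peaks; Borgs et al.\ handle this with an explicit decomposition of the torus into neighborhoods of the saddles, the ridges, and the bulk, and you would need to reproduce that partition carefully to recover the precise error scales $O(1/n)$ and $O(1/(n\gamma_n2^n))$.
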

Now using Theorem \ref{thm:Borgs_main}, we prove Lemma \ref{Lemma:Upper_Lower_NPP}
\begin{proof}[Proof of Lemma \ref{Lemma:Upper_Lower_NPP}]
Consider $z\neq 0$. 
From Theorem \ref{thm:Borgs_main} we have
\begin{equation*}
    \mathbb{E}[I_{n,z}] = \gamma_n\left(\exp\left(-\frac{z^2}{2nM^2c_M}\right)+\mathcal{O}(n^{-1})\right).
\end{equation*}
If we multiply by $2^{n+1}$ we get
\begin{equation}
    \mathbb{E}[Z_{n,z}] = \rho_n\left(\exp\left(-\frac{z^2}{2nM^2c_M}\right)+\mathcal{O}(n^{-1})\right).
    \label{eqEZNL}
\end{equation}
It also follows from the above equation that
\begin{equation}
    \mathbb{E}[Z_{n,z}] \geq \rho_n\exp\left(-\frac{z^2}{2nM^2c_M}\right).
    \label{eqEZNLa}
\end{equation}
Furthermore, from Theorem \ref{thm:Borgs_main} we have
\begin{multline*}
    \mathbb{E}[I_{n,z}I_{n,z'}] = 2\gamma_n^2\left(\exp\left(-\frac{z^2+(z')^2}{2nM^2c_M}\right)+\mathcal{O}\left(\frac{1}{n}\right)+\mathcal{O}\left(\frac{1}{n\gamma_n 2^n}\right)\right) \\ +\frac{\gamma_n}{2^n}\left( \delta_{z+z',0}+\delta_{z-z',0}\right)\exp\left(-\frac{z^2+(z')^2}{2nM^2c_M}\right)
\end{multline*}
If $z=z'$ we get
\begin{equation*}
    \mathbb{E}[I_{n,z}^2] = 2\gamma_n^2\left(\exp\left(-\frac{2z^2}{2nM^2c_M}\right)+\mathcal{O}\left(\frac{1}{n}\right)+\mathcal{O}\left(\frac{1}{n\gamma_n 2^n}\right)\right) +\frac{\gamma_n}{2^n}\exp\left(-\frac{2z^2}{2nM^2c_M}\right)
\end{equation*}
Multiplying by $(2^{n+1})^2$ we get
\begin{equation}
    \mathbb{E}[Z_{n,z}^2] = 2\rho_n^2\left(\exp\left(-\frac{2z^2}{2nM^2c_M}\right)+\mathcal{O}\left(\frac{1}{n}\right)+\mathcal{O}\left(\frac{1}{n\rho_n}\right)\right) +2\rho_n\exp\left(-\frac{2z^2}{2nM^2c_M}\right)
    \label{EZNL2}
\end{equation}
Now using Markov's inequality (Theorem \ref{ineq:markov}, Appendix \ref{chap:ineq}) and Eq. \ref{eqEZNL} we get
\begin{equation*}
    \mathbb{P}(Z_{n,z}>0) \leq \rho_n\left(\exp\left(-\frac{z^2}{2nM^2c_M}\right)+\mathcal{O}\left(\frac{1}{n}\right)\right).
\end{equation*}
Using Cauchy-Schwartz inequality (Theorem \ref{ineq:CS}, Appendix \ref{chap:ineq}) Eq. \ref{eqEZNLa}  and Eq. \ref{EZNL2} we thus get
\begin{align*}
    \mathbb{P}(Z_{n,z}>0) &\geq \frac{\rho_n^2\exp\left(-\frac{2z^2}{2nM^2c_M}\right)}{2\rho_n^2\left(\exp\left(-\frac{2z^2}{2nM^2c_M}\right)+\mathcal{O}\left(\frac{1}{n}\right)+\mathcal{O}\left(\frac{1}{n\rho_n}\right)\right) + 2\rho_n\exp\left(-\frac{2z^2}{2nM^2c_M}\right)}\\
     \Rightarrow \mathbb{P}(Z_{n,z}>0)&\geq \frac{1}{2\left(1+\exp\left(\frac{z^2}{nM^2c_M}\right)\left(\mathcal{O}\left(\frac{1}{n\rho_n}\right)+\mathcal{O}\left(\frac{1}{n}\right)\right)+\frac{1}{\rho_n}\right)}.
\end{align*}
The same calculation can be done for $z=0$, the only difference is that $Z_{n,z}=2^nI_{n,z}$ (Eq. \ref{eq:ZLrel}).
\end{proof}
Before moving ahead, lets establish the equivalence of NPP and SSP by proving Lemma  \ref{lemma:npp_is_rssp}.
\begin{proof}[Proof of Lemma  \ref{lemma:npp_is_rssp}]
    First of all notice that a NPP on the set $\mathbf{X} = (X_1, X_2,\dots,X_n)$ where $X_i$'s are sampled uniformly from $\{-M,\dots,-1,1,\dots,M\}$ can be solved iff the NPP on the set $\mathbf{X} = (|X_1|, |X_2|,\dots,|X_n|)$ sampled uniformly from $\{1,2,\dots,M\}$ can be solved. This is because, first, it is obvious that $\{X_i\}_{i=1}^n$ is distributed uniformly over $\{1,2,\dots,M\}$, and secondly, the NPP does not care about the signs of the numbers, a sign can always be absorbed in the $\sigma_i$ while solving the NPP.
    
    We have an SPP with set $\mathbf{X}$, sampled uniformly from $\{-M,\dots,-1,1,\dots,M\}$ and target $t$. Assume number partitioning problem can be solved, given the set $\mathbf{X}$ and target $\Lambda-2t$. Notice that NPP does not care about the sign of the target, as an NPP with target $k$ can be solved iff that NPP with target $-k$ can be solved. Assume there exists two partitions $S_1$ and $S_2$ of $\mathbf{X}$, with $S_1$ summing to $x$ and $S_2$ summing to $\Lambda-x$, such that $\sum_{i\in S_2}X_i-\sum_{j\in S_1} X_j = (\Lambda-x) -x= \Lambda-2t$, which is equivalent to $\sum_{j\in S_1} j=x=t$. Hence, $S_1$ sums up to $t$, so the given SSP can be solved. The reverse direction also follows from the argument, proving the result.
\end{proof}
\begin{proof}[Proof of Lemma \ref{Lemma_RSSP}]
    Considers the number partitioning problem corresponding to the given random subset sum problem (Lemma \ref{lemma:npp_is_rssp}). The target of this number partitioning problem is  $z=\Lambda-2t$. Consider $z\neq0$. A key observation here is if $\Lambda$ is even (event denoted by $\mathscr{E}_n$), then $z$ is also even and if $\Lambda$ is odd (event denoted by $\mathscr{O}_n$), then $z$ is also odd. The probability that the random subset sum problem can be solved can be written in terms of the probability that the number partitioning problem can be solved
    \begin{equation*}
        \mathbb{P}(Y_{n,t}>0) = \mathbb{P}(\mathscr{E}_n)\mathbb{P}(Z_{n,z}>0|\mathscr{E}_n)+\mathbb{P}(\mathscr{O}_n)\mathbb{P}(Z_{n,z}>0|\mathscr{O}_n)
    \end{equation*}
    Since on $\mathscr{E}_n$, $z$ is always even and on $\mathscr{O}_n$, $z$ is always odd, we have two cases. If $z$ is even, then
    \begin{equation*}
         \mathbb{P}(Z_{n,z}>0)= \mathbb{P}(\mathscr{E}_n)\mathbb{P}(Z_{n,z}>0|\mathscr{E}_n).
    \end{equation*}
    If $z$ is odd, then
    \begin{equation*}
        \mathbb{P}(Z_{n,z}>0)= \mathbb{P}(\mathscr{O}_n)\mathbb{P}(Z_{n,z}>0|\mathscr{O}_n).
    \end{equation*}
 Hence $\mathbb{P}(Y_{n,t}>0)$ can be written as
    \begin{equation*}
        \mathbb{P}(Y_{n,t}>0) = 2\mathbb{P}(Z_{n,z}>0).
    \end{equation*}
    From Lemma \ref{Lemma:Upper_Lower_NPP} it follows that
    \begin{equation*}
    \mathbb{P}(Y_{n,t}>0)\leq 2\rho_n\left(\exp\left(-\frac{z^2}{2nM^2c_M}\right)+\mathcal{O}\left( \frac{1}{n}\right)\right),
\end{equation*}
\begin{equation*}
    \mathbb{P}(Y_{n,t}>0)\geq \frac{1}{\left(1+\exp\left(\frac{z^2}{nM^2c_M}\right)\left(\mathcal{O}\left(\frac{1}{n\rho_n}\right)+\mathcal{O}\left(\frac{1}{n}\right)\right)+\frac{1}{\rho_n}\right)}.
\end{equation*}
Same can be done for $z=0$, only difference is a factor of 2.
\end{proof}
\begin{proof}[Proof of Lemma \ref{Lemma_Prob_Convergence}]
    We are given that $\lim_{n\to\infty}\kappa_n$ exists and is less than 1. Consider a more sensitive parametrization
    \begin{align*}
        \kappa_n = 1-\frac{\log_2n}{2n}+\frac{\lambda_n}{n}&&\text{or}&&M=\frac{2^{n+\lambda_n}}{\sqrt{n}}.
    \end{align*}
    In this parametrization $\lim_{n\to\infty}\kappa_n<1$ means $\lim_{n\to\infty}\lambda_n\to -\infty$. Note that in this regime $\rho_n\to\infty$. Now we have
    \begin{align*}
    \mathbb{P}(Y_{n,t}>0)&\geq \frac{1}{\left(1+\exp\left(\frac{(\Lambda-2t)^2}{nM^2c_M}\right)\left(\mathcal{O}\left(\frac{1}{n\rho_n}\right)+\mathcal{O}\left(\frac{1}{n}\right)\right)+\frac{1}{\rho_n}\right)}.
\end{align*}
Now $t=\mathcal{O}(M)$ and demand $\Lambda$ as
\begin{equation*}
    \Lambda<\frac{1}{\sqrt{3+\beta}}M\sqrt{n\log n}.
\end{equation*}
According to Hoeffding's inequality (Theorem \ref{ineq:hoeff}, Appendix \ref{chap:ineq}), that happens with probability
\begin{align*}
    \mathbb{P}\left(\Lambda<\frac{1}{\sqrt{3+\beta}}M\sqrt{n\log n}\right) &\geq 1-\exp\left(\frac{\frac{1}{{3+\beta}}M^2{n\log n}}{4nM^2}\right)\\
    \Rightarrow \mathbb{P}\left(\Lambda<\frac{1}{\sqrt{3+\beta}}M\sqrt{n\log n}\right) &\geq 1-\frac{1}{n^{\frac{1}{2(3+\beta)}}}.
\end{align*}
Now as $\rho_n\to\infty$, we have
\begin{align*}
    \mathbb{P}(Y_{n,t}>0)&\geq 1-\mathcal{O}\left(\frac{1}{n^{\frac{\beta}{3+\beta}}}\right).
\end{align*}
Let $\mathbb{P}(E)$ be the probability of events ${P}(Y_{n,t}>0)$ and $\Lambda<\frac{1}{\sqrt{3+\beta}}M\sqrt{n\log n}$ happening together. Then by union bound (Theorem \ref{eq:unbd}, Appendix \ref{chap:ineq}) we can say that
\begin{equation*}
    \mathbb{P}(E) \geq 1-\mathcal{O}\left(\frac{1}{n^{\frac{\beta}{3+\beta}}}\right)-\frac{1}{n^{\frac{1}{2(3+\beta)}}}.
\end{equation*}
Note that this probability will converge to 1 fastest if $\beta=\frac{1}{2}$. Hence we choose $\beta=\frac{1}{2}$ and we get
\begin{equation*}
    \mathbb{P}(E) = \left(1-\mathcal{O}\left(\frac{1}{n^{\frac{1}{7}}}\right)\right).
\end{equation*}
\end{proof}

    \section{SLTH-Quantization Results}
    \label{chap:app_main}
    In this appendix, we prove the results related to SLTH and weight quantization. We start by proving Theorem \ref{thm:main}. The idea is to follow the strategy of \cite{pensia}, but use Lemma \ref{lemma_single_wt}.
\begin{lemma}[Representing a single Neuron]
     Consider a randomly initialized $\delta_{\text{in}}$ quantized neural network of the form $g(\bm{x})=[\mathbf{v}^T\sigma(\mathbf{M}\bm{x})]_{\gamma}$ with $\bm{x}\in \mathbb{R}^d$. Assume ${\delta^2_{\text{in}}}\leq \delta_t$ and $\delta^2_{\text{in}}\leq{\delta^2}\leq\delta_t$. Let $f_{\mathbf{w}}(\bm{x})=[\mathbf{w}^T\bm{x}]_{\gamma}$ be a single layered $\delta_t$ quantized network. Let $\mathbf{M}\in \mathbb{R}^{Cd\log_2 \frac{1}{\delta}\times d}$ and $\mathbf{v}\in \mathbb{R}^{Cd\log_2 \frac{1}{\delta}}$. Then the precision of weights of $g$ can be reduced to ${\delta}$, such that with probability atleast
     \begin{equation*}
        1-d\;\mathcal{O}\left(\left(\log_2\frac{1}{\delta}\right)^{-\frac{1}{7}}\right),
    \end{equation*}
    we have
    \begin{equation*}
        \forall \mathbf{w} \in S_\delta^{d}\;\;\exists\;\;\mathbf{s},\mathbf{T}:\;f_{\mathbf{w}}(\bm{x})=[g_{\{\mathbf{s},\mathbf{T}\}}]_{\delta}(\mathbf{x}),
    \end{equation*}
    where $[g_{\{\mathbf{s},\mathbf{T}\}}]_{\delta}(\mathbf{x})$ is the pruned network for a choice of binary vector $\mathbf{s}$ and matrix $\mathbf{T}$,
    \label{lemma_single_neuron}
\end{lemma}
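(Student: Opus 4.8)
\section*{Proof proposal for Lemma~\ref{lemma_single_neuron}}

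The plan is to reduce the representation of a single neuron to $d$ independent instances of the single‑weight problem already solved in Lemma~\ref{lemma_single_wt}, one per input coordinate, carried out on disjoint blocks of hidden units. Write the target neuron as $\mathbf{w}^T\bm{x}=\sum_{j=1}^{d}w_jx_j$. Partition the $Cd\log_2\frac{1}{\delta}$ hidden neurons of $g$ into $d$ consecutive blocks $B_1,\dots,B_d$, each of size $C\log_2\frac{1}{\delta}$, with $C$ chosen large enough (a constant factor of at least $2$ times the constant appearing in Lemma~\ref{lemma_single_wt}, since that lemma uses $\mathbf{u},\mathbf{v}\in\mathbb{R}^{2n}$ with $n>C'\log_2\frac{1}{\delta}$) that each block meets the width hypothesis of Lemma~\ref{lemma_single_wt}. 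The first component of the mask $\mathbf{T}$ on $\mathbf{M}$ prunes, for every row indexed by $B_j$, all columns except column $j$; after this pruning the restriction of $g$ to block $B_j$ is exactly a single‑weight network $x_j\mapsto[\,(\mathbf{v}^{(j)})^T\sigma(\mathbf{u}^{(j)}x_j)\,]_{\gamma}$ in the random vectors $\mathbf{u}^{(j)},\mathbf{v}^{(j)}$ formed by column $j$ of $\mathbf{M}$ restricted to $B_j$ and by the $B_j$‑coordinates of $\mathbf{v}$.

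Next I would apply Lemma~\ref{lemma_single_wt} to each block. Reducing the precision of $\mathbf{u}^{(j)},\mathbf{v}^{(j)}$ to $\delta$ and choosing the internal pruning masks $\mathbf{s}^1_j,\mathbf{s}^2_j$ furnished by that lemma (which also absorb the two signs of $x_j$ through the ReLU identity $w_jx_j=\sigma(w_jx_j)-\sigma(-w_jx_j)$), we obtain, for \emph{any} $w_j\in S_{\delta_t}$, an exact identity $[(\mathbf{v}^{(j)}\odot\mathbf{s}^2_j)^T\sigma(\mathbf{u}^{(j)}\odot\mathbf{s}^1_j)]_{\delta}(x_j)=w_jx_j$, valid with probability at least $1-\mathcal{O}\!\left(\left(\log_2\frac{1}{\delta}\right)^{-\frac{1}{7}}\right)$ over the initialization of that block. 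Collecting the $\mathbf{s}^1_j$ into the remaining entries of $\mathbf{T}$ and the $\mathbf{s}^2_j$ into $\mathbf{s}$, and summing over the disjoint blocks, the pruned precision‑reduced network computes $\sum_{j=1}^{d}w_jx_j=\mathbf{w}^T\bm{x}$ before the final $[\cdot]_{\gamma}$; applying the layer quantization then gives $[g_{\{\mathbf{s},\mathbf{T}\}}]_{\delta}(\bm{x})=[\mathbf{w}^T\bm{x}]_{\gamma}=f_{\mathbf{w}}(\bm{x})$.

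For the probability bound, the key observation is that Lemma~\ref{lemma_single_wt} already delivers a \emph{uniform} guarantee over all admissible single weights: once block $B_j$ succeeds, \emph{every} $w_j\in S_{\delta_t}$ can be realized purely by the choice of masks. Hence the event ``for all $\mathbf{w}\in S_{\delta_t}^{d}$ the neuron can be represented'' is precisely the intersection over $j$ of the $d$ block‑success events, and a union bound (Theorem~\ref{eq:unbd}) yields a failure probability at most $d\,\mathcal{O}\!\left(\left(\log_2\frac{1}{\delta}\right)^{-\frac{1}{7}}\right)$, as claimed. I expect the main point requiring care to be exactly this quantifier bookkeeping — verifying that one does not pay a union bound over the exponentially many target vectors $\mathbf{w}$, which relies on the random subset‑sum guarantee behind Lemma~\ref{lemma_single_wt} holding simultaneously for all targets in range — together with checking that the two layers of pruning (the block‑selection pruning of $\mathbf{M}$ and the internal single‑weight masks) compose into one valid binary mask, and that the per‑weight ReLU sign decomposition remains consistent across the a priori unknown signs of the coordinates $x_j$.
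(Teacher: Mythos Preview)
Your proposal is correct and follows essentially the same approach as the paper: prune $\mathbf{M}$ to a block-diagonal matrix so that each of the $d$ blocks (of size $C\log_2\frac{1}{\delta}$) handles one input coordinate via Lemma~\ref{lemma_single_wt}, then take a union bound over the $d$ block-success events. Your discussion of the quantifier bookkeeping (that the single-weight guarantee is already uniform over all admissible targets, so no union bound over $\mathbf{w}$ is needed) is more explicit than the paper's own presentation, but the argument is the same.
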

\begin{proof} Assume weights of $g$ are of precision $\delta$. We prove the required results by representing each weight of the neuron using Lemma \ref{lemma_single_wt} (See Figure \ref{fig:f2}, Appendix \ref{chap:figs}).\newline
\noindent \textbf{Step 1:} We first prune $\mathbf{M}$ to create a block-diagonal matrix $\mathbf{M}'$. Specifically, we create M by only keeping the following non-zero entries:
    \begin{equation*}
        \begin{bmatrix}
            \mathbf{u}_1 & 0&\cdots&0\\
             0 & \mathbf{u}_2&\cdots&0\\
             \vdots & \vdots & \ddots & \vdots \\
             0 & 0&\cdots&\mathbf{u}_d
        \end{bmatrix},\;\;\;\;\;\;\text{where}\;\; \mathbf{u}_i\in \mathbb{R}^{C\log_2 \frac{1}{\delta}}.
    \end{equation*}
    We choose the binary matrix $\mathbf{T}$ to be such that $\mathbf{M}'=\mathbf{T}\odot \mathbf{M}$. We also decompose $\mathbf{v}$ and $\mathbf{s}$ as
    \begin{equation*}
        \mathbf{s} = \begin{bmatrix}
            \mathbf{s}_1 \\
             \mathbf{s}_2\\
             \vdots\\
             \mathbf{s}_d
        \end{bmatrix},\;\;\;\;\mathbf{v} = \begin{bmatrix}
            \mathbf{v}_1 \\
             \mathbf{v}_2\\
             \vdots\\
             \mathbf{v}_d
        \end{bmatrix},\;\;\text{where}\;\; \mathbf{s}_i,\mathbf{v}_i\in \mathbb{R}^{C\log_2 \frac{1}{\delta}}.
    \end{equation*}
    \textbf{Step 2:} Consider the event
    \begin{equation*}
        E_i :\;\; [w_ix_i]= [(\mathbf{v}_i\odot \mathbf{s}_i)^T\sigma(\mathbf{u_i}x_i)].
    \end{equation*}
    According to Lemma \ref{lemma_single_wt}, this event happens with probability 
    \begin{equation*}
        p = 1-\mathcal{O}\left(\left(\log_2\frac{1}{\delta}\right)^{-\frac{1}{7}}\right).
    \end{equation*}
    The event (say $E$) in the assumption of Lemma \ref{lemma_single_neuron} corresponds with the intersection of these events $E=\cap_{i=1}^d E_i$. By taking a union bound (Theorem \ref{eq:unbd}, Appendix \ref{chap:ineq}), $E$ happens with a probability $dp-(d-1)$, which is equal to
    \begin{equation*}
        1-d\;\mathcal{O}\left(\left(\log_2\frac{1}{\delta}\right)^{-\frac{1}{7}}\right).
    \end{equation*}
    The process is illustrated in Figure \ref{fig:f2}. Note that we want $>\log_2(\frac{1}{\delta})$ samples to be assured that a RSSP is solved with high probability, but we include that in the constant $C$. Any extra factors (a factor of 2 for example) is also absorbed in $C$ throughout the proof.
\end{proof}
\begin{lemma}[Representing a single layer]
    \label{lemma_layer}
    Consider a randomly initialized $\delta_{\text{in}}$ quantized two layer neural network of the form $g(\mathbf{x})=[\mathbf{N}\sigma(\mathbf{Mx})]_{\gamma}$ with $x\in \mathbb{R}^{d_1}$. Assume ${\delta^2_{\text{in}}}\leq \delta_t$ and $\delta^2_{\text{in}}\leq{\delta^2}\leq\delta_t$.  Let $f_{\mathbf{W}}(\mathbf{x})=[\mathbf{W}\mathbf{x}]_{\gamma}$ be a single layered $\delta_t$ quantized network. Assume $\mathbf{N}$ has dimension $d_2\times C d_1\log_2\frac{1}{\delta}$ and $\mathbf{M}$ has dimension $Cd_1\log_2 \frac{1}{\delta}\times d_1$. Then the precision of weights of $g$ can be reduced to to ${\delta}$, such that with probability atleast
    \begin{equation*}
        1-d_1d_2\;\mathcal{O}\left(\left(\log_2\frac{1}{\delta}\right)^{-\frac{1}{7}}\right),
    \end{equation*}
    \begin{equation*}
         \forall \;\mathbf{W} \in S_{\delta_1}^{d_1\times d_2}\;\;\exists\;\;\mathbf{S},\mathbf{T}:\;f_{\mathbf{W}}(\mathbf{x})=[g_{\{\mathbf{S},\mathbf{T}\}}]_{\delta}(\mathbf{x}),
     \end{equation*}
      where $[g_{\{\mathbf{S},\mathbf{T}\}}]_{\delta}(\mathbf{x})$ is the pruned network for a choice of pruning matrices $\mathbf{S}$ and $\mathbf{T}$.
\end{lemma}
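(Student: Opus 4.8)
The plan is to represent $f_{\mathbf{W}}$ one output neuron at a time, but — crucially — with all $d_2$ output neurons \emph{sharing} the single hidden layer of width $Cd_1\log_2\frac{1}{\delta}$, which is what makes the stated width (rather than $O(d_1d_2\log_2\frac1\delta)$) suffice. Write $\mathbf{w}_1,\dots,\mathbf{w}_{d_2}\in S_{\delta_t}^{d_1}$ for the rows of $\mathbf{W}$, so that coordinate $j$ of $f_{\mathbf{W}}(\mathbf{x})$ is $f_{\mathbf{w}_j}(\mathbf{x})=[\mathbf{w}_j^{T}\mathbf{x}]_\gamma$. As in Step~1 of Lemma~\ref{lemma_single_neuron}, I first fix a mask $\mathbf{T}$ that prunes $\mathbf{M}$ to a block-diagonal $\mathbf{M}'=\mathbf{T}\odot[\mathbf{M}]_\delta$ with $d_1$ blocks, the $k$-th block consisting of $C\log_2\frac{1}{\delta}$ hidden units depending only on the input coordinate $x_k$, and I further split each block into a part keeping only its positive hidden weights and a part keeping only its negative ones — this realizes the ReLU sign-decomposition $wx=\sigma(wx)-\sigma(-wx)$ used in Lemma~\ref{lemma_single_wt}. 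The point is that $\mathbf{T}$ depends only on $\mathbf{M}$ (its block structure and the signs of its entries), not on the target $\mathbf{W}$ or on the output index $j$, so the same hidden activations $\sigma(\mathbf{M}'\mathbf{x})$ feed every output neuron; all target- and output-dependent choices are then confined to the mask $\mathbf{S}$ applied to $\mathbf{N}$, whose rows are pruned independently, one per output neuron.

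Next, fix $j\in[d_2]$ and $k\in[d_1]$ and consider the scalar sub-network taking $x_k$ through the $k$-th block into output $j$, namely $x_k\mapsto\sum_{i\in\mathrm{block}(k)}N_{ji}\,\sigma(u_i x_k)$. This is exactly a single-weight network of the form handled by Lemma~\ref{lemma_single_wt}, with hidden vector $(u_i)_{i\in\mathrm{block}(k)}$ and output vector $(N_{ji})_{i\in\mathrm{block}(k)}$; choosing $C$ large enough, each block retains more than $C_0\log_2\frac{1}{\delta}$ units of each sign after the sign-split, where $C_0$ is the constant required by Lemma~\ref{lemma_single_wt} (the factors lost to the sign-split and to the rejection sampling that extracts uniformly distributed products are absorbed into $C$). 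Hence, after reducing the weights to precision $\delta$, Lemma~\ref{lemma_single_wt} gives, with probability at least $1-\mathcal{O}\bigl((\log_2\tfrac{1}{\delta})^{-1/7}\bigr)$, a choice of the block-$k$ entries of the $j$-th row of $\mathbf{S}$ realizing the event $E_{jk}:\ W_{jk}x_k=\sum_{i\in\mathrm{block}(k)}s_{ji}N_{ji}\sigma(u_i x_k)$ exactly; the target $W_{jk}\in S_{\delta_t}$ is attainable as such a subset sum because $\delta^2\leq\delta_t$. On $\bigcap_{k}E_{jk}$ the block contributions add up so that coordinate $j$ of $[g_{\{\mathbf{S},\mathbf{T}\}}]_\delta(\mathbf{x})$ equals $[\sum_k W_{jk}x_k]_\gamma=(f_{\mathbf{W}}(\mathbf{x}))_j$ for every input $\mathbf{x}$.

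Finally, combine the $d_1d_2$ events $E_{jk}$. They are not independent — the hidden weights $\{u_i\}$ are common to all $d_2$ output neurons — but a union bound (Theorem~\ref{eq:unbd}) needs no independence, so $\mathbb{P}\bigl(\bigcap_{j\in[d_2],\,k\in[d_1]}E_{jk}\bigr)\geq 1-d_1d_2\,\mathcal{O}\bigl((\log_2\tfrac{1}{\delta})^{-1/7}\bigr)$, which is the claimed probability, and on this event the masks $\mathbf{S},\mathbf{T}$ constructed above yield $f_{\mathbf{W}}(\mathbf{x})=[g_{\{\mathbf{S},\mathbf{T}\}}]_\delta(\mathbf{x})$ (the quantifier over $\mathbf{W}$ being inherited from the ``for any $w$'' form of Lemma~\ref{lemma_single_wt}). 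The step I expect to demand the most care is precisely the hidden-layer sharing: one must verify that \emph{every} piece of pruning that depends on the target entries or on the output index can be pushed into $\mathbf{S}$, so that $\mathbf{T}$ — hence $\mathbf{M}'$ and the shared activations — remains fixed across all $d_2$ neurons and the $d_2$ separate invocations of the single-neuron argument impose no conflicting requirements on the common matrix $\mathbf{M}$; everything else is a routine union bound over the per-weight applications of Lemma~\ref{lemma_single_wt}, exactly as in Lemma~\ref{lemma_single_neuron}.
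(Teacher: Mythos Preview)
Your proposal is correct and follows essentially the same route as the paper: block-diagonalize $\mathbf{M}$ via $\mathbf{T}$, then invoke the single-weight Lemma~\ref{lemma_single_wt} for each of the $d_1d_2$ pairs $(j,k)$ and union-bound. You are in fact more careful than the paper on the one delicate point---the paper's proof writes ``pruning $\mathbf{u}_i$ and $\mathbf{v}_{i,j}$'' without explaining why the first-layer pruning of $\mathbf{u}_i$ (the sign-split and any rejection step) does not conflict across the $d_2$ outputs that share it, whereas you explicitly argue that everything target- or output-dependent can be pushed into the row-specific mask $\mathbf{S}$ so that $\mathbf{T}$ stays fixed.
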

\begin{proof}
    Assume weights of $g$ are of precision $\delta$. We first prune $\mathbf{M}$ to get a block diagonal matrix $\mathbf{M}'$
    \begin{equation*}
        \mathbf{M}'=\begin{bmatrix}
            \mathbf{u}_1 & 0&\cdots&0\\
             0 & \mathbf{u}_2&\cdots&0\\
             \vdots & \vdots & \ddots & \vdots \\
             0 & 0&\cdots&\mathbf{u}_{d_1}
        \end{bmatrix},\;\;\;\;\;\;\text{where}\;\; \mathbf{u}_i\in \mathbb{R}^{C\log_2 \frac{1}{\delta}}.
    \end{equation*}
    Thus, $\mathbf{T}$ is such that $\mathbf{M}'=\mathbf{T}\odot \mathbf{M}$. We also decompose $\mathbf{N}$ and $\mathbf{S}$ as following
    \begin{equation*}
    \mathbf{S}=\begin{bmatrix}
            \mathbf{s}_{1,1}^T &\cdots&\mathbf{s}_{1,d_1}^T\\
            \mathbf{s}_{2,1}^T &\cdots&\mathbf{s}_{2,d_1}^T\\
             \vdots & \ddots & \vdots \\
             \mathbf{s}_{d_2,1}^T &\cdots&\mathbf{s}_{d_2,d_1}^T
        \end{bmatrix},\;\;\;\;\mathbf{N}=\begin{bmatrix}
            \mathbf{v}_{1,1}^T &\cdots&\mathbf{v}_{1,d_1}^T\\
            \mathbf{v}_{2,1}^T &\cdots&\mathbf{v}_{2,d_1}^T\\
             \vdots & \ddots & \vdots \\
             \mathbf{v}_{d_2,1}^T &\cdots&\mathbf{v}_{d_2,d_1}^T
        \end{bmatrix},\;\;\;\;\text{where}\; \mathbf{v}_{i,j}, \mathbf{s}_{i,j}\in \mathbb{R}^{C\log_2 \frac{1}{\delta}}.
    \end{equation*}
    Now note that pruning $\mathbf{u}_i$ and $\mathbf{v}_{i,j}$ (using $\mathbf{s}_{i,j}$) is equivalent to Lemma \ref{lemma_single_neuron}. Hence it's simply an application of Lemma \ref{lemma_single_wt} $d_1d_2$ times. Hence the event in assumption of Lemma \ref{lemma_layer} occurs with a probability $d_1d_2p-(d_1d_2-1)$, by a union bound (Theorem \ref{eq:unbd}, Appendix \ref{chap:ineq}), which is equal to
    \begin{equation*}
        1-d_1d_2\;\mathcal{O}\left(\left(\log_2\frac{1}{\delta}\right)^{-\frac{1}{7}}\right).
    \end{equation*}
    The process is illustrated in Figure \ref{fig:f3}, Appendix \ref{chap:figs}. Note that we want $>\log_2(\frac{1}{\delta})$ samples to be assured that a RSSP is solved with high probability, but we include that in the constant $C$. Constant Factors also absorbed in $C$.
\end{proof}

\begin{proof}[Proof of Theorem \ref{thm:main}.]
    Now we can see that Theorem \ref{thm:main} can be proved by applying Lemma \ref{lemma_layer} layer wise, where two layers of the large network represent one layer of the target. Note that the precision is set of $\delta_1$ after every layer (of the large network) and precision is set of $\delta_1$ after every layer (of the target network). Let the total number of parameters in the target network be $N_t$, i.e.,
    \begin{equation*}
        N_t=\sum_{i=1}^{l-1}d_id_{i+1}.
    \end{equation*}
    Then the event in assumption of Theorem \ref{thm:main}, by union bound (Theorem \ref{eq:unbd}, Appendix \ref{chap:ineq}), occurs with a probability $N_tp-(N_t-1)$, where which is equal to
    \begin{equation*}
        1-N_t\;\mathcal{O}\left(\left(\log_2\frac{1}{\delta}\right)^{-\frac{1}{7}}\right).
    \end{equation*}
\end{proof}

\subsection*{This construction improves the depth}
In this subsection, we adapt construction by \cite{Burkholz} to prove Theorem \ref{thm:main_2}. The process is illustrated in Figure \ref{fig:f4}, Appendix \ref{chap:figs}.
\begin{lemma}
    \label{Bruk_lemma_layer}
    Consider a randomly initialized $\delta_{\text{in}}$ quantized two layered neural network $g(\mathbf{x})=[\mathbf{N}\sigma(\mathbf{Mx})]_{\gamma}$ with $\mathbf{x}\in \mathbb{R}^{d_1}$, whose weights are sampled uniformly from $\{-1,\dots,-\delta,\delta,\dots,1\}$. Assume ${\delta^2_{\text{in}}}\leq \delta_t$ and $\delta^2_{\text{in}}\leq{\delta^2}\leq\delta_t$. Let 
    \begin{equation*}
        f_{\mathbf{W}}(\mathbf{x})=\begin{bmatrix}
            [\mathbf{W}\mathbf{x}]_{\gamma}\\
            [\mathbf{W}\mathbf{x}]_{\gamma}\\
            \vdots\\
            [\mathbf{W}\mathbf{x}]_{\gamma}
        \end{bmatrix}
    \end{equation*} 
    be a single layered $\delta_t$ quantized network where $\mathbf{W}\mathbf{x}$ is repeated $\log_2(\frac{1}{\delta})$ times and $\mathbf{W}$ has dimension $d_1\times d_2$. If $\mathbf{N}$ has dimension $d_2\log_2\frac{1}{\delta}\times C d_1\log_2\frac{1}{\delta}$ and $\mathbf{M}$ has dimension $Cd_1\log_2 \frac{1}{\delta}\times d_1$. Then the precision of weights of $g$ can be reduced to to $\delta$, such that with probability
    \begin{equation*}
        1-d_1d_2\log_2\left(\frac{1}{\delta}\right)\;\mathcal{O}\left(\left(\log_2\frac{1}{\delta}\right)^{-\frac{1}{7}}\right).
    \end{equation*}
    we have
    \begin{equation*}
         \forall\;\mathbf{W} \in S_\delta^{d_1\times d_2}\;\;\exists\;\;\mathbf{S},\mathbf{T}:\;f_{\mathbf{W}}(\mathbf{x})=[g_{\{\mathbf{S},\mathbf{T}\}}]_{\delta}(\mathbf{x}),
     \end{equation*}
     where $[g_{\{\mathbf{S},\mathbf{T}\}}]_{\delta}(\mathbf{x})$ is the pruned network for a choice of pruning matrices $\mathbf{S}$ and $\mathbf{T}$.
\end{lemma}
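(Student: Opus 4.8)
The plan is to follow the proof of Lemma~\ref{lemma_layer} almost verbatim, the only new ingredient being that the second matrix of the gadget is made $\log_2(1/\delta)$ times taller so that it can emit $\log_2(1/\delta)$ copies of the target layer's output. First I would prune $\mathbf{M}$ via the mask $\mathbf{T}$ to a block-diagonal matrix $\mathbf{M}'=\mathbf{T}\odot\mathbf{M}$ with $d_1$ blocks $\mathbf{u}_1,\dots,\mathbf{u}_{d_1}$, each $\mathbf{u}_i\in\mathbb{R}^{C\log_2(1/\delta)}$, so that the output of the first layer is the concatenation of the chunks $\sigma(\mathbf{u}_i x_i)$, $i\in[d_1]$, exactly as in Lemmas~\ref{lemma_single_neuron} and~\ref{lemma_layer}. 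The new bookkeeping concerns the second layer: $\mathbf{N}$ now has $d_2\log_2(1/\delta)$ rows, which I group into $\log_2(1/\delta)$ blocks of $d_2$ rows each, the $r$-th block being responsible for the $r$-th copy of $[\mathbf{W}\mathbf{x}]_\gamma$. Writing $\mathbf{v}_{r,j,i},\mathbf{s}_{r,j,i}\in\mathbb{R}^{C\log_2(1/\delta)}$ for the restrictions of $\mathbf{N}$ and of the second-layer mask to the $j$-th row of block $r$ and to the columns of chunk $i$, the point — already used in Lemma~\ref{lemma_layer} — is that a given first-layer chunk $\sigma(\mathbf{u}_i x_i)$ can be reused by every output neuron that reads it, since the masks in distinct rows of $\mathbf{N}$ are chosen independently and may freely keep overlapping columns; here there are simply $d_2\log_2(1/\delta)$ such output neurons instead of $d_2$.

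Next, for every triple $(r,j,i)\in[\log_2(1/\delta)]\times[d_2]\times[d_1]$ I would invoke Lemma~\ref{lemma_single_wt} on the pair $(\mathbf{u}_i,\mathbf{v}_{r,j,i})$ after lowering the precision of $g$ to $\delta$: with probability at least $1-\mathcal{O}\bigl((\log_2(1/\delta))^{-1/7}\bigr)$ there is a binary vector $\mathbf{s}_{r,j,i}$ with $(\mathbf{v}_{r,j,i}\odot\mathbf{s}_{r,j,i})^T\sigma(\mathbf{u}_i x_i)=W_{ji}x_i$, the positive/negative split of each chunk being handled internally by Lemma~\ref{lemma_single_wt}. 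On the intersection of these events over $i\in[d_1]$, summing over $i$ gives $\sum_{i=1}^{d_1}(\mathbf{v}_{r,j,i}\odot\mathbf{s}_{r,j,i})^T\sigma(\mathbf{u}_i x_i)=\sum_{i=1}^{d_1}W_{ji}x_i=(\mathbf{W}\mathbf{x})_j$, so that, before the outer $[\cdot]_\gamma$, the $j$-th coordinate of the $r$-th block of the pruned network with $\delta$-precision weights equals $(\mathbf{W}\mathbf{x})_j$; hence every one of the $\log_2(1/\delta)$ blocks equals $\mathbf{W}\mathbf{x}$, and applying $[\cdot]_\gamma$ shows $[g_{\{\mathbf{S},\mathbf{T}\}}]_\delta(\mathbf{x})=f_{\mathbf{W}}(\mathbf{x})$. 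A union bound (Theorem~\ref{eq:unbd}) over the $d_1 d_2\log_2(1/\delta)$ triples then gives the stated success probability $1-d_1 d_2\log_2(1/\delta)\,\mathcal{O}\bigl((\log_2(1/\delta))^{-1/7}\bigr)$; the requirement that each chunk retain more than $\log_2(1/\delta)$ usable samples after rejection sampling, together with all constant factors, is absorbed into $C$.

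The one point needing care — and the only non-routine step — is the legitimacy of reusing a single first-layer chunk $\sigma(\mathbf{u}_i x_i)$ simultaneously for all $d_2\log_2(1/\delta)$ output neurons that read chunk $i$: one must check that Lemma~\ref{lemma_single_wt} applies to each triple $(r,j,i)$ even though the first-layer weights $\mathbf{u}_i$ are now frozen across those neurons. This is handled exactly as in Lemma~\ref{lemma_layer}: conditionally on $\mathbf{u}_i$, the subset-sum instance attached to $(r,j,i)$ is driven by the fresh, mutually independent second-layer weights $\mathbf{v}_{r,j,i}$, so the chunk-wise rejection-sampling step still leaves more than $\log_2(1/\delta)$ uniformly distributed values and Lemma~\ref{Lemma_Prob_Convergence} can be applied; since the union bound over triples does not require the instances to be mutually independent, the argument goes through despite the sharing.
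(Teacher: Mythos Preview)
Your proposal is correct and follows essentially the same approach as the paper: prune $\mathbf{M}$ to a block-diagonal matrix with $d_1$ chunks of size $C\log_2(1/\delta)$, partition the rows of $\mathbf{N}$ into $\log_2(1/\delta)$ blocks of $d_2$ rows each (the paper indexes these as $\mathbf{N}_k$, $\mathbf{S}_k$), and then invoke Lemma~\ref{lemma_single_wt} once per triple to obtain the claimed union-bound probability. Your extra paragraph on the legitimacy of reusing a shared first-layer chunk across many output neurons is a point the paper leaves implicit, so if anything your write-up is slightly more careful than the original.
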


\begin{proof}
    Assume weights of $g$ are of precision $\delta$. We first prune $\mathbf{M}$ to get a block diagonal matrix $\mathbf{M}'$
    \begin{equation*}
        \mathbf{M}'=\begin{bmatrix}
            \mathbf{u}_1 & 0&\cdots&0\\
             0 & \mathbf{u}_2&\cdots&0\\
             \vdots & \vdots & \ddots & \vdots \\
             0 & 0&\cdots&\mathbf{u}_{d_1}
        \end{bmatrix},\;\;\;\;\;\;\text{where}\;\; \mathbf{u}_i\in \mathbb{R}^{C\log_2 \frac{1}{\delta}}.
    \end{equation*}
    Thus, $\mathbf{T}$ is such that $\mathbf{M}'=\mathbf{T}\odot \mathbf{M}$. We also decompose $\mathbf{N}$ and $\mathbf{S}$ as following
    \begin{align*}
    \mathbf{S}&= 
    \begin{bmatrix}
    \mathbf{S}_1\\
    \mathbf{S}_2\\
    \vdots\\
    \mathbf{S}_{\log_2\left(\frac{1}{\delta}\right)}
    \end{bmatrix}&
    \mathbf{N}&= 
    \begin{bmatrix}
    \mathbf{N}_1\\
    \mathbf{N}_2\\
    \vdots\\
    \mathbf{N}_{\log_2\left(\frac{1}{\delta}\right)}
    \end{bmatrix}
    \end{align*}
    where
    \begin{align*}
    \mathbf{S}_k&=\begin{bmatrix}
            (\mathbf{s}_{1,1}^T)_k
            &\cdots&(\mathbf{s}_{1,d_1}^T)_k\\
            (\mathbf{s}_{2,1}^T)_k &\cdots&(\mathbf{s}_{2,d_1}^T)_k\\
             \vdots & \ddots & \vdots \\
             (\mathbf{s}_{d_2,1}^T)_k &\cdots&(\mathbf{s}_{d_2,d_1}^T)_k
        \end{bmatrix},&
        \mathbf{N}&=\begin{bmatrix}
            (\mathbf{v}_{1,1}^T)_k &\cdots&(\mathbf{v}_{1,d_1}^T)_k\\
            (\mathbf{v}_{2,1}^T)_k &\cdots&(\mathbf{v}_{2,d_1}^T)_k\\
             \vdots & \ddots & \vdots \\
             (\mathbf{v}_{d_2,1}^T)_k &\cdots&(\mathbf{v}_{d_2,d_1}^T)_k
        \end{bmatrix},\\
        &\text{and}\; (\mathbf{v}_{i,j})_k, (\mathbf{s}_{i,j})_k\in \mathbb{R}^{C\log_2 \frac{1}{\delta}}.
    \end{align*}
    Now note that pruning $\mathbf{u}_i$ and $(\mathbf{v}_{i,j})_k$ (using $(\mathbf{s}_{i,j})_k$) is equivalent to Lemma \ref{lemma_single_neuron}. Hence it's simply an application of Lemma \ref{lemma_single_wt} $d_1d_2\log_2\left(\frac{1}{\delta}\right)$ times. Hence the event in assumption of Lemma \ref{Bruk_lemma_layer} occurs with a probability 
    \begin{equation*}
        1-d_1d_2\log_2\left(\frac{1}{\delta}\right)\;\mathcal{O}\left(\left(\log_2\frac{1}{\delta}\right)^{-\frac{1}{7}}\right),
    \end{equation*}
    using the union bound (Theorem \ref{eq:unbd}, Appendix \ref{chap:ineq}).
\end{proof}
\begin{proof} [Proof of Theorem \ref{thm:main_2}]
    In Lemma \ref{Bruk_lemma_layer} we represented the first layer of the target network, with a difference that output contains many copies. The rest of the proof is same is \cite{Burkholz}. These copies can be used to represent weights in the next layer. The argument follows iteratively for all layers until we reach the last layer, where copying is not required. The only key difference is that rejection sampling is not required, giving the required size free of any undetermined constants. The process is illustrated in Figure \ref{fig:f3}. The event in the assumption of Theorem \ref{thm:main_2} happens with probability
    \begin{equation*}
        1-N_t\;\log_2\left(\frac{1}{\delta}\right)\;\mathcal{O}\left(\left(\log_2\frac{1}{\delta}\right)^{-\frac{1}{7}}\right)
    \end{equation*}
\end{proof}

\subsection*{Lower Bound by Parameter Counting}
Here we prove Theorem \ref{thm:lower} which follows by a parameter counting in the discrete setting.
\begin{proof}[Proof of Theorem \ref{thm:lower}]
    Two matrices represent the same function iff all their elements are the same. Therefore, the number of functions in $\mathcal{F}$ is
    \begin{equation*}
        \left(\frac{2}{\delta}+1\right)^{d^2}.
    \end{equation*}
    Let the number of non zero parameters in $g$ be $\alpha$, then the number of functions in $\mathcal{G}$ is $2^\alpha$. Now for the assumption of Theorem \ref{thm:lower} to hold, we must have
    \begin{align*}
        2^\alpha&\geq p\left(\frac{2}{\delta}+1\right)^{d^2}\\
        \implies \alpha &\geq \log_2p+d^2\log_2\left(\frac{2}{\delta}+1\right).
    \end{align*}
\end{proof}
Corollary \ref{cor} is an immediate consequence of Theorem \ref{thm:lower}.

    \section{Inequalities}
    \label{chap:ineq}
    \begin{theorem}
    For a non-negative, integer-valued random variable $X$ we have
\begin{equation*}
    \mathbb{P}(X>0)\leq\mathbb{E}[X].
\end{equation*}
    \label{ineq:markov}
\end{theorem}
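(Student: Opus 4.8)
The plan is to exploit the fact that $X$ is integer-valued, which collapses the event $\{X>0\}$ to $\{X\ge 1\}$ and yields a clean pointwise domination of the indicator by the random variable itself. Concretely, I would first establish the deterministic inequality
\begin{equation*}
    \mathbb{I}(X>0)\le X \qquad \text{(pointwise, i.e.\ for every outcome)}.
\end{equation*}
To see this, I would split into the two possible cases dictated by integer-valuedness. If $X=0$ at a given outcome, then the indicator equals $0$ and the inequality reads $0\le 0$. If instead $X>0$, then since $X$ takes integer values we must have $X\ge 1$, so the indicator equals $1$ and the inequality reads $1\le X$, which holds. There are no other cases because $X$ is non-negative and integer-valued, so these two exhaust all outcomes.

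Next I would take expectations on both sides of the pointwise inequality, using monotonicity of expectation. Since $\mathbb{E}[\mathbb{I}(X>0)]=\mathbb{P}(X>0)$ by definition of the expectation of an indicator, this immediately gives
\begin{equation*}
    \mathbb{P}(X>0)=\mathbb{E}[\mathbb{I}(X>0)]\le \mathbb{E}[X],
\end{equation*}
which is exactly the claimed bound. Monotonicity of expectation is legitimate here precisely because $X\ge 0$, so $\mathbb{E}[X]$ is well defined (possibly $+\infty$, in which case the bound is trivial).

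I do not anticipate any genuine obstacle: the only subtlety worth flagging is that the argument relies essentially on \emph{both} hypotheses, non-negativity (to guarantee $\mathbb{E}[X]$ is meaningful and to rule out negative contributions) and integer-valuedness (to upgrade $X>0$ to $X\ge 1$, which is what makes the indicator bounded above by $X$ rather than merely by $X/\epsilon$ as in the generic Markov inequality). The statement is really the special, sharpest instance of Markov's inequality at threshold $1$ for lattice-valued variables, and this is exactly the form later invoked to bound $\mathbb{P}(Z_{n,z}>0)\le \mathbb{E}[Z_{n,z}]$ for the solution-counting variables, so I would phrase the proof to make that application transparent.
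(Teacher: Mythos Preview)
Your proof is correct and complete. The paper itself does not supply a proof of this theorem---it is simply listed in Appendix~\ref{chap:ineq} as a standard inequality---so there is nothing to compare against; your argument via the pointwise bound $\mathbb{I}(X>0)\le X$ is exactly the textbook justification one would expect, and your remark about its role in bounding $\mathbb{P}(Z_{n,z}>0)$ is apt.
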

\begin{theorem}
    If $X>0$ is a random variable with finite variance, then
    \begin{equation*}
    \mathbb{P}(X>0)\geq \frac{(\mathbb{E}[X])^2}{\mathbb{E}[X^2]}.
    \end{equation*}
    \label{ineq:CS}
\end{theorem}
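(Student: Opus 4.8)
The plan is to derive the bound as a direct consequence of the Cauchy--Schwarz inequality applied to $X$ and the indicator of the event $\{X>0\}$; this is the standard \emph{second moment method} argument. The key starting observation is that, because $X$ is non-negative, it vanishes precisely off the event $\{X>0\}$, so the pointwise identity $X = X\,\mathbb{I}(X>0)$ holds almost surely. This reshapes the first moment of $X$ into a form to which Cauchy--Schwarz applies directly.

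First I would record $\mathbb{E}[X] = \mathbb{E}[X\,\mathbb{I}(X>0)]$. Then I would apply Cauchy--Schwarz to the pair $X$ and $\mathbb{I}(X>0)$, giving
$$\mathbb{E}[X\,\mathbb{I}(X>0)]^2 \le \mathbb{E}[X^2]\,\mathbb{E}[\mathbb{I}(X>0)^2].$$
Because the indicator is idempotent, $\mathbb{I}(X>0)^2 = \mathbb{I}(X>0)$, and hence $\mathbb{E}[\mathbb{I}(X>0)^2] = \mathbb{P}(X>0)$. Chaining these identities produces $\mathbb{E}[X]^2 \le \mathbb{E}[X^2]\,\mathbb{P}(X>0)$, and dividing through by $\mathbb{E}[X^2]$ yields the stated inequality.

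The argument is essentially free of obstacles; the only points needing care are the degenerate cases and the precise role of the hypotheses. The finite-variance assumption ensures $\mathbb{E}[X^2]<\infty$, which is what legitimizes the Cauchy--Schwarz step; if in addition $\mathbb{E}[X^2]=0$ then $X=0$ almost surely, the ratio is undefined, and this case is simply excluded (or the bound read as vacuous). Non-negativity of $X$ enters exactly once, in justifying the identity $X = X\,\mathbb{I}(X>0)$, so I would flag that as the single place where the sign hypothesis is actually used. Since this lemma is exactly the tool invoked to pass from the second-moment estimate of Theorem~\ref{thm:Borgs_main} to the lower bound in Lemma~\ref{Lemma:Upper_Lower_NPP}, I would state it for general non-negative $X$ (not only the integer-valued counting variable $Z_{n,z}$) so that it can be reused verbatim there.
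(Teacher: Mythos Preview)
Your argument is correct and is exactly the standard second-moment-method derivation via Cauchy--Schwarz. The paper itself does not supply a proof of this inequality: it is simply listed in Appendix~\ref{chap:ineq} as a known tool, so there is nothing substantive to compare against. Your remark that non-negativity is used precisely to justify $X = X\,\mathbb{I}(X>0)$, and that finite variance is what makes the Cauchy--Schwarz step valid, is accurate and worth keeping.
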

\begin{theorem}
Let $X_1,X_2,\dots,X_n$ be independent random variables such that $a_i\leq X_i\leq b_i$ almost surely. Consider the sum of these random variables,
\begin{equation*}
    S_n = X_1+X_2+\cdots+X_n.
\end{equation*}
Then Hoeffding's theorem states that, for all $t>0$,
\begin{equation*}
    \mathbb{P}(S_n-\mathbb{E}(s_n)\geq t) \leq \exp\left(-\frac{2t^2}{\sum_{i=1}^{n}(b_i-a_i)^2}\right)
\end{equation*}
    \label{ineq:hoeff}
\begin{equation*}
    \mathbb{P}(|S_n-\mathbb{E}(s_n)|\geq t) \leq 2\exp\left(-\frac{2t^2}{\sum_{i=1}^{n}(b_i-a_i)^2}\right).
\end{equation*}
\end{theorem}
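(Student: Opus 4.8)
The plan is to prove the one-sided tail bound by the classical \emph{Chernoff bounding method} combined with \emph{Hoeffding's lemma}, and then to recover the two-sided bound by symmetry. First I would center the summands: set $Y_i := X_i - \mathbb{E}[X_i]$, so that $\mathbb{E}[Y_i]=0$ and $Y_i$ lies in an interval of the \emph{same} width $b_i - a_i$. Writing $S_n - \mathbb{E}[S_n] = \sum_i Y_i$, I would then introduce a free parameter $s>0$ and apply Markov's inequality to the nonnegative random variable $\exp(s\sum_i Y_i)$:
\begin{equation*}
    \mathbb{P}\left(\sum_i Y_i \geq t\right) = \mathbb{P}\left(e^{s\sum_i Y_i} \geq e^{st}\right) \leq e^{-st}\,\mathbb{E}\!\left[e^{s\sum_i Y_i}\right] = e^{-st}\prod_{i=1}^n \mathbb{E}\!\left[e^{sY_i}\right],
\end{equation*}
where the final factorization uses the independence of the $X_i$, hence of the $Y_i$.

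The crux of the argument, and the step I expect to be the main obstacle, is bounding each factor $\mathbb{E}[e^{sY_i}]$, which is precisely \emph{Hoeffding's lemma}: for a centered random variable $Y$ supported on $[a,b]$,
\begin{equation*}
    \mathbb{E}\!\left[e^{sY}\right] \leq \exp\!\left(\frac{s^2(b-a)^2}{8}\right).
\end{equation*}
I would establish this by exploiting the convexity of $y\mapsto e^{sy}$: for $y\in[a,b]$ one has $e^{sy}\leq \frac{b-y}{b-a}e^{sa} + \frac{y-a}{b-a}e^{sb}$, and taking expectations while using $\mathbb{E}[Y]=0$ reduces the right-hand side to a function of $s$ alone. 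Setting $\psi(s):=\log\mathbb{E}[e^{sY}]$, I would verify $\psi(0)=\psi'(0)=0$ and, differentiating twice and recognizing $\psi''(s)$ as the variance of $Y$ under a distribution tilted to remain supported on $[a,b]$, obtain the uniform bound $\psi''(s)\leq (b-a)^2/4$; a second-order Taylor expansion then yields $\psi(s)\leq s^2(b-a)^2/8$, which is the claim. This analytic control of the cumulant generating function is where the bulk of the work lies.

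Combining the lemma with the Chernoff bound gives
\begin{equation*}
    \mathbb{P}\left(\sum_i Y_i \geq t\right) \leq \exp\!\left(-st + \frac{s^2}{8}\sum_{i=1}^n (b_i-a_i)^2\right).
\end{equation*}
Since this holds for every $s>0$, I would minimize the exponent over $s$, whose minimizer is $s^\star = 4t/\sum_i(b_i-a_i)^2$; substituting $s^\star$ produces the stated one-sided bound $\exp\!\left(-2t^2/\sum_i(b_i-a_i)^2\right)$. Finally, applying this same one-sided estimate to the variables $-X_i$, which lie in intervals of identical width, controls the lower tail $\mathbb{P}(\mathbb{E}[S_n]-S_n\geq t)$, and a union bound over the two tails contributes the factor of $2$ in the two-sided inequality.
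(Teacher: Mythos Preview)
Your proposal is correct and follows the standard textbook derivation of Hoeffding's inequality. Note, however, that the paper does not actually prove this statement: it is listed in the appendix of auxiliary inequalities purely as a cited tool, with no proof given, so there is no ``paper's own proof'' to compare against.
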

\begin{theorem}
    For any events $A_1,A_2,\dots,A_n$ we have
\begin{equation*}
    \mathbb{P}\left(\bigcap_{i=1}^nA_i\right)\geq \max\left(0,\;\sum_{i=1}^{n}\mathbb{P}(A_i)-(n-1)\right).
\end{equation*}
    \label{eq:unbd}
\end{theorem}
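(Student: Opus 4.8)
The plan is to prove this Bonferroni-type inequality by passing to complements and invoking the elementary subadditivity of probability (Boole's inequality), then combining the resulting bound with the trivial observation that probabilities are nonnegative. First I would set $A_i^c$ to be the complement of $A_i$ and apply De Morgan's law to rewrite the intersection as the complement of a union:
\begin{equation*}
    \mathbb{P}\left(\bigcap_{i=1}^n A_i\right) = 1 - \mathbb{P}\left(\bigcup_{i=1}^n A_i^c\right).
\end{equation*}

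Next I would bound the union of the complements from above using finite subadditivity of the probability measure, $\mathbb{P}\left(\bigcup_{i=1}^n A_i^c\right) \leq \sum_{i=1}^n \mathbb{P}(A_i^c)$. This subadditivity is itself provable by a short induction on $n$ (the base case $n=1$ is trivial, and the inductive step uses $\mathbb{P}(B \cup C) = \mathbb{P}(B) + \mathbb{P}(C) - \mathbb{P}(B \cap C) \leq \mathbb{P}(B) + \mathbb{P}(C)$). Substituting $\mathbb{P}(A_i^c) = 1 - \mathbb{P}(A_i)$ gives $\sum_{i=1}^n \mathbb{P}(A_i^c) = n - \sum_{i=1}^n \mathbb{P}(A_i)$, so that
\begin{equation*}
    \mathbb{P}\left(\bigcap_{i=1}^n A_i\right) \geq 1 - \left(n - \sum_{i=1}^n \mathbb{P}(A_i)\right) = \sum_{i=1}^n \mathbb{P}(A_i) - (n-1).
\end{equation*}

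Finally I would combine this with the trivial bound $\mathbb{P}\left(\bigcap_{i=1}^n A_i\right) \geq 0$, valid because any probability is nonnegative, to conclude that the intersection probability is at least the maximum of the two lower bounds, which is exactly the claimed $\max\bigl(0,\sum_{i=1}^n \mathbb{P}(A_i) - (n-1)\bigr)$. There is no genuine obstacle in this argument; the only point requiring care is establishing subadditivity rigorously, and since the statement is finite rather than countable, the clean induction above suffices without any measure-theoretic machinery. The $\max$ with $0$ is what makes the bound nontrivial precisely in the regime where $\sum_{i=1}^n \mathbb{P}(A_i) - (n-1)$ is positive, i.e.\ when each event holds with probability close to $1$, which is exactly the regime in which this inequality is applied throughout the paper's union-bound steps.
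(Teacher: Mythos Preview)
Your proof is correct and is the standard derivation of this Bonferroni-type bound via De Morgan's law and Boole's inequality. The paper itself does not supply a proof of this theorem---it is simply stated in the appendix of standard inequalities---so there is nothing to compare against; your argument fills the gap cleanly.
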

\newpage

    \section{Figures}
    \label{chap:figs}
    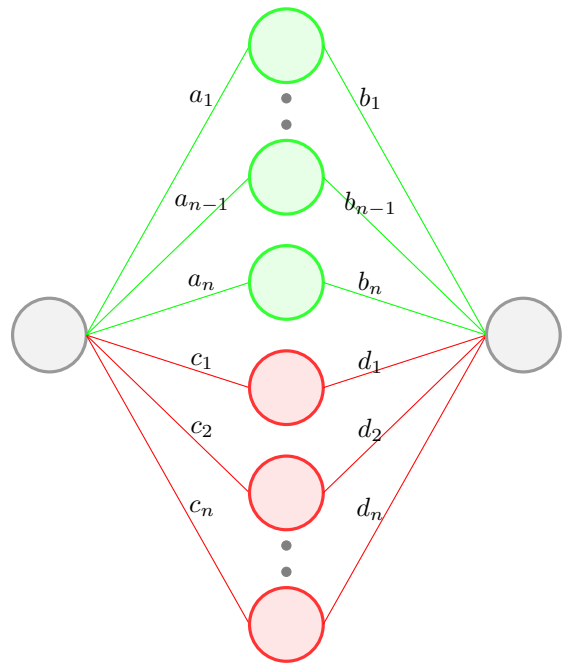
\begin{figure}[ht]
    \centering
    \begin{tikzpicture}[scale=0.7]
    \filldraw[color=gray!80, fill=gray!10, very thick](-4.5,3) circle (0.7);
    \draw[red] (-3.8,3) -- (-0.7,-2.5);
    \draw[red] (-3.8,3) -- (-0.7,0);
    \draw[red] (-3.8,3) -- (-0.7,2);
    \draw[green] (-3.8,3) -- (-0.7,4);
    \draw[green] (-3.8,3) -- (-0.7,6);
    \draw[green] (-3.8,3) -- (-0.7,8.5);
    \node at (-1.6,7.5) {$a_1$};
    \node at (-1.6,5.5) {$a_{n-1}$};
    \node at (-1.6,4) {$a_{n}$};
    \node at (-1.6,2.5) {$c_1$};
    \node at (-1.6,1.2) {$c_2$};
    \node at (-1.6,-0.3) {$c_n$};
    \filldraw[color=red!80, fill=red!10, very thick](0,-2.5) circle (0.7);
    \filldraw[color=red!80, fill=red!10, very thick](0,0) circle (0.7);
    \filldraw[color=red!80, fill=red!10, very thick](0,2) circle (0.7);
    \filldraw[color=green!80, fill=green!10, very thick](0,4) circle (0.7);
    \filldraw[color=green!80, fill=green!10, very thick](0,6) circle (0.7);
    \filldraw[color=green!80, fill=green!10, very thick](0,8.5) circle (0.7);
    \draw[red] (0.7,-2.5) -- (3.8,3);
    \draw[red] (0.7,0) -- (3.8,3);
    \draw[red] (0.7,2) -- (3.8,3);
    \draw[green] (0.7,4) -- (3.8,3);
    \draw[green] (0.7,6) -- (3.8,3);
    \draw[green] (0.7,8.5) -- (3.8,3);
    \node at (1.6,7.5) {$b_1$};
    \node at (1.6,5.5) {$b_{n-1}$};
    \node at (1.6,4) {$b_{n}$};
    \node at (1.6,2.5) {$d_1$};
    \node at (1.6,1.2) {$d_2$};
    \node at (1.6,-0.3) {$d_n$};
    \filldraw[color=gray!80, fill=gray!10, very thick](4.5,3) circle (0.7);
    \filldraw [gray] (0,-1.5) circle (2.5pt);
    \filldraw [gray] (0,-1) circle (2.5pt);
    \filldraw [gray] (0,7) circle (2.5pt);
    \filldraw [gray] (0,7.5) circle (2.5pt);
    \end{tikzpicture}
    \caption{Approximating a single weight with ReLU activation (\cite{pensia}): The network shown in the figure represents a single weight after pruning.}
    \label{fig:f1}
\end{figure}
\newpage
\begin{figure}[ht]
    \centering
    \begin{tikzpicture}
    \node at (0,0) {\begin{tikzpicture}[scale=0.6]
    \filldraw[color=gray!80, fill=gray!10, very thick](0,0) circle (0.7);
    \filldraw[color=gray!80, fill=gray!10, very thick](0,3) circle (0.7);
    \filldraw[color=gray!80, fill=gray!10, very thick](0,-3) circle (0.7);
    
    \draw[green] (0.7,-3) -- (7.3,0);
    \draw[red] (0.7,0) -- (7.3,0);
    \draw[blue] (0.7,3) -- (7.3,0);
    \filldraw[color=gray!80, fill=gray!10, very thick](8,0) circle (0.7);
    \node at (8,0) {$y$};
    \node at (0,0) {$x_2$};
    \node at (0,3) {$x_1$};
    \node at (0,-3) {$x_3$};
\end{tikzpicture}};
    \node at (8,0) {\begin{tikzpicture}[scale=0.6]
    \filldraw[color=gray!80, fill=gray!10, very thick](0,0) circle (0.7);
    \filldraw[color=gray!80, fill=gray!10, very thick](0,3) circle (0.7);
    \filldraw[color=gray!80, fill=gray!10, very thick](0,-3) circle (0.7);
    \filldraw[color=red!80, fill=red!10, very thick](4,-2) circle (0.7);
    \filldraw[color=red!80, fill=red!10, very thick](4,0) circle (0.7);
    \filldraw[color=red!80, fill=red!10, very thick](4,2) circle (0.7);
    \draw[red] (0.7,0) -- (3.3,-2);
    \draw[red] (0.7,0) -- (3.3,0);
    \draw[red] (0.7,0) -- (3.3,2);
    \draw[red] (4.7,-2) -- (7.3,0);
    \draw[red] (4.7,0) -- (7.3,0);
    \draw[red] (4.7,2) -- (7.3,0);

    \filldraw[color=green!80, fill=green!10, very thick](4,-4) circle (0.7);
    \filldraw[color=green!80, fill=green!10, very thick](4,-6) circle (0.7);
    \filldraw[color=green!80, fill=green!10, very thick](4,-8) circle (0.7);
    \draw[green] (0.7,-3) -- (3.3,-4);
    \draw[green] (0.7,-3) -- (3.3,-6);
    \draw[green] (0.7,-3) -- (3.3,-8);
    \draw[green] (4.7,-4) -- (7.3,0);
    \draw[green] (4.7,-6) -- (7.3,0);
    \draw[green] (4.7,-8) -- (7.3,0);

    \filldraw[color=blue!80, fill=blue!10, very thick](4,4) circle (0.7);
    \filldraw[color=blue!80, fill=blue!10, very thick](4,6) circle (0.7);
    \filldraw[color=blue!80, fill=blue!10, very thick](4,8) circle (0.7);
    \draw[blue] (0.7,3) -- (3.3,4);
    \draw[blue] (0.7,3) -- (3.3,6);
    \draw[blue] (0.7,3) -- (3.3,8);
    \draw[blue] (4.7,4) -- (7.3,0);
    \draw[blue] (4.7,6) -- (7.3,0);
    \draw[blue] (4.7,8) -- (7.3,0);

    \filldraw[color=gray!80, fill=gray!10, very thick](8,0) circle (0.7);
    \node at (8,0) {$y$};
    \node at (0,0) {$x_2$};
    \node at (0,3) {$x_1$};
    \node at (0,-3) {$x_3$};
    \draw[<->][black] (6.8,8.7) -- (6.8,3.2);
    \node at (8.2,6) {$2\log_2\left(\frac{1}{\delta}\right)$};
    \end{tikzpicture}};
\end{tikzpicture}
    \caption{Representing a single neuron (\cite{pensia}): The figure on the left shows the target network, where as Figure on the right shows the large network. The colors indicate which part in the target is represented by which part of the source. For example, the red weight on the left is represented by the red subnetwork on the right.}
    \label{fig:f2}
\end{figure}
\newpage
\begin{figure}[ht]
    \centering
    \begin{tikzpicture}
    \node at (0,0) {\begin{tikzpicture}[scale=0.6]
    \filldraw[color=gray!80, fill=gray!10, very thick](0,0) circle (0.7);
    \filldraw[color=gray!80, fill=gray!10, very thick](0,3) circle (0.7);
    \filldraw[color=gray!80, fill=gray!10, very thick](0,-3) circle (0.7);

    \draw[gray] (0.7,0) -- (7.3,0);
    \draw[gray] (0.7,0) -- (7.3,3);
    \draw[gray] (0.7,0) -- (7.3,-3);

    \draw[green] (0.7,3) -- (7.3,0);
    \draw[blue] (0.7,3) -- (7.3,3);
    \draw[red] (0.7,3) -- (7.3,-3);

    \draw[gray] (0.7,-3) -- (7.3,0);
    \draw[gray] (0.7,-3) -- (7.3,3);
    \draw[gray] (0.7,-3) -- (7.3,-3);
    
    \filldraw[color=gray!80, fill=gray!10, very thick](8,0) circle (0.7);
    \filldraw[color=gray!80, fill=gray!10, very thick](8,3) circle (0.7);
    \filldraw[color=gray!80, fill=gray!10, very thick](8,-3) circle (0.7);
    \node at (0,0) {$x_2$};
    \node at (0,3) {$x_1$};
    \node at (0,-3) {$x_3$};

    \node at (8,0) {$y_2$};
    \node at (8,3) {$y_3$};
    \node at (8,-3) {$y_1$};
    \end{tikzpicture}};
    \node at (8,0) {\begin{tikzpicture}[scale=0.6]
    \filldraw[color=gray!80, fill=gray!10, very thick](0,0) circle (0.7);
    \filldraw[color=gray!80, fill=gray!10, very thick](0,3) circle (0.7);
    \filldraw[color=gray!80, fill=gray!10, very thick](0,-3) circle (0.7);
    \filldraw[color=gray!80, fill=gray!10, very thick](4,-2) circle (0.7);
    \filldraw[color=gray!80, fill=gray!10, very thick](4,0) circle (0.7);
    \filldraw[color=gray!80, fill=gray!10, very thick](4,2) circle (0.7);
    \draw[gray] (0.7,0) -- (3.3,-2);
    \draw[gray] (0.7,0) -- (3.3,0);
    \draw[gray] (0.7,0) -- (3.3,2);
    \draw[gray] (4.7,-2) -- (7.3,0);
    \draw[gray] (4.7,0) -- (7.3,0);
    \draw[gray] (4.7,2) -- (7.3,0);

    \draw[gray] (4.7,-2) -- (7.3,-3);
    \draw[gray] (4.7,0) -- (7.3,-3);
    \draw[gray] (4.7,2) -- (7.3,-3);

    \draw[gray] (4.7,-2) -- (7.3,3);
    \draw[gray] (4.7,0) -- (7.3,3);
    \draw[gray] (4.7,2) -- (7.3,3);

    \filldraw[color=gray!80, fill=gray!10, very thick](4,-4) circle (0.7);
    \filldraw[color=gray!80, fill=gray!10, very thick](4,-6) circle (0.7);
    \filldraw[color=gray!80, fill=gray!10, very thick](4,-8) circle (0.7);
    \draw[gray] (0.7,-3) -- (3.3,-4);
    \draw[gray] (0.7,-3) -- (3.3,-6);
    \draw[gray] (0.7,-3) -- (3.3,-8);
    \draw[gray] (4.7,-4) -- (7.3,-3);
    \draw[gray] (4.7,-6) -- (7.3,-3);
    \draw[gray] (4.7,-8) -- (7.3,-3);

    \draw[gray] (4.7,-4) -- (7.3,0);
    \draw[gray] (4.7,-6) -- (7.3,0);
    \draw[gray] (4.7,-8) -- (7.3,0);

    \draw[gray] (4.7,-4) -- (7.3,3);
    \draw[gray] (4.7,-6) -- (7.3,3);
    \draw[gray] (4.7,-8) -- (7.3,3);

    \filldraw[color=gray!80, fill=gray!10, very thick](4,4) circle (0.7);
    \filldraw[color=gray!80, fill=gray!10, very thick](4,6) circle (0.7);
    \filldraw[color=gray!80, fill=gray!10, very thick](4,8) circle (0.7);
    \draw[gray] (0.7,3) -- (3.3,4);
    \draw[gray] (0.7,3) -- (3.3,6);
    \draw[gray] (0.7,3) -- (3.3,8);
    \draw[blue] (4.7,4) -- (7.3,3);
    \draw[blue] (4.7,6) -- (7.3,3);
    \draw[blue] (4.7,8) -- (7.3,3);

    \draw[green] (4.7,4) -- (7.3,0);
    \draw[green] (4.7,6) -- (7.3,0);
    \draw[green] (4.7,8) -- (7.3,0);

    \draw[red] (4.7,4) -- (7.3,-3);
    \draw[red] (4.7,6) -- (7.3,-3);
    \draw[red] (4.7,8) -- (7.3,-3);


    \filldraw[color=gray!80, fill=gray!10, very thick](8,0) circle (0.7);
    \filldraw[color=gray!80, fill=gray!10, very thick](8,3) circle (0.7);
    \filldraw[color=gray!80, fill=gray!10, very thick](8,-3) circle (0.7);
    \node at (0,0) {$x_2$};
    \node at (0,3) {$x_1$};
    \node at (0,-3) {$x_3$};

    \node at (8,0) {$y_2$};
    \node at (8,3) {$y_3$};
    \node at (8,-3) {$y_1$};
    \draw[<->][black] (2,8.7) -- (2,3.2);
    \node at (0.7,6) {$2\log_2\left(\frac{1}{\delta}\right)$};
\end{tikzpicture}};
\end{tikzpicture}
    \caption{Representing a layer (\cite{pensia}): Figure on the left shows the target network, where as Figure on the right shows the large network. The colors indicate which part in the target is represented by which part of the source. For example, the red weight on the left is represented by the red weights on the right.}
    \label{fig:f3}
\end{figure}
\newpage
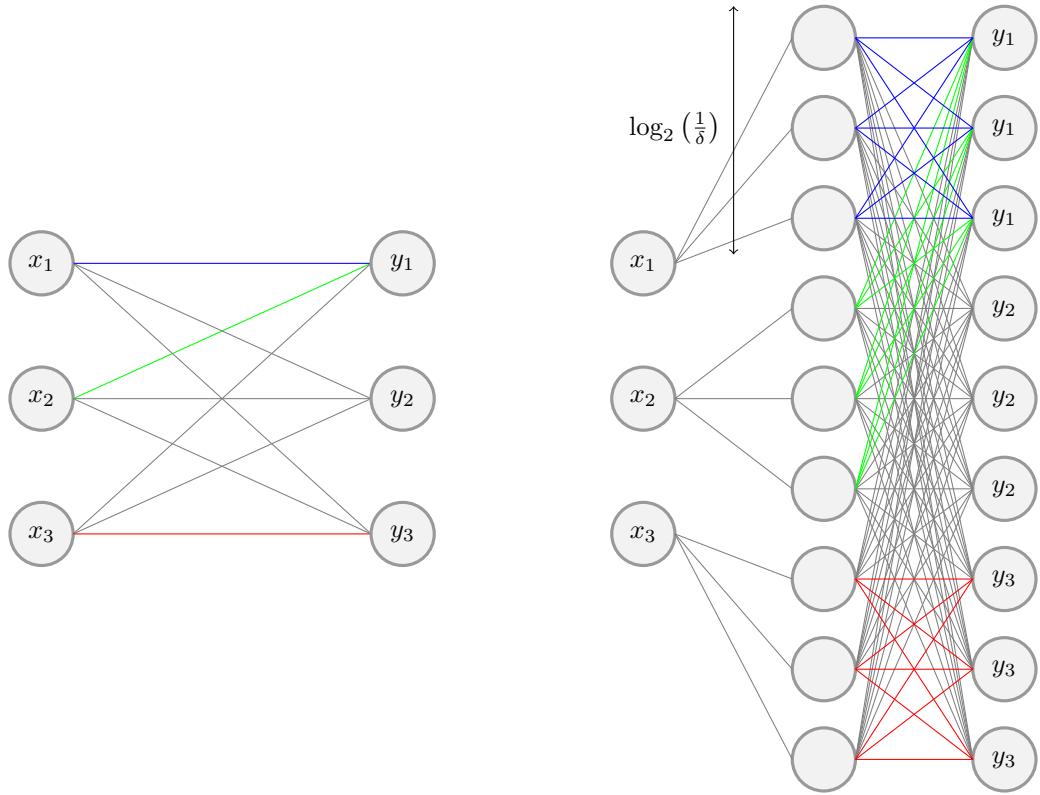
\begin{figure}[ht]
    \centering
    \begin{tikzpicture}
    \node at (0,0) {\begin{tikzpicture}[scale=0.6]
    \filldraw[color=gray!80, fill=gray!10, very thick](0,0) circle (0.7);
    \node at (0,0) {$x_2$};
    \filldraw[color=gray!80, fill=gray!10, very thick](0,3) circle (0.7);
    \node at (0,3) {$x_1$};
    \filldraw[color=gray!80, fill=gray!10, very thick](0,-3) circle (0.7);
    \node at (0,-3) {$x_3$};

    \draw[gray] (0.7,0) -- (7.3,0);
    \draw[green] (0.7,0) -- (7.3,3);
    \draw[gray] (0.7,0) -- (7.3,-3);

    \draw[gray] (0.7,3) -- (7.3,0);
    \draw[blue] (0.7,3) -- (7.3,3);
    \draw[gray] (0.7,3) -- (7.3,-3);

    \draw[gray] (0.7,-3) -- (7.3,0);
    \draw[gray] (0.7,-3) -- (7.3,3);
    \draw[red] (0.7,-3) -- (7.3,-3);
    
    \filldraw[color=gray!80, fill=gray!10, very thick](8,0) circle (0.7);
    \node at (8,0) {$y_2$};
    \filldraw[color=gray!80, fill=gray!10, very thick](8,3) circle (0.7);
    \node at (8,3) {$y_1$};
    \filldraw[color=gray!80, fill=gray!10, very thick](8,-3) circle (0.7);
    \node at (8,-3) {$y_3$};
    \end{tikzpicture}};
    \node at (8,0) {\begin{tikzpicture}[scale=0.6]
    \filldraw[color=gray!80, fill=gray!10, very thick](0,0) circle (0.7);
    \filldraw[color=gray!80, fill=gray!10, very thick](0,3) circle (0.7);
    \filldraw[color=gray!80, fill=gray!10, very thick](0,-3) circle (0.7);
    \filldraw[color=gray!80, fill=gray!10, very thick](4,-2) circle (0.7);
    \filldraw[color=gray!80, fill=gray!10, very thick](4,0) circle (0.7);
    \filldraw[color=gray!80, fill=gray!10, very thick](4,2) circle (0.7);
    \draw[gray] (0.7,0) -- (3.3,-2);
    \draw[gray] (0.7,0) -- (3.3,0);
    \draw[gray] (0.7,0) -- (3.3,2);

    \filldraw[color=gray!80, fill=gray!10, very thick](4,-4) circle (0.7);
    \filldraw[color=gray!80, fill=gray!10, very thick](4,-6) circle (0.7);
    \filldraw[color=gray!80, fill=gray!10, very thick](4,-8) circle (0.7);
    \draw[gray] (0.7,-3) -- (3.3,-4);
    \draw[gray] (0.7,-3) -- (3.3,-6);
    \draw[gray] (0.7,-3) -- (3.3,-8);

    \filldraw[color=gray!80, fill=gray!10, very thick](4,4) circle (0.7);
    \filldraw[color=gray!80, fill=gray!10, very thick](4,6) circle (0.7);
    \filldraw[color=gray!80, fill=gray!10, very thick](4,8) circle (0.7);
    \draw[gray] (0.7,3) -- (3.3,4);
    \draw[gray] (0.7,3) -- (3.3,6);
    \draw[gray] (0.7,3) -- (3.3,8);


    \filldraw[color=gray!80, fill=gray!10, very thick](8,-8) circle (0.7);
    \filldraw[color=gray!80, fill=gray!10, very thick](8,-6) circle (0.7);
    \filldraw[color=gray!80, fill=gray!10, very thick](8,-4) circle (0.7);
    \filldraw[color=gray!80, fill=gray!10, very thick](8,-2) circle (0.7);
    \filldraw[color=gray!80, fill=gray!10, very thick](8,0) circle (0.7);
    \filldraw[color=gray!80, fill=gray!10, very thick](8,2) circle (0.7);
    \filldraw[color=gray!80, fill=gray!10, very thick](8,4) circle (0.7);
    \filldraw[color=gray!80, fill=gray!10, very thick](8,6) circle (0.7);
    \filldraw[color=gray!80, fill=gray!10, very thick](8,8) circle (0.7);

    \draw[gray] (4.7,-8) -- (7.3,0);
    \draw[gray] (4.7,-8) -- (7.3,-2);
    \draw[gray] (4.7,-8) -- (7.3,2);
    \draw[gray] (4.7,-8) -- (7.3,4);
    \draw[gray] (4.7,-8) -- (7.3,6);
    \draw[gray] (4.7,-8) -- (7.3,8);

    \draw[gray] (4.7,0) -- (7.3,2);
    \draw[gray] (4.7,0) -- (7.3,0);
    \draw[gray] (4.7,0) -- (7.3,-2);
    \draw[gray] (4.7,0) -- (7.3,-4);
    \draw[gray] (4.7,0) -- (7.3,-6);
    \draw[gray] (4.7,0) -- (7.3,-8);

    \draw[gray] (4.7,2) -- (7.3,2);
    \draw[gray] (4.7,2) -- (7.3,0);
    \draw[gray] (4.7,2) -- (7.3,-2);
    \draw[gray] (4.7,2) -- (7.3,-4);
    \draw[gray] (4.7,2) -- (7.3,-6);
    \draw[gray] (4.7,2) -- (7.3,-8);

    \draw[gray] (4.7,4) -- (7.3,2);
    \draw[gray] (4.7,4) -- (7.3,0);
    \draw[gray] (4.7,4) -- (7.3,-2);
    \draw[gray] (4.7,4) -- (7.3,-4);
    \draw[gray] (4.7,4) -- (7.3,-6);
    \draw[gray] (4.7,4) -- (7.3,-8);

    \draw[gray] (4.7,6) -- (7.3,2);
    \draw[gray] (4.7,6) -- (7.3,0);
    \draw[gray] (4.7,6) -- (7.3,-2);
    \draw[gray] (4.7,6) -- (7.3,-4);
    \draw[gray] (4.7,6) -- (7.3,-6);
    \draw[gray] (4.7,6) -- (7.3,-8);

    \draw[gray] (4.7,8) -- (7.3,2);
    \draw[gray] (4.7,8) -- (7.3,0);
    \draw[gray] (4.7,8) -- (7.3,-2);
    \draw[gray] (4.7,8) -- (7.3,-4);
    \draw[gray] (4.7,8) -- (7.3,-6);
    \draw[gray] (4.7,8) -- (7.3,-8);
    
    \draw[gray] (4.7,-6) -- (7.3,0);
    \draw[gray] (4.7,-6) -- (7.3,-2);
    \draw[gray] (4.7,-6) -- (7.3,2);
    \draw[gray] (4.7,-6) -- (7.3,4);
    \draw[gray] (4.7,-6) -- (7.3,6);
    \draw[gray] (4.7,-6) -- (7.3,8);

    \draw[gray] (4.7,-4) -- (7.3,0);
    \draw[gray] (4.7,-4) -- (7.3,-2);
    \draw[gray] (4.7,-4) -- (7.3,2);
    \draw[gray] (4.7,-4) -- (7.3,4);
    \draw[gray] (4.7,-4) -- (7.3,6);
    \draw[gray] (4.7,-4) -- (7.3,8);

    \draw[gray] (4.7,-2) -- (7.3,0);
    \draw[gray] (4.7,-2) -- (7.3,-2);
    \draw[gray] (4.7,-2) -- (7.3,2);
    \draw[gray] (4.7,-2) -- (7.3,-4);
    \draw[gray] (4.7,-2) -- (7.3,-6);
    \draw[gray] (4.7,-2) -- (7.3,-8);

    \draw[red] (4.7,-8) -- (7.3,-8);
    \draw[red] (4.7,-8) -- (7.3,-6);
    \draw[red] (4.7,-8) -- (7.3,-4);
    
    \draw[red] (4.7,-6) -- (7.3,-8);
    \draw[red] (4.7,-6) -- (7.3,-6);
    \draw[red] (4.7,-6) -- (7.3,-4);

    \draw[red] (4.7,-4) -- (7.3,-8);
    \draw[red] (4.7,-4) -- (7.3,-6);
    \draw[red] (4.7,-4) -- (7.3,-4);

    \draw[green] (4.7,-2) -- (7.3,4);
    \draw[green] (4.7,-2) -- (7.3,6);
    \draw[green] (4.7,-2) -- (7.3,8);

    \draw[green] (4.7,2) -- (7.3,4);
    \draw[green] (4.7,2) -- (7.3,6);
    \draw[green] (4.7,2) -- (7.3,8);

    \draw[green] (4.7,0) -- (7.3,4);
    \draw[green] (4.7,0) -- (7.3,6);
    \draw[green] (4.7,0) -- (7.3,8);

    \draw[blue] (4.7,8) -- (7.3,4);
    \draw[blue] (4.7,8) -- (7.3,6);
    \draw[blue] (4.7,8) -- (7.3,8);

    \draw[blue] (4.7,6) -- (7.3,4);
    \draw[blue] (4.7,6) -- (7.3,6);
    \draw[blue] (4.7,6) -- (7.3,8);

    \draw[blue] (4.7,4) -- (7.3,4);
    \draw[blue] (4.7,4) -- (7.3,6);
    \draw[blue] (4.7,4) -- (7.3,8);

    \node at (0,0) {$x_2$};
    \node at (0,3) {$x_1$};
    \node at (0,-3) {$x_3$};

    \node at (8,0) {$y_2$};
    \node at (8,2) {$y_2$};
    \node at (8,-2) {$y_2$};

    \node at (8,4) {$y_1$};
    \node at (8,6) {$y_1$};
    \node at (8,8) {$y_1$};
    \node at (8,-4) {$y_3$};
    \node at (8,-6) {$y_3$};
    \node at (8,-8) {$y_3$};
    \draw[<->][black] (2,8.7) -- (2,3.2);
    \node at (0.7,6) {$\log_2\left(\frac{1}{\delta}\right)$};
\end{tikzpicture}};
\end{tikzpicture}
    \caption{The Figure shows representation of first two layers of a network in Theorem \ref{thm:main_2}: (\cite{Burkholz}). The figure on the left shows the target network, where as Figure on the right shows the large network. The colors indicate which part in the target is represented by which part of the source. For example, the red weight on the left is represented by the red weights on the right.}
    \label{fig:f4}
\end{figure}
\newpage


\end{document}